\definecolor{darkblue}{RGB}{0, 0, 210}
\newcommand{\rp}[1]{{\color{darkblue}#1}}
\title{A Shared Low-Rank Adaptation Approach to Personalized RLHF}
\author{
Renpu Liu\,,~~Peng Wang\,,~~Donghao Li\,,~~Cong Shen\,,~~Jing Yang\thanks{Corresponding author.}\\ 
University of Virginia\\ 
\texttt{\{pzw7bx, pw7nc, maj3qx, cong, yangjing\}@virginia.edu}}
\begin{document}
\maketitle

\begin{abstract}

Reinforcement Learning from Human Feedback (RLHF) has emerged as a pivotal technique for aligning artificial intelligence systems with human values, achieving remarkable success in fine-tuning large language models. However, existing RLHF frameworks often assume that human preferences are relatively homogeneous and can be captured by a single, unified reward model. This assumption overlooks the inherent diversity and heterogeneity across individuals, limiting the adaptability of RLHF to personalized scenarios and risking misalignments that can diminish user satisfaction and trust in AI systems. In this paper, we address these challenges by introducing Low-Rank Adaptation (LoRA) into the personalized RLHF framework. We apply LoRA in the the aggregated parameter space of all personalized reward functions, thereby enabling efficient learning of personalized reward models from potentially limited local datasets. Our approach exploits potential shared structures among the local ground-truth reward models while allowing for individual adaptation, without relying on restrictive assumptions about shared representations as in prior works. We further establish sample complexity guarantees for our method. Theoretical analysis demonstrates the effectiveness of the proposed approach in capturing both shared and individual-specific structures within heterogeneous human preferences, addressing the dual challenge of personalization requirements and practical data constraints. Experimental results on real-world datasets corroborate the efficiency of our algorithm in the personalized RLHF setting.

\end{abstract}

\section{Introduction}

The rapid development and widespread use of Large Language Models (LLMs) have transformed fields like natural language processing, content generation, and human-computer interaction. Models such as GPT-4~\citep{achiam2023gpt}, BERT~\citep{devlin2019bert}, and their successors have exhibited remarkable capabilities in understanding and generating text, enabling applications ranging from automated customer service to advanced creative tools. This ``boom'' of LLMs has not only broadened AI’s potential but also underscored the critical need to ensure that these models align with human values and preferences.

To help this alignment, Reinforcement Learning from Human Feedback (RLHF)~\citep{ouyang2022training,christiano2023deepreinforcementlearninghuman} plays a key role as a fine-tuning method of LLMs. This method ensures that the generated responses are contextually appropriate and aligned with ethical and social norms~\citep{ouyang2022training}. By incorporating human feedback into the fine-tuning process, RLHF bridges the gap between the raw generative power of LLMs and the requirements of real-world applications, improving the quality and safety of AI-generated content.

Current RLHF frameworks, such as \cite{bai2022training,wang2024rlhf}, essentially assume that human preferences are relatively homogeneous and can be effectively captured by a single, unified reward model. This simplification overlooks the inherent diversity and heterogeneity in human preferences, which can vary significantly across individuals. Such an oversimplification limits the adaptability of RLHF to personalized scenarios and risks, introducing misalignments that could diminish user satisfaction and trust in AI systems. A straightforward approach to handling heterogeneous human preferences is learning personalized reward functions for each labeler using traditional RLHF methods, such as \cite{ouyang2022training}. However, this method faces a significant challenge: preference data from individual users may be insufficient to construct accurate reward models for each human labeler. 
Recently, several studies have proposed empirical methods to address this challenge. For example, \citet{li2024personalized} introduced a personalized direct preference optimization method within the personalized RLHF framework. Similarly, \citet{poddar2024personalizing} presented a class of multi-modal RLHF methods that infer user-specific latent variables and then learn personalized reward models conditioned on them. In addition to empirical approaches, some works have provided methods with theoretical guarantees. Specifically, \citet{zhong2024provable} conducted a theoretical analysis assuming that human reward functions are linear with shared representations. Extending this line of work, \citet{park2024rlhf} considered a more general setting where the representation function is a general (nonlinear) function of the feature mapping.

On the other hand, since first introduced by \citet{hu2021lora}, Low-Rank Adaptation (LoRA) has quickly become a prominent method for fine-tuning LLMs to reduce the number of trainable parameters and prevent overfitting~\citep{houlsby2019parameter, huang2023lorahub}. Some recent works have proposed to combine RLHF with LoRA to enhance the fine-tuning of LLMs using human feedback. For instance, researchers have explored integrating LoRA into the RLHF framework to efficiently incorporate human preferences while maintaining model performance \citep{santacroce2023efficient, sun2023exploring, sidahmed2024perl}. However, these approaches primarily focus on general adaptation and do not address the challenges of heterogeneous feedback from diverse users.

In this paper, we address the challenges of personalized RLHF by introducing personalized LoRA with a shared component into the personalized RLHF framework. By leveraging LoRA, we effectively learn individual reward models that capture human users' heterogeneous preferences with limited data. To the best of our knowledge, LoRA has not been previously explored in the context of personalized RLHF, making our approach a novel contribution to the field. Our major contributions are summarized as follows: 

\begin{itemize}[leftmargin=*]\itemsep=0pt 
    \item We propose an algorithm named {Personalized LoRA with Shared Component} (\texttt{P-ShareLoRA}) for RLHF, which leverages the shared components of LoRA modules to learn the personalized reward functions efficiently. Rigorous theoretical analysis demonstrates that \texttt{P-ShareLoRA} can effectively reduce sample complexity, compared with both the full-parameter fine-tuning method and the standard LoRA method without parameter sharing. To the best of our knowledge, this is the first work that 
    theoretically demonstrates the benefits of LoRA with shared components in RLHF.

 \item 
 Unlike existing analytical frameworks for personalized RLHF which typically enforce strict constraints on the reward model structures, such as linear representations~\citep{zhong2024provable} or shared representations with linear heads~\citep{park2024rlhf}, we develop novel technical approaches to address the challenges from the unstructured reward functions. Specifically, we propose a new Lagrange remainder-based method that allows us to prove that LoRA modules with shared components can approximate the optimal low-rank structure of the ground truth parameter matrix. Building on this, we further prove an upper bound on the distance between the optimal reward function and the learned reward function with shared parameters. The theoretical results demonstrate that the expected return under the policies derived with the learned reward functions are near-optimal (up to a bias term related to the preference diversity among users).

    \item Experiments on the Reddit TL;DR dataset~\citep{stiennon2020learning} validate the effectiveness of the proposed approach. 
    Specifically, our approach achieves a prediction accuracy of 74.65\% on Llama-3 8B and 66.93\% on GPT-J 6B, which outperforms the SOTA algorithms that achieve 73.25\% on Llama-3 8B and 66.13\% on GPT-J 6B, respectively. 
    Those empirical results corroborate our theory, demonstrating the advantage of LoRA with shared components for personalized RLHF. 
\end{itemize}

\section{Related Works}

\paragraph{Reinforcement Learning from Human Feedback.}

Reinforcement Learning from Human Feedback (RLHF) has demonstrated considerable success across various practical applications, especially in aligning AI models with human values and preferences. One of the most prominent applications of RLHF is in fine-tuning large language models, as exemplified by OpenAI’s ChatGPT~\citep{ouyang2022training} and GPT-4~\citep{achiam2023gpt}. Additionally, RLHF has been explored in computer vision tasks~\citep{lee2023aligning,xu2024imagereward}. Furthermore, RLHF has been widely adopted in domains that involve high-risk decision-making, such as healthcare~\citep{yu2021reinforcement}, robotics~\citep{abramson2022improving, hwang2023promptable,thumm2024text2interaction}, and autonomous driving~\citep{wu2023toward,chen2023feedback}, where alignment with human preferences is critical for ensuring safety and addressing ethical considerations.

From a theoretical standpoint, studies of RLHF have garnered increasing research interest. \citet{zhu2023principled} examine the Bradley-Terry-Luce model~\citep{bradley1952rank} within the context of a linear reward framework, while \citet{zhan2023provable} extend these results to more general classes of reward functions. Similarly, \citet{li2023reinforcement} introduce a pessimistic algorithm that is provably efficient for dynamic discrete choice models. All these works focus on settings with offline preference data. In the online setting, \citet{xu2020preference} and \citet{pacchiano2021dueling} study tabular online RLHF. \citet{wang2024rlhf} theoretically demonstrate that preference-based RL can be directly addressed using existing reward-based RL algorithms by utilizing a preference-to-reward model. \citet{xiong2024iterative} present a provable iterative Direct Preference Optimization (DPO) algorithm for online settings. \citet{ye2024online} provide a theoretical analysis of RLHF under a general preference oracle, proposing sample-efficient algorithms for both offline and online settings.

Some recent studies have extended RLHF to personalized alignment for diverse user groups and individuals. \citet{zhao2023group} introduce Group Preference Optimization (GPO), which addresses group-level heterogeneity through a mixture of shared and personalized architectures. Additionally, \citet{ramesh2024group} propose Group Robust Preference Optimization (GRPO), a reward-free RLHF framework that handles heterogeneous preferences by optimizing for worst-case group outcomes. Beyond group-level alignment, other works focus on individual personalization. For instance, \citet{li2024personalized} develop a Personalized RLHF method that jointly learns a lightweight user model alongside the policy model to capture each user’s unique preferences, leading to responses more closely aligned with individual tastes than non-personalized RLHF. Besides, \citet{poddar2024personalizing} introduce a variational latent preference framework that infers a user-specific latent variable on which both the learned reward model and the policy rely.

In addition to the empirical studies, recent works have also established theoretical guarantees for personalized RLHF. \citet{siththaranjan2023distributional} show that traditional RLHF models that implicitly aggregate preferences can lead to undesirable outcomes. They introduce Distributional Preference Learning (DPL) to mitigate this issue.
\citet{chakraborty2024maxmin} group individual reward models into distinct subsets and propose a MaxMin alignment objective inspired by Egalitarian principles. 
\citet{zhong2024provable} investigate a setting where local optimal reward functions share a linear representation combined with personalized linear heads, theoretically demonstrating that aggregating multiple preferences across different parties can overcome the shortcomings of traditional RLHF that only learn a single reward function.
Building on this, \citet{park2024rlhf} generalize the reward function model of \citet{zhong2024provable} by introducing a general representation function combined with personalized linear heads. 

\paragraph{Low-Rank Adaptation (LoRA).} The rapid scaling of pre-trained language models has led to significant challenges in fine-tuning these models for downstream tasks due to the substantial computational and storage requirements. To address this, Low-Rank Adaptation (LoRA) has been proposed as an efficient fine-tuning approach~\citep{hu2021lora}. The vanilla LoRA keeps the original model weights frozen and injects trainable low-rank matrices into each layer of the Transformer architecture. This strategy dramatically reduces the number of trainable parameters and computational overhead, making it feasible to adapt large models on limited hardware resources~\citep{valipour2022dylora,zhang2023adalora,kopiczko2023vera,dettmers2024qlora,hayou2024lora+,yang2024dora}.

Recently, several studies have focused on implementing LoRA in multi-task settings. \citet{huang2023lorahub} introduce LoraHub, which enables the composition and sharing of LoRA modules trained on diverse tasks. \citet{luo2024moelora} consider LoRA as a Mixture of Experts (MoE), treating these small adaptation modules as experts focusing on unique aspects. 
\citet{shen2024multimodal} introduce MixLoRA, treats LoRA modules as experts and uses a dynamic factor selection method to select modules for combination.
\citet{tang2023parameter} propose partial linearization, where they linearize only the adapter modules—the parts adjusted during fine-tuning—and apply ``task arithmetic'' to combine these linearized adapters from different tasks. In the federated learning setting, \citet{wang2024flora} introduces a stacking-based aggregation technique for LoRA adapters, enabling efficient fine-tuning across clients. 

To effectively learn LoRA modules in a multi-task setting, some recent studies consider sharing partial parameters among different tasks or clients. \citet{sun2024improving} introduce FFA-LoRA, which keeps one of the LoRA modules fixed while updating only the other during local training. Similarly, \citet{kuo2024federated} propose a method in which certain parameters within the locally downloaded LoRA modules remain unchanged, while the rest are updated. HydraLoRA~\citep{tian2024hydralora} extends this idea by incorporating LoRA modules with a shared low-rank matrix in a Mixture-of-Experts (MoE) framework. Additionally, FedSA-LoRA~\citep{guo2024selective} observes that in a federated learning setup, one transformation matrix primarily captures generalizable knowledge, while the other learns client-specific adaptations. Building on this insight, they employ a hybrid approach that combines shared global components with personalized local updates. {\it To the best of our knowledge, the theoretical implications of using shared LoRA parameters in RLHF remain unexplored.} 
\section{Problem Formulation}

\paragraph{Notation.}

Bold uppercase letters (e.g., \(\mathbf{X}\)) denote matrices. The function \(\diag(x_1, \dots, x_d)\) represents a \(d \times d\) diagonal matrix with diagonal entries \(x_1, \dots, x_d\). The inner product of vectors \(x\) and \(y\) is denoted by \(\langle x, y \rangle\), and the Euclidean norm of a vector \(x\) is represented by \(\|x\|_2\). For a matrix \(\mathbf{X}\), the operator (spectral) norm is denoted by \(\|\mathbf{X}\|_2\), and its Frobenius norm by \(\|\mathbf{X}\|_F\). The \(k\)-th largest singular value of \(\mathbf{X}\) is denoted by \(\sigma_k(\mathbf{X})\).
For a matrix \(\mathbf{X} \in \mathbb{R}^{d_1 \times d_2}\), we use \(\mtov(\mathbf{X}) \in \mathbb{R}^{d_1 d_2}\) to denote the vector obtained by column-wise vectorizing \(\mathbf{X}\), i.e., \(\mtov(\mathbf{X})^\top = [x_1^\top, \dots, x_{d_2}^\top]\), where \(x_i\) is the \(i\)-th column of \(\mathbf{X}\). The identity matrix of size \(d \times d\) is denoted by \(\mathbf{I}_d\).

\paragraph{Markov Decision Processes.}
We consider the tabular finite-horizon Markov Decision Process (MDP) to model the Reinforcement Learning from Human Feedback (RLHF) setting with $N$ human labelers (or users), each with their own reward function. A MDP \(\Mc\) is represented by the tuple \(\Mc = (\mathcal{S}, \mathcal{A}, H, (P_h)_{h \in [H]}, \mathbf{r} = (r_{i})_{i \in [N]})\), where \(\mathcal{S}\) is the set of states, defined as all possible prompts or questions; \(\mathcal{A}\) is the set of actions, representing potential answers or responses to these questions; \(H\) denotes the length of the horizon; \(P_h: \mathcal{S} \times \mathcal{A} \to \Delta(\mathcal{S})\) is the state transition probability at step \(h \in [H]\), with \(\Delta(\mathcal{S})\) being the set of probability distributions over \(\mathcal{S}\); and \(r_i:\Tc\mapsto [-R,R]\) is the reward function for each individual \(i\in[N]\), where \(\mathcal{T} := (\mathcal{S} \times \mathcal{A})^H\) denotes the set of all possible trajectories \(\tau = (s_1, a_1, s_2, a_2, \dots, s_H, a_H)\).
The MDP concludes at an absorbing termination state with zero reward after \(H\) steps. A policy is defined as a sequence \(\pi = (\pi_h)_{h=1}^H\), where each \(\pi_h: (\mathcal{S} \times \mathcal{A})^{h-1} \times \mathcal{S} \to \Delta(\mathcal{A})\) maps the history and current state to a distribution over actions at step \(h\). The expected cumulative reward of a policy \(\pi\) for individual \(i\) is given by \( J(\pi; r_i) := \mathop{\mathbb{E}}_{\tau \sim \pi}[r_i(\tau)] \).

\paragraph{Relationship between Preference and Reward Functions.}
Given two trajectories \(\tau_0\) and \(\tau_1\), we introduce a random variable \(o \in \{0,1\}\) to represent the preference outcome: We set \(o = 1\) if \(\tau_0 \succ \tau_1\) (i.e., \(\tau_0\) is preferred over \(\tau_1\)), and \(o = 0\) if \(\tau_0 \prec \tau_1\) (i.e., \(\tau_1\) is preferred over \(\tau_0\)). We model the probability that individual \(i \in [N]\) prefers \(\tau_0\) over \(\tau_1\) as
\(P_{r_i}(o = 1 \mid \tau_0, \tau_1) = \Phi\big( r_i(\tau_0) - r_i(\tau_1) \big)\), where \(\Phi: \mathbb{R} \to [0,1]\) is a monotonically increasing function satisfying \(\Phi(x) + \Phi(-x) = 1\) and \(\log \Phi(x)\) is a Lipschitz continuous and strongly convex function. A common choice for \(\Phi\) is the sigmoid function \(\sigma(x) = 1/(1 + e^{-x})\), which maps real-valued inputs to the range \([0,1]\). This function corresponds to the Bradley-Terry-Luce (BTL) model, which is commonly used to model the relationship between preferences and rewards.
We define the \emph{preference probability vector} induced by the reward functions \(\bm{r}\) as \(P_{\bm{r}}(o \mid \tau_0, \tau_1) = \left( P_{r_1}(o \mid \tau_0, \tau_1), \dots, P_{r_N}(o \mid \tau_0, \tau_1) \right)^\top\),
where \(P_{\bm{r}}\) represents the collective preference probabilities across all individuals, and \(P_{r_i}\) denotes the preference probability induced by the reward function \(r_i\) for individual \(i\).

\paragraph{Personalized Reward Functions.}
We consider the naturally diverse individual human preferences and aim to learn personalized reward models for each individual. As a first step, we assume each reward function $r_i$ is parameterized by \(\boldsymbol{\Theta}_i \in \mathbb{R}^{d_1 \times d_2}\), and we denote it as $r_{\boldsymbol{\Theta}_i}:\mathcal{T} \to \mathbb{R}$. 
We denote the aggregated reward vector as 
\(\mathbf{r}_{\boldsymbol{\Theta}} := \big( r_{\boldsymbol{\Theta}_1}, \dots, r_{\boldsymbol{\Theta}_N} \big)^\intercal\),
where \(\boldsymbol{\Theta} \in \mathbb{R}^{d_1 \times N d_2}\) 
is the aggregated parameter matrix defined by \(\boldsymbol{\Theta} = \big[ \boldsymbol{\Theta}_1,\ \dots,\ \boldsymbol{\Theta}_N \big].\)

 Let \(\theta\) denote the column-wise vectorization of \(\Thetav\), i.e., \(\theta = \text{vec}(\Thetav)\). Then, we make the following assumption.
\begin{assumption}\label{assumption:reward}
    For any trajectory \(\tau\), the reward function \( r_{\Thetav}(\tau) \) satisfies Lipschitz continuity \(\|\nabla_{\theta} r_{\Thetav}(\tau)\| \leq L_1\) and Lipschitz smoothness \(\|\nabla^2_{\theta} r_{\Thetav}(\tau)\| \leq L_2\) for \(L_1,L_2>0\).
\end{assumption}
Note that the gradient operator \(\nabla\) and the Laplacian \(\nabla^2\) are applied to the vectorized parameter matrix \(\theta\). \Cref{assumption:reward} is a standard assumption similar to those in related RLHF studies, such as \citet{zhu2023principled}.

Define the set of valid parameters for the reward function as
\begin{align}
   \Sc:= \left\{ \Thetav \,\Big|\, \boldsymbol{\Theta}_i \in \mathbb{R}^{d_1 \times d_2},\ \|\boldsymbol{\Theta}_i\|_F \leq B,\ \forall i\in[N]\right\},\label{def:para-set}
\end{align}
and the corresponding class of reward functions as
\begin{align}
\mathcal{G}_{\mathbf{r}}(\Sc) = \left\{ \big(r_{\boldsymbol{\Theta}_i}(\cdot)\big)_{i \in [N]} \,\Big|\, \boldsymbol{\Theta}\in  \mathcal{S}\right\}.\label{def:func-class}
\end{align}
The boundedness condition $\|\boldsymbol{\Theta}_i\|_F \leq B$ ($B$ is a positive constant) in \Cref{def:para-set}, together with \Cref{assumption:reward}, ensures that the reward function is bounded, which is a standard assumption adopted in related works~\citep{zhan2023provable,zhong2024provable}.

Throughout this paper, we let 
\(\mathbf{r}^\star = \big( r_1^\star, \dots, r_N^\star \big)\)
denote the underlying true human reward functions with corresponding ground truth parameters \(\Thetav^*=[\Thetav_1^*,\cdots, \Thetav_N^\star]\). We assume that \(\mathbf{r}^\star \in \mathcal{G}_{\mathbf{r}}(\Sc)\) to ensure that the true reward functions are within the considered function class.

\paragraph{Learning Personalized Reward Functions via LoRA.}

Motivated by LoRA that is widely adopted for the fine-tuning of LLMs~\citep{sidahmed2024perl}, we assume the system starts from initialized reward model parameters \(\Thetav^{\text{init}} = [\Thetav_1^{\text{init}}, \cdots, \Thetav_N^{\text{init}}]\). Denote the low-rank adaptation matrice for the reward models as \(\Delta\Thetav = [\Delta\Thetav_1, \cdots, \Delta\Thetav_N]\). Then, after the adaptation, the set of valid parameters for the personalized reward model becomes
\begin{align}
  \Sc^{\text{LoRA}}    = &\Big\{  \Thetav \,\Big|\, \Thetav = \Thetav^{\text{init}} + \Delta\Thetav,\operatorname{rank}(\Delta\Thetav_i) \leq k,\nonumber \quad \|\Delta\Thetav_i\|_F \leq B , \forall i \in [N] \Big\}.
\end{align}

Note that the LoRA module is typically represented in a low-rank factorization form, i.e., as the product of two lower-dimensional matrices: \(\Delta \Thetav_i = \Bv_i \Wv_i\), where \(\Bv_i \in \mathbb{R}^{d_1 \times k}\) and \(\Wv_i \in \mathbb{R}^{k \times d_2}\). In the function class \(\mathcal{G}_{\mathbf{r} \mid \boldsymbol{\Theta}^{\text{init}}}\), the individual LoRA modules \(\Delta \Thetav_i\) are independent. To leverage potential common structures among the individual LoRA modules, as observed in recent works~\citep{zhu2024asymmetry,guo2024selective,tian2024hydralora}, we assume that the \(\Bv_i\) matrices are shared across all users, i.e., \(\Bv_i = \Bv\) for all \(i\). Under this constraint, the aggregated matrix \(\Delta \Thetav\) can be expressed as \(\Delta \Thetav = \Bv [\Wv_1, \cdots, \Wv_N]\), which implies that \(\Delta \Thetav\) becomes a low-rank matrix with \(\operatorname{rank}(\Delta \Thetav) \leq k\), since \(\operatorname{rank}(\Bv) \leq k\). Consequently, when \(\Bv\) is shared across all LoRA modules, the parameter set is equivalent to:
\begin{align}
  \mathcal{S}^{\text{ShareLoRA}}    = &\Big\{  \Thetav \,\Big|\, \Thetav = \Thetav^{\text{init}} + \Delta\Thetav,\operatorname{rank}(\Delta\Thetav) \leq k,\nonumber \quad \|\Delta\Thetav_i\|_F \leq B, \forall i \in [N] \Big\}.
\end{align}

To leverage the potential common structure among individual LoRA modules, we utilize the parameter set \(\Sc^{\text{ShareLoRA}}\), which allows us to learn LoRA modules with shared parameters across users effectively. This low-rank constraint leverages shared structures among users' preferences, allowing the model to capture common patterns while adapting to individual differences. The aggregated low-rank adaptation \(\Delta\Thetav\) results in local low-rank adaptations \(\{\Delta\Thetav_i\}\), which incorporate a shared matrix \(\Bv\) and distinct individual adaptation matrices \(\Wv_i\), i.e., \(\Delta\Thetav_i = \Bv\Wv_i\). Intuitively, the shared matrix \(\Bv\) preserves common directions for parameter updating, while \(\Wv_i\) captures individual adaptation along those dimensions.

Given a collection of preference datasets for individual users, denoted as \(\hat{\mathcal{D}}_i = \{ (o_i^{(j)}, \tau_{i,0}^{(j)}, \tau_{i,1}^{(j)}) \}_{j=1}^{N_p}\), our objective is to estimate the ground-truth reward function \(\mathbf{r}^\star\) by combining the learned shared-parameter LoRA matrices within $\mathcal{S}^{\text{ShareLoRA}}$. 
We define the aggregated dataset as \(\hat{\mathcal{D}} = \bigcup_{i=1}^N \hat{\mathcal{D}}_i\), with \(|\hat{\mathcal{D}}_i| = N_p\) for all \(i \in [N]\). Our analysis can be extended to scenarios where the dataset sizes vary across individuals, i.e., \(|\hat{\mathcal{D}}_i| = N_{p,i}\) for each \(i\).
The optimization problem is then formulated as follows:
\begin{equation}
\max_{\boldsymbol{\Theta}\in \mathcal{S}^{\text{ShareLoRA}} }  F\left(\boldsymbol{\Theta}; \hat{\mathcal{D}}\right) = \sum_{i=1}^N \sum_{j=1}^{N_p} \log P_{\boldsymbol{\Theta}_i} \left(o_i^{(j)} \mid \tau_{i,0}^{(j)}, \tau_{i,1}^{(j)}\right), \label{def:obj}
\end{equation}
where we use \(P_{\Thetav}\) denote \(P_{r_{\Thetav}}\) to simplify the notation.

\begin{algorithm}[ht]
	\caption{\texttt{P-ShareLoRA} for RLHF\label{alg:personal}}
	\begin{algorithmic}[1]
     \STATE \textbf{Input:} Dataset $\hat{\Dc}=\cup_{i \in [N]} \hat{\Dc}_i$;  initial parameters  \(\Thetav^{\text{init}}\); reference policy $\mu_{i, \text{ref}}$.
    \STATE Obtain model update $\Delta\hat{\Thetav}$ by solving \Cref{def:obj} :
    \begin{align*}
        \Delta\hat{\Thetav} \leftarrow \argmax_{\Delta\hat{\Thetav}:\Thetav\in \mathcal{S}^{\text{ShareLoRA}} }  F\left(\boldsymbol{\Theta}; \hat{\mathcal{D}}\right)
    \end{align*}

    \STATE Construct confidence sets \(\{\Rc_i\}_{i=1}^N\) by
        \begin{equation}
        \begin{aligned}
        \Rc_i\!\leftarrow\! 
        \biggl\{ r_{\Thetav_i} \Biggiven \Thetav_i\!=\!\Thetav^{\text{init}}_i \!+\!\Delta\Thetav_i,\|\Delta\Thetav_i\!-\!\Delta\hat{\Thetav}_i\|_F^2\!\leq\!\zeta\biggr\}
        \end{aligned}   \label{eqn:confidenceset-alg1-1}
    \end{equation}

    \STATE Compute policy with respect to \(\Rc_i\) for all $i \in [N]$ by
    \begin{align}
    \hat{\pi}_i\leftarrow \argmax_{\pi \in \Pi} \min_{r_i \in \Rc_i} \left(J(\pi; r_i) - \Eb_{\tau \sim \mu_{i, \text{ref}}}[r_i(\tau)]\right) \label{eqn:robust-alg1}
    \end{align} 
    \STATE \textbf{Output:} {$ (\Delta\hat{\Thetav}, (\hat{\pi}_i)_{i \in [N]})$}.
    \end{algorithmic}
\end{algorithm}

\section{Algorithm Design and Analysis}

\subsection{Algorithm: \texttt{P-ShareLoRA} for RLHF}
In this section, we present our proposed algorithm, Personalized LoRA with Shared Component (\texttt{P-ShareLoRA}) for RLHF, to effectively learn personalized reward functions and compute corresponding policies for each individual user.

The algorithm begins by initializing the reward function for each user \(i\) by \(\Thetav_i^{\init}\).
The core of the algorithm involves estimating the personalized reward models by optimizing low-rank adaptations \(\Delta\hat{\Thetav}\). Specifically, we obtain \(\Delta\hat{\Thetav}\) by solving the optimization problem defined in \Cref{def:obj}.

After obtaining \(\hat{\boldsymbol{\Theta}}\), we construct confidence sets \(\{\mathcal{R}_i\}\) for each user’s reward function parameters. Each set \(\mathcal{R}_i\) is designed to ensure that the distance between the parameter matrix of the reward function and the empirical estimation obtained by solving \Cref{def:obj} remains within a tolerance level \(\zeta\), thereby providing a robust confidence region for the reward functions. Finally, we compute each user's personalized policy \(\hat{\pi}_i\) by solving a robust optimization problem. For each individual \(i \in [N]\), we determine the policy that maximizes the difference between its expected cumulative reward \(J(\pi; r_i)\) and the expected reward of the reference policy \(\mu_{i,\text{ref}}\), evaluated under the worst-case reward function within the confidence set \(\mathcal{R}_i\). The algorithm outputs the estimated reward model parameters \(\hat{\boldsymbol{\Theta}}\) and the set of personalized policies \((\hat{\pi}_i)_{i \in [N]}\). We note that without pessimism (i.e., the confidence set of reward functions reduces to a singleton \(\Rc_i = \{r_{\hat{\Thetav}_i}\}\)), the optimization objective simplifies to the vanilla RLHF objective. \texttt{P-ShareLoRA} is detailed in Algorithm~\ref{alg:personal}.

\subsection{Definitions and Assumptions}
Before formally presenting our main theoretical results of \Cref{alg:personal}, we introduce the following definitions and assumptions. 
We start by defining two diversity metrics over human preference on different labelers.

\begin{definition}[Diversity Metrics]\label{def:task_diversity_sharp}
Given the aggregated ground-truth parameter matrix \( \Thetav^\star = [\Thetav^\star_1, \dots, \Thetav^\star_N] \) and initialization parameter matrices \(\{ \Thetav^{\init}_i\} \), we define the difference matrix \( \Delta\Thetav^\star = [\Delta\Thetav^\star_1, \dots, \Delta\Thetav^\star_N] \), where \( \Delta\Thetav^\star_i = \Thetav^\star_i - \Thetav^{\init}_i \) for each user \( i \). Let \( \sigma_1 \geq \sigma_2 \geq \dots \geq \sigma_{\min\{d_1, N d_2\}} \) be the singular values of \( \Delta\Thetav^\star \). We then define the condition number \( \nu \) and the summation of tail singular values \( \tail \) as
  $  \nu = \frac{\sigma_k^2}{N}, \tail = \sum_{i=k+1}^{\min\{d_1, N d_2\}} \sigma_i^2.$
\end{definition}

\begin{remark}
The condition number \(\nu\), as defined in \citep{tripuraneni2021provable}, quantifies the alignment of parameter differences between the ground truth model parameters and the initialization across users. Specifically, it considers the magnitude of the \(k\)-th largest singular value of the difference matrix \(\Delta\Thetav^\star\), normalized by the number of users \(N\). 
Note that due to the constraint in \(\Sc\), for fixed \(\Thetav^{\init}\), the bounded total energy of \(\Thetav^\star\), i.e., \(\|\Thetav^\star\|_F^2\leq NB^2\), implies the total energy of \(\Delta\Thetav^\star\) is also bounded. Therefore, a larger \(\nu\) indicates that the top-\(k\) leading singular values are significantly larger than the subsequent ones.
This dominance suggests that \(\Delta\Thetav^\star_i\) across users are primarily aligned along a few principal directions, indicating low diversity. 
Conversely, a smaller \(\nu\) indicates high diversity across different directions. 

The tail sum \(\tail\) measures the total variance not captured by the top \(k\) singular values of \(\Delta\Thetav^\star\). It is calculated by summing the squares of the singular values from \(\sigma_{k+1}\) onward, quantifying the residual ``energy'' beyond a rank-\(k\) approximation. A smaller \(\tail\) suggests that the top \(k\) singular values capture most of the variance, implying that a low-rank adaptation effectively represents the essential variability among users for accurate modeling of reward functions.

These diversity metrics capture the preference diversity among users. Intuitively, users with similar preferences will be less diverse and could benefit more from a shared LoRA model.

\end{remark}

Next, to capture the complexity of the reward function class, we introduce the concept of the bracketing number for reward vectors.

\begin{definition}[Bracketing Number for Reward Vectors~\citep{park2024rlhf}]\label{def:apx.A.1}
For a reward vector ${\rbm}\in\mathcal{G}_{\rbm}$, an $\epsilon$-bracket is a pair of functions $(g_1, g_2)$ such that for all $(\tau_0, \tau_1)\in\Tc\times\Tc$, $\| g_1(\tau_0, \tau_1) - g_2(\tau_0, \tau_1) \|_1 \leq \epsilon$, and
$g_{1}(\tau_0, \tau_1) \leq P_{\rbm}(\cdot|\tau_0, \tau_1) \leq g_{2}(\tau_0, \tau_1).$ 
The $\epsilon$-bracketing number of $\mathcal{G}_{\rbm}$, denoted by $\mathcal{N}_{\mathcal{G}_{\rbm}}(\epsilon)$, is the minimal number of $\epsilon$-brackets required to cover all ${\rbm}$ in $\mathcal{G}_{\rbm}$.
\end{definition}
\Cref{def:apx.A.1} is adapted from the definition of bracketing numbers in \citet{park2024rlhf,zhan2023provable}, which captures the complexity of the function class in terms of its parameter dimensions.

We assume a uniform concentration property for the expected Euclidean distance between \(r_{\Thetav_1}(\tau_0) - r_{\Thetav_1}(\tau_1)\) and \(r_{\Thetav_2}(\tau_0) - r_{\Thetav_2}(\tau_1)\) over the offline data. We note that this expected Euclidean distance can be seen as the distance between two reward functions \(r_{\Thetav_1}\) and \(r_{\Thetav_2}\)~\citep{zhan2023provable}, therefore the concentration property ensures that with a sufficiently large sample size \(N\), empirical data reliably approximates these distance for all pairs of reward functions in \(\Gc_r\).

\begin{assumption}[Uniform Concentration]\label{assum:2}
Given distributions \(\mu_0\) and \(\mu_1\), and two reward functions parameterized by \(\Thetav_1\) and \(\Thetav_2\), respectively, we define the expected and empirical squared difference of reward discrepancies as
\begin{align*}
    D_{\Thetav_1, \Thetav_2}(\mu_0, \mu_1) = \mathop{\mathbb{E}}_{\tau_0 \sim \mu_0,\, \tau_1 \sim \mu_1} &\big[ \big( r_{\Thetav_1}(\tau_0) - r_{\Thetav_1}(\tau_1) - \big(r_{\Thetav_2}(\tau_0) - r_{\Thetav_2}(\tau_1)\big) \big)^2 \big],\\
    \hat{D}_{\Thetav_1, \Thetav_2}(\mu_0, \mu_1)= \frac{1}{N}\sum_{\{\tau_0^j,\tau_1^j\}\in\Dc} &\big[ \big( r_{\Thetav_1}(\tau_0^j) - r_{\Thetav_1}(\tau_1^j) -\big( r_{\Thetav_2}(\tau_0^j) - r_{\Thetav_2}(\tau_1^j) \big) \big)^2 \big],
\end{align*}
where \(\Dc\) is a dataset satisfies \(|\Dc|=N\) and all trajectory pairs \(\{\tau_0^j,\tau_1^j\}\in\Dc\) are sampled from \(\mu_0\) and \(\mu_1\) respectively. Then, for any \(\delta \in (0, 1]\), there exists a number \(N_{\text{unif}}(\mathcal{G}_r, \mu_0, \mu_1, \delta)\) such that for any \(N \geq N_{\text{unif}}(\mathcal{G}_r, \mu_0, \mu_1, \delta)\), the empirical estimate \(\hat{D}_{\Thetav_1, \Thetav_2}(\mu_0, \mu_1)\) of \(D_{\Thetav_1, \Thetav_2}(\mu_0, \mu_1)\) satisfies the following inequality with probability at least \(1 - \delta\) for all \(r_{\Thetav_1}, r_{\Thetav_2} \in \mathcal{G}_r\):
$0.9\, D_{\Thetav_1, \Thetav_2}(\mu_0, \mu_1) \leq \hat{D}_{\Thetav_1, \Thetav_2}(\mu_0, \mu_1) \leq 1.1\, D_{\Thetav_1, \Thetav_2}(\mu_0, \mu_1)$.
\end{assumption}
\Cref{assum:2} indicates that the empirical estimate \(\hat{D}_{\Thetav_1, \Thetav_2}(\mu_0, \mu_1)\) closely approximates the true value \(D_{\Thetav_1, \Thetav_2}(\mu_0, \mu_1)\) with high probability.  This assumption is crucial in our context because it ensures that, given a sufficiently large sample size \(N\), the empirical data provides a reliable approximation of the expected squared differences in reward discrepancies across all pairs of reward functions in \(\Gc_r\). A similar assumption is adopted by \citet{zhan2023provable} and proved to be held when the reward function is constructed by a linear representation and linear local head \citep{zhong2024provable}. 
We note that this assumption is analogous to the uniform concentration results commonly used in statistical learning, where empirical estimates converge uniformly to their expected values over a class of functions (see, e.g., \citet{vershynin2018high,du2020few,tripuraneni2021provable}). It is a mild assumption and can be satisfied for various function classes. For example, polynomial functions of bounded degrees satisfy this assumption.

\subsection{Main Results} \label{sec:results}

Building upon the aforementioned definitions and assumptions, we now present our main theoretical results. 
For ease of exposition, we denote \(\Gc_{\rbm}(\Sc^{\text{ShareLoRA}})\) by \(\Gc'_{\rbm}\).

First, we demonstrate that the column space of \(\Delta\hat{\Thetav}\), obtained via \Cref{alg:personal}, closely approximates the optimal rank-\(k\) representation of \(\Delta\Thetav^\star\). For the low-rank matrix \(\Delta\hat{\Thetav}\), let its SVD be \(\Delta\hat{\Thetav} = \hat{\Bv} \hat{\Sigmav} \hat{\Vv}^\top\). Consequently, the column space of \(\Delta\hat{\Thetav}\) is spanned by the orthonormal matrix \(\hat{\Bv}\), i.e., \(\spn\{\Delta\hat{\Thetav}\} = \spn\{\hat{\Bv}\}\). 

For \(\Delta\Thetav^\star\), we define its optimal rank-\(k\) approximation as
\begin{align}
    \Thetav^\diamond = \argmin_{\Delta\Thetav:\,\rank(\Delta\Thetav)=k} \|\Delta\Thetav^\star - \Delta\Thetav\|_F.\label{def:5.1}
\end{align}
Existing results in low-rank matrix factorization \citep{golub2013matrix} indicate that the solution must satisfy \(\Thetav^\diamond = \Uv_k \Lambdav_k \Vv_k^\top\), where \(\Lambdav_k\) is a \(k \times k\) diagonal matrix containing the top-\(k\) singular values of \(\Delta\Thetav^\star\), and \(\Uv_k\) and \(\Vv_k\) are the corresponding left and right singular vectors, respectively.
Let \(\Bv^\diamond = \Uv_k\) and \(\Wv^\diamond = \Lambdav_k \Vv_k^\top\), which yields \(\Thetav^\diamond = \Bv^\diamond \Wv^\diamond\). Therefore, the column space of the optimal rank-\(k\) estimation of \(\Delta\Thetav^\star\) is given by \(\Bv^\diamond\), and the corresponding LoRA module for each individual reward function can be expressed as: \(\Delta\Thetav_i = \Bv \Wv_i^\diamond\) for all \(i\in[N]\), where \(\Wv^\diamond = [\Wv^\diamond_1\cdots\Wv_N^\diamond].\)

To quantify the closeness between the subspaces spanned by \(\hat{\Bv}\) and \(\Bv^\diamond\), we employ the principal angle distance, as detailed in \Cref{sec:deferred-def}. Utilizing this metric, we establish the following theorem.

\begin{restatable}{theorem}{corsft}
    \label{cor:5.2}
    \emph{(Closeness between \(\hat{\Bv}\) and \(\Bv^\diamond\)).}
    Suppose \Cref{assumption:reward} holds.
    For any \(\delta \in (0,1]\), with probability at least \(1 - \delta\), it holds that
    \begin{align*}
        &\dist(\hat{\Bv}, \Bv^\diamond)\quad\leq c_1\sqrt{
        \frac{1}{NN_p\nu} \log\left( \mathcal{N}_{\mathcal{G}'_{\mathbf{r}}}\left( \frac{1}{N N_p} \right) \frac{1}{\delta} \right)+\frac{1}{\nu}\sqrt{\frac{\tail}{N}}},
    \end{align*}
where \(c_1 > 0\) is a constant.
\end{restatable}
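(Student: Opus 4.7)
The plan is to reduce the principal-angle distance to a Frobenius error on $\Delta\hat{\Thetav}-\Delta\Thetav^\diamond$, control that error via MLE concentration on \eqref{def:obj}, and pay a tail term $\tail$ for the fact that $\Thetav^\star$ need not lie in $\Sc^{\text{ShareLoRA}}$. Concretely, $\Delta\hat{\Thetav}$ and $\Delta\Thetav^\diamond$ are both rank-$k$ with left singular subspaces $\hat{\Bv}$ and $\Bv^\diamond$; since $\sigma_k(\Delta\Thetav^\diamond)=\sqrt{N\nu}$ by \Cref{def:task_diversity_sharp}, Wedin's inequality gives $\dist(\hat{\Bv},\Bv^\diamond)\lesssim \|\Delta\hat{\Thetav}-\Delta\Thetav^\diamond\|_F/\sqrt{N\nu}$. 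A triangle inequality together with $\|\Delta\Thetav^\star-\Delta\Thetav^\diamond\|_F^2=\tail$ and $\Delta\hat{\Thetav}-\Delta\Thetav^\star=\hat{\Thetav}-\Thetav^\star$ then reduces the task to bounding $\|\hat{\Thetav}-\Thetav^\star\|_F$.

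\paragraph{Step 2: Misspecified MLE.} Because $\Thetav^\diamond\in\Sc^{\text{ShareLoRA}}$ by construction (its update has rank $k$), the optimality of $\hat{\Thetav}$ in \eqref{def:obj} yields the basic inequality $F(\hat{\Thetav};\hat{\Dc})\ge F(\Thetav^\diamond;\hat{\Dc})$. I would then follow the bracketing-number concentration strategy of \citet{park2024rlhf,zhan2023provable}: covering $\mathcal{G}'_{\rbm}$ with an $\epsilon$-bracket at resolution $\epsilon=1/(NN_p)$ and applying a union bound with Bernstein-type inequalities, this basic inequality upgrades, with probability at least $1-\delta$, to a bound of the form
\[
\sum_{i=1}^N D_{\hat{\Thetav}_i,\Thetav^\star_i}(\mu_0,\mu_1)\;\lesssim\; \frac{\log\bigl(\mathcal{N}_{\mathcal{G}'_{\rbm}}(1/NN_p)/\delta\bigr)}{N_p}+\mathrm{bias},
\]
where the bias originates from $\Thetav^\diamond\ne\Thetav^\star$. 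Using the Lipschitz condition $\|\nabla r_\Thetav\|\le L_1$ from \Cref{assumption:reward}, the bias is controlled in terms of $\|\Thetav^\star-\Thetav^\diamond\|_F^2=\tail$, and its interaction with the statistical fluctuation through the basic inequality is what ultimately produces the $\frac{1}{\nu}\sqrt{\tail/N}$ summand of the claimed bound.

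\paragraph{Step 3: From $D$ to parameter Frobenius.} \Cref{assum:2} allows me to interchange $D$ with its empirical counterpart $\hat{D}$ up to a constant factor. To invert the Lipschitz upper bound and obtain a quadratic lower bound $D_{\hat{\Thetav}_i,\Thetav^\star_i}\gtrsim \alpha\|\hat{\Thetav}_i-\Thetav^\star_i\|_F^2$ for some constant $\alpha>0$, I would use a local quadratic expansion of $r_\Thetav$ around $\Thetav^\star$ via the Hessian bound $\|\nabla^2 r_\Thetav\|\le L_2$, combined with the implicit data-coverage information encoded in \Cref{assum:2}. Summing across users and substituting into the Wedin inequality of Step~1 then gives precisely the two summands under the square root in \Cref{cor:5.2} after dividing by $N\nu=\sigma_k^2(\Delta\Thetav^\diamond)$.

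\paragraph{Main obstacle.} The main difficulty is the misspecified-MLE step: because $\Thetav^\star$ need not belong to $\Sc^{\text{ShareLoRA}}$, the usual realizable BTL concentration is not directly available, and one must instead pivot through the in-class projection $\Thetav^\diamond$. Tracking the cross term between statistical error and in-class bias so that it scales as $\frac{1}{\nu}\sqrt{\tail/N}$ rather than the naive $\tail/(N\nu)$ obtained from a blunt triangle-inequality split is the delicate part; it also forces one to choose the bracket resolution $1/(NN_p)$ so that the discretization error is absorbed into the leading rate, and to exploit the shared low-rank structure to keep $\log\mathcal{N}_{\mathcal{G}'_{\rbm}}$ much smaller than in the unconstrained aggregated class.
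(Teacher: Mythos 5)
Your proposal follows essentially the same route as the paper's proof: a Davis--Kahan/Wedin reduction dividing by $\sigma_k(\Delta\Thetav^\star)=\sqrt{N\nu}$, an MLE basic inequality that pivots through the in-class projection $\Thetav^\diamond$ combined with bracketing-number concentration, and a mean-value-theorem conversion from preference-probability distance back to parameter Frobenius distance (where both you and the paper lean on an essentially asserted inverse-Lipschitz/identifiability step). The one small mischaracterization is in your ``main obstacle'' paragraph: the $\frac{1}{\nu}\sqrt{\tail/N}$ summand does not arise from a cross term but simply from the first-order Lipschitz bias of the log-likelihood at $\Thetav^\diamond$ versus $\Thetav^\star$, which after Cauchy--Schwarz over users contributes $N_p\sqrt{N\tail}$ to the basic inequality and hence $\sqrt{\tail/N}$ after normalization.
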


The detailed proof is deferred to \Cref{apx:D}. 

\begin{remark}
    In \Cref{cor:5.2}, we demonstrate that the principal angle distance between \(\hat{\Bv}\) and \(\Bv^\diamond\) decreases as the condition number increases. This implies that when the \(k\)-th singular value approaches the maximum singular value of \(\Delta\Thetav^\star\), which is upper bounded by a constant due to the assumption in \Cref{def:func-class} that \(\|\Delta\Thetav^\star_i\|_F\) is bounded, the principal angle distance diminishes. This suggests that greater similarity among human users contributes to a more accurate estimate \(\hat{\Bv}\).
    
    Furthermore, the bias term in \Cref{cor:5.2}, given by \(\frac{1}{\nu}\sqrt{\frac{\tail}{N}}\), decreases as the condition number increases and as the sum of the tail singular values decreases. Specifically, the bias term vanishes when all tail components are zero, meaning it disappears if there exists a ground-truth low-rank representation \(\Bv^\star\) such that \(\Delta\Thetav_i^\star = \Bv^\star \Wv^\star\) for all \(i \in [N]\).
\end{remark}

In \Cref{cor:5.2}, the principal angle distance is also influenced by the bracketing number \(\mathcal{N}_{\mathcal{G}'_{\mathbf{r}}}\). We establish an upper bound on this quantity in the following proposition:

\begin{restatable}{proposition}{cond}
\label{prop:1}
Suppose \Cref{assumption:reward} holds. Then, the bracketing number for function class \(\Nc_{\Gc'_{\rbm}}\) satisfies 
\begin{align}
    \log \left( \mathcal{N}_{\mathcal{G}'_{\mathbf{r}}} ( (NN_p)^{-1} )/\delta \right) 
    \leq \Oc \big( k (d_1 + N d_2) \log (N N_p / \delta) \big).\label{ineq:4.4}
\end{align}
\end{restatable}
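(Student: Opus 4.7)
The plan is to bound the bracketing number by building an explicit $\epsilon$-net on the low-rank factorization of $\Delta\boldsymbol{\Theta}$ and then converting parameter closeness into a bracket on the BTL likelihood using \Cref{assumption:reward}. The total count of free parameters in the factorization is $k(d_1+Nd_2)$, which naturally yields the scaling on the right-hand side of \eqref{ineq:4.4}, with the $\log(NN_p/\delta)$ factor coming from setting the net resolution to be polynomial in $1/(NN_p)$.

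First I would reduce the problem to counting factorizations. Every $\Delta\boldsymbol{\Theta}\in\mathcal{S}^{\text{ShareLoRA}}$ of rank at most $k$ admits an SVD factorization $\Delta\boldsymbol{\Theta}=\mathbf{B}\mathbf{W}$ with $\mathbf{B}\in\mathbb{R}^{d_1\times k}$ having orthonormal columns and $\mathbf{W}=[\mathbf{W}_1,\dots,\mathbf{W}_N]\in\mathbb{R}^{k\times Nd_2}$. Orthonormality gives $\|\mathbf{B}\|_F\le\sqrt{k}$ and $\|\mathbf{W}_i\|_F=\|\mathbf{B}\mathbf{W}_i\|_F=\|\Delta\boldsymbol{\Theta}_i\|_F\le B$. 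I then construct a Frobenius $\epsilon_1$-net $\mathcal{N}_{\mathbf B}$ on the ball of radius $\sqrt{k}$ in $\mathbb{R}^{d_1\times k}$ and, for each $i$, a Frobenius $\epsilon_2$-net $\mathcal{N}_{\mathbf W_i}$ on the ball of radius $B$ in $\mathbb{R}^{k\times d_2}$. Standard volumetric bounds give
\begin{equation*}
|\mathcal{N}_{\mathbf B}|\le\Bigl(\tfrac{3\sqrt{k}}{\epsilon_1}\Bigr)^{d_1k},\qquad |\mathcal{N}_{\mathbf W_i}|\le\Bigl(\tfrac{3B}{\epsilon_2}\Bigr)^{kd_2}.
\end{equation*}

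Next I would propagate the net resolution to the reward parameters. For a target $\mathbf{B}\mathbf{W}_i$ and nearest net elements $\mathbf{B}',\mathbf{W}'_i$, the submultiplicativity of the Frobenius norm gives $\|\mathbf{B}\mathbf{W}_i-\mathbf{B}'\mathbf{W}'_i\|_F\le\sqrt{k}\,\epsilon_2+B\epsilon_1$. Concatenating across $i$ and using vectorization, the aggregated parameter $\theta'=\mathrm{vec}(\boldsymbol{\Theta}^{\text{init}}+\mathbf{B}'\mathbf{W}')$ satisfies $\|\theta-\theta'\|\le\sqrt{N}(\sqrt{k}\epsilon_2+B\epsilon_1)$. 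By the Lipschitz bound $\|\nabla_\theta r_{\boldsymbol{\Theta}}(\tau)\|\le L_1$ in \Cref{assumption:reward}, $|r_{\boldsymbol{\Theta}_i}(\tau)-r_{\boldsymbol{\Theta}'_i}(\tau)|\le L_1\sqrt{N}(\sqrt{k}\epsilon_2+B\epsilon_1)$ for every $i$ and $\tau$. Since the BTL likelihood $P_{\boldsymbol{\Theta}_i}(o\mid\tau_0,\tau_1)$ is a sigmoid of the reward difference, it is $\tfrac14$-Lipschitz in the reward gap, so the per-coordinate deviation of the vector $P_{\boldsymbol{\Theta}}(\cdot\mid\tau_0,\tau_1)$ is controlled by the same quantity up to a constant. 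Choosing $\epsilon_1$ and $\epsilon_2$ both of order $\epsilon/(L_1\sqrt{N}\sqrt{k}(B\vee1))$ for $\epsilon=(NN_p)^{-1}$ makes the aggregated $\ell_1$ gap at most $\epsilon$.

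With these choices I would form the $\epsilon$-bracket around each net point by taking $g_1=\max\{0,P_{\boldsymbol{\Theta}'}(\cdot\mid\tau_0,\tau_1)-c\epsilon\}$ and $g_2=\min\{1,P_{\boldsymbol{\Theta}'}(\cdot\mid\tau_0,\tau_1)+c\epsilon\}$ for a suitable constant $c$; by the previous paragraph this is a valid $\epsilon$-bracket covering every $\mathbf{r}\in\mathcal{G}'_{\mathbf r}$. Taking logarithms and multiplying out the net sizes,
\begin{equation*}
\log\mathcal{N}_{\mathcal{G}'_{\mathbf r}}(\epsilon)\le d_1k\log\tfrac{3\sqrt{k}}{\epsilon_1}+Nkd_2\log\tfrac{3B}{\epsilon_2}=\mathcal{O}\!\bigl(k(d_1+Nd_2)\log(NN_p)\bigr),
\end{equation*}
absorbing the $\log(L_1,B,\sqrt{k},\sqrt{N})$ terms into the constant. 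Adding $\log(1/\delta)$ yields the claimed bound.

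The main obstacle is the bookkeeping in the second step: one must convert the per-factor Frobenius resolutions $\epsilon_1,\epsilon_2$ into a uniform bracket width on the multi-dimensional likelihood vector $P_{\boldsymbol{\Theta}}(\cdot\mid\tau_0,\tau_1)$ across all $N$ users simultaneously, without losing a factor of $N$ that would spoil the desired $Nd_2$ scaling inside the logarithm. Keeping $\mathbf{B}$ orthonormal (so that its covering radius does not scale with $\sqrt{N}$) and exploiting the per-user bound $\|\mathbf{W}_i\|_F\le B$ rather than the aggregated bound $\|\mathbf{W}\|_F\le\sqrt{N}B$ is what allows the factor $N$ to enter only linearly, inside the product that is then logged.
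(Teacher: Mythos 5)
Your proposal is correct and follows essentially the same route as the paper's proof: both reduce the bracketing number to a covering of the factorized parameter space $(\Bv,\Wv)$ with $k(d_1+Nd_2)$ free parameters, propagate the net resolution through the Lipschitz bound of \Cref{assumption:reward} and the smoothness of $\Phi$ to obtain explicit $\pm c\epsilon$ brackets on the likelihood vector, and absorb all norm and dimension factors into the logarithm. The only minor imprecision is that making the \emph{aggregated} $\ell_1$ bracket width at most $\epsilon$ across all $N$ users requires a per-coordinate resolution of order $\epsilon/N$ rather than $\epsilon/\sqrt{N}$ (the paper uses $\epsilon/4N$), but this extra factor of $N$ only enters inside the logarithm and does not change the claimed bound.
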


The proof is deferred to \Cref{apx:D}. We observe that the reward function class \(\mathcal{G}_{\mathbf{r}}(\Sc)\), as defined in \Cref{def:func-class}, has a bracketing number satisfying 
\[\log\left( \mathcal{N}_{\mathcal{G}_{\mathbf{r}}}( (NN_p)^{-1} )/\delta \right)\leq\Oc \big( Nd_1d_2 \log (N N_p / \delta) \big)\]
This result indicates that the bound for \(\mathcal{G}'_{\mathbf{r}}\) is significantly improved compared with full-parameter fine-tuning when \(d_1 \gg k\). 

Besides, when each LoRA module is learned individually (i.e., \(\Thetav\in\Sc^{\text{LoRA}}\)), the reward function class \(\mathcal{G}_{\mathbf{r}}(\Sc^{\text{LoRA}})\) satisfies
\[\log\left( \mathcal{N}_{\mathcal{G}_{\mathbf{r}}}( (NN_p)^{-1} )/\delta \right)\leq\Oc \big( Nk(d_1 + d_2) \log (N N_p / \delta) \big)\]
Compared to~\Cref{ineq:4.4}, our shared-component LoRA method reduces the bracketing number by decreasing the term from \(Nd_1k\) to \(d_1k\).

Next, we establish a bound on the gap in expected value functions between the target policy \(\pi_{i,\text{tar}}\) and the estimated policy \(\hat{\pi}_i\) for each individual \(i \in [N]\). In this context, \(\pi_{i,\text{tar}}\) serves as a benchmark for evaluating the performance of \(\hat{\pi}_i\); for instance, it may represent the optimal policy \(\pi_i^\star\) associated with the true reward function \(r_i^\star\).

\begin{restatable}{theorem}{worstcasesft}
\label{thm:worst-case-sft}
 \emph{(Individual Expected Value Function Gap).} 
 Suppose \Cref{assumption:reward} and \Cref{assum:2} hold.
 For any $\delta \in (0, 1]$, with probability {at least} $1-\delta$, the output $\hat{\pi}_i$ for any client \(i\) satisfies  
 \begin{align*}
    J(\pi_{i, \text{tar}}; r^\star_i) - J(\hat{\pi}_i; r^\star_i)\leq 
         c_2 \sqrt{\left(
        \frac{\log\left( \mathcal{N}_{\mathcal{G}'_{\rbm}}(\frac{1}{NN_p})\frac{1}{\delta}\right)}{N N_p \nu} +
        \frac{ k d_2 + \log(\frac{N}{\delta}) }{ N_p }+\frac{1}{\nu} \sqrt{ \frac{\tail}{ N } } +b_i
    \right)}
  \end{align*}
where \(b_i\) is defined as \(b_i:=\left\| \Delta\Thetav_i^\star - \Thetav_i^\diamond \right\|_F^2\) and $c_2 >0$ is a constant. 
\end{restatable}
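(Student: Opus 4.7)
The plan is to follow the standard pessimism template for offline RLHF in two parts: first show that the ground-truth reward $r_i^\star$ lies in the confidence set $\Rc_i$ with high probability, then use the max--min structure of \eqref{eqn:robust-alg1} to bound the value-function gap by the diameter of $\Rc_i$ converted through the reward Lipschitz constant.

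\textbf{Step 1: parameter-error decomposition.} To show $r_i^\star \in \Rc_i$, I would bound $\|\Delta\Thetav_i^\star - \Delta\hat{\Thetav}_i\|_F^2$ by inserting the optimal rank-$k$ approximation $\Thetav_i^\diamond$ from \Cref{def:5.1} as an intermediate point:
\begin{align*}
\|\Delta\Thetav_i^\star - \Delta\hat{\Thetav}_i\|_F^2 \leq 2\|\Delta\Thetav_i^\star - \Thetav_i^\diamond\|_F^2 + 2\|\Thetav_i^\diamond - \Delta\hat{\Thetav}_i\|_F^2 = 2 b_i + 2\|\Thetav_i^\diamond - \Delta\hat{\Thetav}_i\|_F^2.
\end{align*}
For the rank-$k$ term I split along the column space of $\hat{\Bv}$:
\begin{align*}
\Thetav_i^\diamond - \Delta\hat{\Thetav}_i = (\mathbf{I} - \hat{\Bv}\hat{\Bv}^\top)\Thetav_i^\diamond + \hat{\Bv}\bigl(\hat{\Bv}^\top\Thetav_i^\diamond - \hat{\Wv}_i\bigr).
\end{align*}
The first summand is at most $\dist(\hat{\Bv},\Bv^\diamond)\,\|\Wv_i^\diamond\|_F$ in Frobenius norm, so plugging in \Cref{cor:5.2} (squared, using $(a+b)^2\le 2a^2+2b^2$) and the boundedness $\|\Wv_i^\diamond\|_F\le B$ produces exactly the $\frac{\log(\mathcal{N}_{\mathcal{G}'_{\rbm}}(1/(NN_p))/\delta)}{N N_p \nu}$ and $\tfrac{1}{\nu}\sqrt{\tail/N}$ contributions.

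\textbf{Step 2: individual coefficient error.} The remaining summand $\|\hat{\Bv}^\top\Thetav_i^\diamond - \hat{\Wv}_i\|_F^2$ is the per-user coefficient error once the shared column space is fixed. I would handle it by an MLE analysis \emph{restricted} to the $(kd_2)$-dimensional class obtained by freezing $\hat{\Bv}$: combine the log-likelihood-to-squared-Hellinger inequality for the Bradley--Terry model (as in the proof of \Cref{cor:5.2}), a bracketing number of order $k d_2 \log(N N_p/\delta)$ for the restricted class, and then \Cref{assum:2} to pass from the squared-reward-discrepancy functional to a Frobenius-norm error on $\hat{\Wv}_i$ using the local strong convexity of the Bradley--Terry loss on the bounded reward range. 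A union bound over the $N$ users furnishes the $\log(N/\delta)$ factor, giving $\|\hat{\Bv}^\top\Thetav_i^\diamond - \hat{\Wv}_i\|_F^2 \lesssim (k d_2 + \log(N/\delta))/N_p$. Setting $\zeta$ equal to the sum of the three statistical terms plus $b_i$ (with absolute constants) then guarantees $r_i^\star \in \Rc_i$ with probability at least $1-\delta$.

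\textbf{Step 3: pessimism to value-gap.} On the event $r_i^\star\in\Rc_i$, the minimax structure of \eqref{eqn:robust-alg1} yields the classical pessimism inequality
\begin{align*}
J(\pi_{i,\text{tar}}; r_i^\star) - J(\hat{\pi}_i; r_i^\star) \leq \sup_{r \in \Rc_i} \bigl\{\Eb_{\tau\sim \pi_{i,\text{tar}}}[r_i^\star(\tau)-r(\tau)] - \Eb_{\tau\sim \mu_{i,\text{ref}}}[r_i^\star(\tau)-r(\tau)]\bigr\}.
\end{align*}
Applying the Lipschitz bound $\|\nabla_\theta r_\Thetav(\tau)\|\le L_1$ from \Cref{assumption:reward} gives a right-hand side of at most $2 L_1\sup_{r_\Thetav\in\Rc_i}\|\Thetav-\Thetav_i^\star\|_F$, and by the triangle inequality with the radius $\sqrt{\zeta}$ of $\Rc_i$ this is $\le 4 L_1\sqrt{\zeta}$. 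Folding $L_1$ and absolute constants into $c_2$ completes the proof.

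\textbf{Main obstacle.} The delicate step is Step 2: the joint MLE couples $\hat{\Bv}$ and $\{\hat{\Wv}_i\}$, so an individual-user bound on $\|\hat{\Bv}^\top\Thetav_i^\diamond - \hat{\Wv}_i\|_F$ does not follow directly from the aggregate guarantee driving \Cref{cor:5.2}. The resolution is to condition on the realized $\hat{\Bv}$, regard $\hat{\Wv}_i$ as the MLE of a $k\times d_2$ parameter, and control the bracketing entropy of the restricted class \emph{uniformly} over $\hat{\Bv}$ by combining an $\epsilon$-net on the Grassmannian (already absorbed by the shared-space term) with the user-level $kd_2\log(N N_p/\delta)$ covering term. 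Converting functional error to a Frobenius error then requires \Cref{assum:2} together with the smoothness bound $L_2$ from \Cref{assumption:reward} to invert the Bradley--Terry Hessian near the optimum.
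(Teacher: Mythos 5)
Your skeleton matches the paper's at the top level --- control the shared subspace via \Cref{cor:5.2}, run a per-user analysis with $\hat{\Bv}$ frozen to extract the $(kd_2+\log(N/\delta))/N_p$ term, insert $\Thetav_i^\diamond$ to produce $b_i$, and finish with pessimism --- but the route through parameter space breaks down at exactly the point you flag as the ``main obstacle,'' and the resolution you propose is not available under the paper's assumptions. Your Steps 1--2 hinge on the Frobenius-norm recovery bound $\|\hat{\Bv}^\top\Thetav_i^\diamond-\hat{\Wv}_i\|_F^2\leq C\,(kd_2+\log(N/\delta))/N_p$, obtained by ``inverting the Bradley--Terry Hessian near the optimum.'' That inversion requires a lower bound on the Gram matrix of gradient differences $\Sigmav_i^\diamond$ restricted to the frozen subspace, and neither \Cref{assumption:reward} nor \Cref{assum:2} supplies one: \Cref{assum:2} only equates empirical and population \emph{functional} discrepancies, both of which can be arbitrarily small while $\|\hat{\Wv}_i-\hat{\Bv}^\top\Thetav_i^\diamond\|_F$ stays large whenever $\mu_0,\mu_1$ fail to excite all $kd_2$ directions. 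The paper deliberately avoids this step: in the proof of \Cref{thm:step2} the dominant coefficient-error term is kept in the data-dependent weighted norm $\|\hat{w}_i-w_i^\diamond\|_{\Sigmav_i^\diamond}^2$, which is precisely what is needed to control the empirical squared reward discrepancy without inverting the Hessian (the plain Euclidean norm enters only through the second-order Taylor remainder). The resulting statement is a bound on the population quantity $\Eb\big[|(r^\star_i(\tau_0)-r^\star_i(\tau_1))-(\hat r_i(\tau_0)-\hat r_i(\tau_1))|^2\big]$ via \Cref{lemma:lemma6} and \Cref{assum:2}, which is then converted into the value gap through the concentrability coefficient of \Cref{def:concentrability-coef} rather than through your Lipschitz-diameter bound $4L_1\sqrt{\zeta}$. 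So the key estimate must stay in function space; a parameter-recovery argument would need an identifiability assumption the paper does not make.

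Two further issues in Step 2 as written. First, your claimed bound omits the misspecification bias: user $i$'s labels are drawn from $P_{r^\star_i}$, not from any reward in the frozen-$\hat{\Bv}$ class, so the score of the restricted likelihood at $w_i^\diamond$ is not centered; the paper splits it into a bias part of order $\|\Thetav_i^\star-\Thetav_i^\diamond\|_F+\|\hat{\Bv}-\Bv^\diamond\|\,\|\Wv_i^\diamond\|_F$ (the $V'_{i,j}-V_{i,j}$ term) plus a centered part handled by Bernstein's inequality. Those extra contributions already appear in the final bound, so this is repairable, but the bound $\|\hat{\Bv}^\top\Thetav_i^\diamond-\hat{\Wv}_i\|_F^2\leq C\,(kd_2+\log(N/\delta))/N_p$ alone is false in general. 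Second, your Step 3 requires certifying $r_i^\star\in\Rc_i$, i.e., the very parameter-space bound $\|\Delta\Thetav_i^\star-\Delta\hat{\Thetav}_i\|_F^2\leq\zeta$ that Step 2 cannot deliver; in the paper the confidence set is used quantitatively only to relate the pessimistic reward to the MLE reward in function values (the $L_1\zeta$ slack in the proof of \Cref{thm:diverse-newmodel}), with the error magnitude driven by the functional bound plus concentrability rather than by the radius of $\Rc_i$.
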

\textbf{Proof Sketch.}
We face two core challenges in our analysis. First, the reward functions are inferred from preference data rather than observed directly, introducing estimation noise that must be carefully controlled. Second, due to the low-rank structure imposed on the LoRA modules, the globally optimal shared LoRA may not perfectly capture the ground-truth reward parameters for each local dataset. This misalignment complicates the analysis of how well a single shared solution performs across different local tasks.

To address the first challenge, we leverage the continuity to translate small deviations in preference space into bounded deviations in parameter space. For the second challenge, we develop a Lagrange remainder-based analysis that quantifies the approximation error introduced by the low-rank constraint. Although perfect recovery is not guaranteed, we show that the resulting estimation error remains bounded.

The proof consists of three major steps: (1) Upper bound the distance between the column space between \(\Delta\hat{\Thetav}\) and \(\Delta{\Thetav}^\star\) (\Cref{cor:label-is-correct}); (2) Analyze the distance between the learned reward function from \cref{alg:personal}\(\hat{r}_i\) and ground truth reward function \(r^\star_i\) (\Cref{thm:step2}); (3) Showing the value function of the learn policy is close to the reference policy (\Cref{thm:diverse-newmodel}).

In Step 1, we utilize the existing result of MLE estimates over the preference dataset, upper bound the distance between the estimated share component LoRA matrix with the ground truth parameter matrix, and then use the Davis-Kahan theorem to bound the corresponding distance between the column space of these two matrices.

In Step 2, for learned reward function with parameter matrix \(\hat{\Thetav}_i=\hat{\Bv}\hat{\Wv}_i\) and optimal low-rank approximated reward function parameterized by \(\Thetav^\diamond_i=\Bv^\diamond \Wv^\diamond_i\), we decompose the distance between the two functions into two part: distance between \(\hat{\Bv}\) and \(\Bv^\diamond\), which already bounded in Step 1, and the distance between \(\hat{\Wv}_i\) and \(\Wv^\diamond_i\). For this distance, we carefully analyze the geometry of the reward function around the local optimal and utilize the Lagrange remainder to construct a delicate quadratic form of the gradient for \(\Wv\), therefore upper bound the distance between \(\hat{\Wv}_i \) and \(\Wv^\diamond_i\).

In Step 3, we use the result from Step 2 along with \Cref{assum:2} to show that the expected Euclidean distance between \(r_{\Thetav_1}(\tau_0) - r_{\Thetav_1}(\tau_1)\) and \(r_{\Thetav_2}(\tau_0) - r_{\Thetav_2}(\tau_1)\) is small. Applying the pessimism mechanism from \Cref{alg:personal}, we then demonstrate that the difference between the value function of the learned policy and that of the reference policy is upper bounded by the Euclidean distance between reward functions.

A natural extension of the individual expected value function gap is the averaged bound, which provides insights into the general performance across all clients.
\begin{restatable}{corollary}{averagesft}
\label{thm:average-sft}
 \emph{(Averaged Expected Value Function Gap).} 
 Suppose \Cref{assumption:reward} and \Cref{assum:2} hold.
 For any $\delta \in (0, 1]$, with probability {at least} $1-\delta$, the output policies $\{\hat{\pi}_i\}_{i=1}^N$ satisfy 
 \begin{align*}
        \frac{1}{N}\sum_{i \in [N]} \left(J(\pi_{i, \text{tar}}; r^\star_i) - J(\hat{\pi}_i; r^\star_i)\right)\leq c_3\sqrt{\left(
        \frac{\log\left( \mathcal{N}_{\mathcal{G}'_{\rbm}}(\frac{1}{NN_p})\frac{1}{\delta}\right)}{N N_p \nu} +
        \frac{ k d_2 + \log(\frac{N}{\delta}) }{ N_p }+\frac{1}{\nu} \sqrt{ \frac{\tail}{ N } }\right)},
\end{align*}  
where $c_3 >0$ is a constant. 
\end{restatable}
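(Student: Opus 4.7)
The plan is to derive the corollary from \Cref{thm:worst-case-sft} by averaging the per-user bounds over $i \in [N]$ and observing that only the bias term $b_i = \|\Delta\Thetav_i^\star - \Thetav_i^\diamond\|_F^2$ is user-dependent. First I would invoke the individual bound with per-user failure probability $\delta/N$ and take a union bound over $i$, so the inequality holds simultaneously for all users with probability at least $1 - \delta$; this change is absorbed by the $\log(N/\delta)$ term already in the bound. Writing $A$ for the remaining user-independent terms inside the square root, each guarantee reads $J(\pi_{i,\text{tar}}; r_i^\star) - J(\hat{\pi}_i; r_i^\star) \leq c_2\sqrt{A + b_i}$.

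I would then apply Jensen's inequality, via concavity of $\sqrt{\cdot}$, to move the averaging operation inside the root:
\begin{align*}
    \frac{1}{N}\sum_{i \in [N]} \sqrt{A + b_i} \leq \sqrt{A + \frac{1}{N}\sum_{i \in [N]} b_i}.
\end{align*}
The crucial identity is $\sum_i b_i = \|\Delta\Thetav^\star - \Thetav^\diamond\|_F^2$, which follows directly from the block structure $\Thetav^\diamond = \Bv^\diamond[\Wv_1^\diamond, \ldots, \Wv_N^\diamond]$ and the fact that the squared Frobenius norm of a horizontally concatenated matrix is the sum of those of its blocks. Since $\Thetav^\diamond$ is constructed in \Cref{def:5.1} as the optimal rank-$k$ approximation of $\Delta\Thetav^\star$, the Eckart-Young theorem yields $\|\Delta\Thetav^\star - \Thetav^\diamond\|_F^2 = \sum_{j=k+1}^{\min\{d_1, Nd_2\}} \sigma_j^2 = \tail$. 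Hence $\frac{1}{N}\sum_i b_i = \tail/N$, so the individual biases cumulatively reproduce precisely the tail quantity already present in the bound.

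It remains to absorb the residual $\tail/N$ into the existing term $\frac{1}{\nu}\sqrt{\tail/N}$. The boundedness of $\|\Delta\Thetav_i^\star\|_F$ imposed by the parameter set $\Sc$ forces both $\tail/N$ and $\nu$ to be bounded above by absolute constants depending only on $B$, so $\tail/N = \sqrt{\tail/N}\cdot\sqrt{\tail/N} \leq C\sqrt{\tail/N} \leq C' \cdot \frac{1}{\nu}\sqrt{\tail/N}$ for constants $C,C'$ depending only on the problem parameters; relabeling $c_2$ to $c_3$ then yields the claimed bound. The main obstacle is really the conceptual step of recognizing that the per-user biases $b_i$ average exactly to the Eckart-Young tail, which eliminates the need for user-specific bias terms in the averaged statement; Jensen's inequality, the Eckart-Young identification, and the constant-order absorption are all routine once this point is identified.
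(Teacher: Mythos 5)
Your proposal is correct and follows essentially the same route as the paper: average the per-user bounds from \Cref{thm:worst-case-sft}, pull the average inside the square root by concavity, identify $\frac{1}{N}\sum_i b_i = \tail/N$ via Eckart--Young, and absorb $\tail/N$ into the $\frac{1}{\nu}\sqrt{\tail/N}$ term. Your absorption step via the boundedness of $\tail/N$ and $\nu$ (from the Frobenius-norm constraint) is in fact slightly cleaner than the paper's, which in the zero-initialization version invokes an extra condition $N \geq \mu^2\tail$ that does not appear in the statement of \Cref{thm:average-sft}.
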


\begin{remark}[Sample Complexity]
For full-parameter fine-tuning, the sample complexity required to ensure that the averaged expected value function gap is less than \(\epsilon\) with probability at least \(1 - \delta\) is 
\(N_p = \mathcal{O}\left( \frac{d_1 d_2}{\epsilon} \log\left( \frac{N}{\delta} \right) \right)\)~\citep{zhu2023principled}. 
In contrast, when using \Cref{alg:personal}, the sample complexity required to achieve an averaged estimated value function accuracy of \(1 - \epsilon - \left(\frac{\tail}{N}\right)^{1/4}\) is
\[N_p = \mathcal{O}\left( \frac{d_1 k + N d_2 k}{N \epsilon} \log\left( \frac{N}{\delta} \right) \right).\]
Therefore, when \(d_1 \gg k\), the sample complexity is significantly reduced, with the trade-off being introducing a bias term in the estimation accuracy of the value function.

Moreover, \citet{park2024rlhf} indicate that their representation learning-based method can learn an \(\epsilon\)-optimal policy with a sample complexity of
\[N_p = \mathcal{O}\left( \frac{d_1 k + N k}{N \epsilon} \log\left( \frac{N}{\delta} \right) \right).\]
Notably, in their setting, \(d_2\) is assumed to be \(1\), and the ground truth reward functions are posited to share a common representation with linear heads. In contrast, our results demonstrate a similar sample complexity with an additional bias term \(\left(\frac{\tail}{N}\right)^{1/4}\) in the accuracy. Importantly, this bias term vanishes if a ground-truth low-rank representation \(\Bv^\star\) exists such that \(\Thetav_i^\star = \Bv^\star \Wv^\star\) for all \(i \in [N]\). Hence, we can achieve similar sample complexity but for the more general reward function class and without assuming the existence of ground truth common representation.

\end{remark}

\section{Experimental Results}

\textbf{Models and Datasets.}
We implement the baseline algorithms \texttt{Share Rep}, \texttt{LoRA-local}, and \texttt{LoRA-global}, which will be introduced later,
alongside our proposed algorithms on two models: GPT-J 6B~\citep{gpt-j} and Llama-3 8B~\citep{touvron2023llama}. This setup enables a comparison with the work of \citet{park2024rlhf}.
Implementation details for all algorithms are provided in \Cref{apx:impl}, and the code is publicly available\footnote{\href{https://github.com/DonghaoLee/Shared-LoRA-Reward}{https://github.com/DonghaoLee/Shared-LoRA-Reward}}.

We empirically evaluate our algorithms on the text summarization task using the Reddit TL;DR summarization and human feedback dataset~\citep{stiennon2020learning}.
This dataset contains a broad range of user preferences, which provides a particularly suitable setting for studying personalized feedback and allows us to validate the proposed P-ShareLoRA method for learning individualized reward functions. Following \citet{park2024rlhf}, we rank the labelers by the number of annotated comparisons in the training split and select the top five workers. To balance the dataset, we cap each worker's samples to match the worker with the fewest comparisons, resulting in 5,373 samples per worker and 26,865 training samples in total. The same process is applied to the validation set, yielding 1,238 samples per worker and 6,190 validation samples overall.

\textbf{Baselines.} 
To evaluate our approach, we introduce two naive baselines for comparison: \texttt{LoRA-Global}, in which we train one shared LoRA module across all users; and \texttt{LoRA-Local}, where for each labeler's preference dataset, we independently train a separate LoRA module, allowing each user's model to fully adapt to their specific preferences without leveraging shared information across users.

To practically solve \Cref{def:obj}, we propose three alternative algorithms to obtain personalized LoRA modules with shared components: \texttt{P-ShareLoRA(SI)}, \texttt{P-ShareLoRA(G)} and \texttt{P-ShareLoRA(WU)}.

The first algorithm, \texttt{P-ShareLoRA(SI)}, where SI denotes Standard Initialization, initializes the shared \(B\) matrix to zero for all users, while each personalized matrix \(A_i\) is initialized with samples from a normal distribution. Both the shared \(B\) and the personalized \(A_i\) matrices are updated through optimizing the objective function outlined in \Cref{def:obj} using the adamW~\citep{loshchilov2017decoupled} optimizer.

The second algorithm, \texttt{P-ShareLoRA(G)}, where G denotes Global, initializes the model by pre-training the LoRA module on the entire user dataset, using the configuration from \texttt{LoRA-Global}. Training then proceeds in the same manner as in \texttt{P-ShareLoRA(SI)}. 
The third algorithm, \texttt{P-ShareLoRA(WU)}, where WU denotes warm-up, employs a few preliminary warm-up steps using a global adaptation module (similar to \texttt{P-ShareLoRA(G)}) before proceeding with user-specific training. Following this phase, training continues as in \texttt{P-ShareLoRA(SI)}. Detailed pseudocode and parameter settings for each of these algorithms are provided in \Cref{apx:alg}.

We additionally include the shared representation method by \citet{park2024rlhf} as another baseline, abbreviated as “Share Rep” in \Cref{fig:fig1}. In this algorithm, the first 70\% of the reward model’s layers are frozen as the shared representation, while the remaining 30\% are treated as personalized heads.

\begin{figure}[t]
\centering
\includegraphics[width=0.8\textwidth]{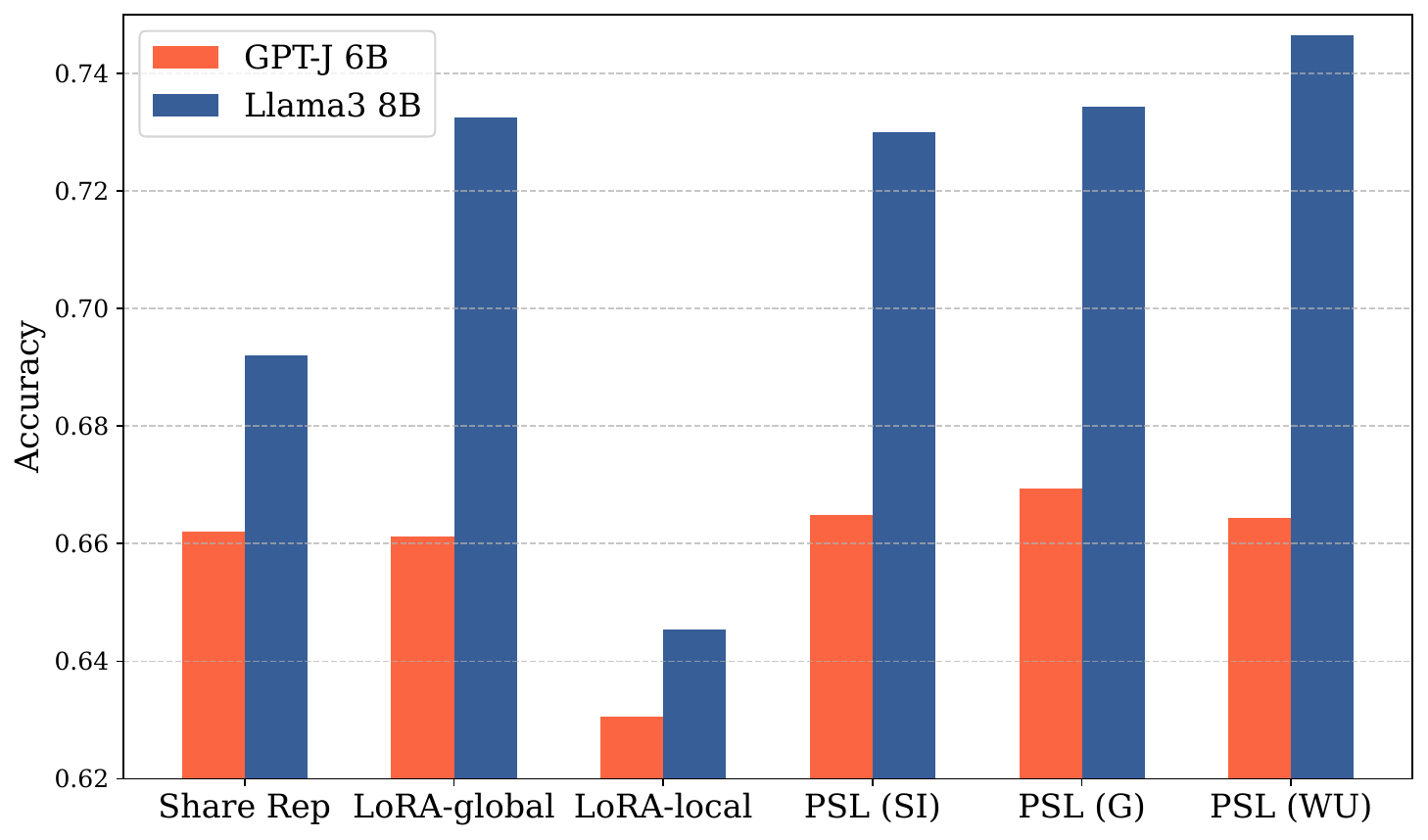}
\vspace{-0.1in}
\caption{Prediction Accuracy of Different Algorithms.}
\vspace{-0.2in}
\label{fig:fig1}
\end{figure}
\textbf{Results.}\label{sec:result}
For each method, we train it for a total of \(3\) epochs. Specifically, the global pretraining phase in \texttt{P-ShareLoRA(G)} is set to two epochs, while in \texttt{P-ShareLoRA(WU)} it is set to \(0.3\) epochs. Following these warm-up phases, we train \texttt{P-ShareLoRA(G)} and \texttt{P-ShareLoRA(WU)} for one and \(2.7\) epochs, respectively, ensuring that the total number of training steps remains uniform across all algorithms.

In \Cref{fig:fig1}, we present the results of reward model fine-tuning using different algorithms. The reported accuracy represents the average accuracy across the test datasets of the five labelers when preferences are estimated using each algorithm. The abbreviation PSL represents \texttt{P-ShareLoRA}.

We observe that for both GPT-J 6B and Llama-3 8B models, our proposed algorithms \texttt{P-ShareLoRA(G)} and \texttt{P-ShareLoRA(WU)} demonstrate performance improvements over other baseline algorithms. Specifically, \texttt{P-ShareLoRA(G)} achieves the most significant enhancement on GPT-J 6B, while \texttt{P-ShareLoRA(WU)} performs best on Llama-3 8B. These empirical results validate the effectiveness of our method, which leverages the shared components of LoRA modules to adapt personalized reward functions. Additional experimental results are presented in \Cref{apx:D.3}.

\section{Conclusion}

In this work, we introduced a novel algorithm that integrates LoRA into the personalized RLHF framework to effectively align LLMs with diverse user preferences. By applying LoRA to an aggregated parameter matrix, our method captures individual user preferences while leveraging shared structures, thereby improving the sample complexity and enjoying the computational efficiency of LoRA. Theoretical analysis demonstrates that \texttt{P-ShareLoRA} results in a low-rank approximation for the ground truth aggregated parameter matrix and achieves near-optimal policy performance, with performance discrepancies controlled by the diversity of user preferences. Empirical evaluations on the Reddit TL;DR dataset exhibit performance improvements compared to baseline algorithms.

\section*{Acknowledgement}
The work of R. Liu, D. Li and J. Yang was supported in part by the U.S. National Science Foundation under the grants ECCS-2133170 and ECCS-2318759. The work of P. Wang and C. Shen was supported in part by the U.S. National Science Foundation under the grants CNS-2002902, ECCS-2029978, ECCS-2143559, ECCS-2033671, CPS-2313110, and ECCS-2332060. 

\bibliographystyle{apalike}
\bibliography{refs,cpbib}

\clearpage
\onecolumn
\appendix
\centerline{{\fontsize{18}{18}\selectfont \textbf{Supplementary Materials}}}
\tableofcontents

\newpage

\section{Deferred Definitions and Preliminary Lemmas}\label{sec:deferred-def}

In our proof, we assume that all reward models are initialized from the same initial parameter matrix, i.e., \(\Thetav_i^{0} = \Thetav^{\init}\) for any \(i \in [N]\). We note that our results can be straightforwardly generalized to the case with heterogeneous initialization. Additionally, we use \(\mathbf{X}^{(N)}\) to represent the column-wise replication of matrix \(\mathbf{X}\) \(N\) times, i.e., \(\mathbf{X}^{(N)} = [\mathbf{X}, \dots, \mathbf{X}]\).

\subsection{Deferred Definitions}
Also, we introduce the following deferred definitions:

\begin{definition}[Principal Angle Distance \citep{jain2013low}]
    Given \(\Bv_1, \Bv_2 \in \mathbb{R}^{d \cdot k}\) with orthonormal columns, the principal angle distance between their column spaces is defined as
    {
    \begin{align*}
        \dist(\Bv_1, \Bv_2) = \frac{1}{\sqrt{2}} \| \Bv_1 \Bv_1^\top - \Bv_2 \Bv_2^\top \|_F = \| \Bv_1^\top \bar{\Bv}_2 \|_F,
    \end{align*}}
    where \(\bar{\Bv}_2\) is an orthonormal basis for the orthogonal complement of \(\mathrm{span}(\Bv_2)\), i.e., \(\mathrm{span}(\bar{\Bv}_2) = \mathrm{span}(\Bv_2)^\perp\).
\end{definition}

The principal angle distance is a standard metric for measuring the distance between subspaces \citep{jain2013low, collins2021exploiting}.

\begin{definition}[Bracketing Number for Single Reward~\citep{zhan2023provable}]
Consider the class $\mathcal{G}_{r}$ of functions mapping pairs of trajectories $(\tau_0, \tau_1) \in \mathcal{T} \cdot \mathcal{T}$ to preference probability vector. Specifically, each function $r \in \mathcal{G}_{r}$ maps $(\tau_0, \tau_1)$ to $P_{r}(\cdot \mid \tau_0, \tau_1) \in \mathbb{R}^{2}$. An $\epsilon$-bracket for $\mathcal{G}_{r}$ is a pair of functions $(g_1, g_2)$ mapping $\mathcal{T} \cdot \mathcal{T}$ to $\mathbb{R}^{2}$ such that for all $(\tau_0, \tau_1) \in \mathcal{T} \cdot \mathcal{T}$: (1). \(g_1(\tau_0, \tau_1) \leq g_2(\tau_0, \tau_1)\); (2). \(\| g_1(\tau_0, \tau_1) - g_2(\tau_0, \tau_1) \|_1 \leq \epsilon\)
The $\epsilon$-bracketing number of $\mathcal{G}_{r}$, denoted by $\mathcal{N}_{\mathcal{G}_{r}}(\epsilon)$, is the minimal number of $\epsilon$-brackets required to cover $\mathcal{G}_{r}$ in the following sense: for any function $r \in \mathcal{G}_{\rbm}$, there exists an $\epsilon$-bracket $(g_{b,1}, g_{b,2})$ such that for all $(\tau_0, \tau_1) \in \mathcal{T} \cdot \mathcal{T}$,
\[
    g_{b,1}(\tau_0, \tau_1) \leq P_{r}(\cdot \mid \tau_0, \tau_1) \leq g_{b,2}(\tau_0, \tau_1).
\]
\end{definition}

\begin{definition}[Concentrability Coefficient~\citep{zhan2023provable}]
\label{def:concentrability-coef}
Given a reward vector class \(\mathcal{G}_{\rbm}\), a human user \(i\), a target policy \(\pi_{\mathrm{tar}}\) (which could potentially be the optimal policy \(\pi_i^\star\) corresponding to the true reward \(r^\star_i\)), and a reference policy \(\mu_{\mathrm{ref}}\), the \emph{concentrability coefficient} is defined as:
\[
C_{\rbm}\left( \mathcal{G}_{\rbm}, \pi_{\mathrm{tar}}, \mu_{\mathrm{ref}}, i \right) := \max \left\{ 0,\ \sup_{\rbm \in \mathcal{G}_{\rbm}} \frac{ \mathbb{E}_{\tau_0 \sim \pi_{\mathrm{tar}},\ \tau_1 \sim \mu_{\mathrm{ref}}} \left[ r^\star_i(\tau_0) - r^\star_i(\tau_1) - r_i(\tau_0) + r_i(\tau_1) \right] }{ \sqrt{ \mathbb{E}_{\tau_0,\, \tau_1 \sim \mu_{\mathrm{ref}}} \left[ \left( r^\star_i(\tau_0) - r^\star_i(\tau_1) - r_i(\tau_0) + r_i(\tau_1) \right)^2 \right] } } \right\}.
\]
\end{definition}

\subsection{Preliminary Lemmas}

Before presenting the proof, we introduce a few important lemmas.

\begin{lemma}[(\citet{zhan2023provable}, Lemma 1, reward vector version)]
\label{lemma:mle}
For any $\delta \in (0,1]$, if $\rbm \in \Gc_{\rbm}$, with dataset $\hat{\Dc} = \cup_{i \in [N]} \hat{\Dc}_i$ where $\hat{\Dc}_i = \{(o_i^{(j)}, \tau_{i, 0}^{(j)}, \tau_{i,1}^{(j)})_{j \in [N_{p}]}\}$, $\tau_{i, 0}^{(j)} \sim \mu_0$, $\tau_{i, 1}^{(j)} \sim \mu_1$, and $o_i^{(j)} \sim P_{r^\star_i}(\cdot|\tau_0^{(j)}, \tau_1^{(j)})$, there exist $C_1 > 0$ such that 
\begin{align*}
    \sum_{i \in [N]} \sum_{j \in [N_{p}]} \log \left(\frac{P_{r_i}( o_{i}^{(j)} \mid \tau_{i, 0}^{(j)}, \tau_{i,1}^{(j)})}{P_{r^\star_i}(o_{i}^{(j)} \mid \tau_{i, 0}^{(j)}, \tau_{i,1}^{(j)})} \right) \leq C_1 \log(\Nc_{\Gc_{\rbm}}(1/(NN_p))/ \delta)
\end{align*} 
holds.
\end{lemma}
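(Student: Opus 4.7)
The plan is to apply the standard MLE/density-estimation recipe: combine an exponential-moment bound for a single reward vector with a bracketing cover of $\Gc_{\rbm}$. Since the $NN_p$ samples in $\hat{\Dc}$ are i.i.d.\ conditional on $\mathbf{r}^\star$, this essentially reduces to the Birg\'e--Massart argument for MLE log-likelihood ratios, adapted to the aggregated preference model of \Cref{def:func-class}.

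\emph{Step 1: pointwise exponential moment.} For any fixed $\mathbf{r} \in \Gc_{\rbm}$, conditional on $(\tau_0,\tau_1)$ and on the user index $i$, Cauchy--Schwarz gives
\[
\mathbb{E}_{o \sim P_{r^\star_i}(\cdot\mid\tau_0,\tau_1)}\!\left[\sqrt{P_{r_i}(o\mid\tau_0,\tau_1)/P_{r^\star_i}(o\mid\tau_0,\tau_1)}\right] = \sum_o \sqrt{P_{r_i}(o\mid\cdot)\,P_{r^\star_i}(o\mid\cdot)} \leq 1.
\]
Multiplying this inequality over the $NN_p$ independent samples and taking a tower expectation in the trajectories yields $\mathbb{E}\bigl[\exp\!\bigl(\tfrac{1}{2}\sum_{i,j}\log\tfrac{P_{r_i}}{P_{r^\star_i}}\bigr)\bigr] \leq 1$. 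A Chernoff--Markov step then produces, for any fixed $\mathbf{r}$, the pointwise tail
\[
\Pr\!\left[\sum_{i,j}\log\frac{P_{r_i}(o_i^{(j)}\mid\cdot)}{P_{r^\star_i}(o_i^{(j)}\mid\cdot)} \geq 2\log(1/\delta)\right] \leq \delta.
\]

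\emph{Step 2: bracketing to a uniform statement.} I then lift the pointwise bound to one uniform over $\Gc_{\rbm}$ via an $\epsilon$-bracket cover of size $\Nc_{\Gc_{\rbm}}(\epsilon)$ (\Cref{def:apx.A.1}) at resolution $\epsilon = 1/(NN_p)$. For any $\mathbf{r} \in \Gc_{\rbm}$, pick the bracket $(g_1^{(m)}, g_2^{(m)})$ containing $P_{\rbm}$; since $P_{r_i} \leq g_2^{(m)}$ pointwise, replacing $P_{r_i}$ by $g_2^{(m)}$ inside $\log$ only inflates the sum. Applying the moment argument of Step~1 to each of the $\Nc_{\Gc_{\rbm}}(\epsilon)$ upper-bracket surrogates and union bounding (i.e., rescaling $\delta \mapsto \delta/\Nc_{\Gc_{\rbm}}(\epsilon)$) delivers the claimed $\Oc\!\bigl(\log(\Nc_{\Gc_{\rbm}}(1/(NN_p))/\delta)\bigr)$ bound simultaneously for every $\mathbf{r} \in \Gc_{\rbm}$.

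\emph{Main obstacle.} The difficulty is that $g_2^{(m)}$ is in general not a probability kernel, so the Cauchy--Schwarz identity used in Step~1 does not apply verbatim to the surrogate. I would handle this using $g_2^{(m)} \leq P_{r_i} + \epsilon$ pointwise together with a uniform lower bound on $P_{r^\star_i}$; the latter follows because the Frobenius constraint $\|\Thetav_i\|_F\leq B$ in \Cref{def:para-set} combined with \Cref{assumption:reward} yields uniformly bounded rewards and hence two-sided-bounded preference probabilities. A first-order expansion then gives $\sqrt{g_2^{(m)}/P_{r^\star_i}} \leq \sqrt{P_{r_i}/P_{r^\star_i}} + \Oc(\epsilon)$, so the Bhattacharyya affinity of the surrogate degrades by only an additive $\Oc(\epsilon)$ per sample; the product over $NN_p$ samples is $(1+\Oc(\epsilon))^{NN_p} \leq e^{\Oc(1)}$ once $\epsilon = 1/(NN_p)$. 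This $\Oc(1)$ penalty is absorbed into $C_1$, and the target rate is preserved.
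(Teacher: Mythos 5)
The paper gives no proof of this lemma: it is imported (in reward-vector form) from \citet{zhan2023provable}, Lemma~1, so there is no in-paper argument to compare against. Your reconstruction follows exactly the standard route that the cited source uses --- a Bhattacharyya-affinity exponential-moment bound for a fixed candidate, a Chernoff--Markov step, and a union bound over an $\epsilon$-bracket cover at resolution $\epsilon = 1/(NN_p)$, using $P_{r_i}\le g_2$ pointwise to pass from the upper-bracket surrogate back to an arbitrary $\rbm\in\Gc_{\rbm}$. The steps are in the right order and the argument closes.

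One remark on what you call the main obstacle. The detour through a uniform lower bound on the preference probabilities is unnecessary (though it is available here, since bounded parameters and \Cref{assumption:reward} keep $\Phi(\cdot)$ away from $0$ and $1$). The standard fix applies Cauchy--Schwarz directly to the surrogate, suppressing the trajectory arguments:
\[
\mathbb{E}_{o\sim P_{r^\star_i}}\Bigl[\sqrt{g_2(o)/P_{r^\star_i}(o)}\Bigr]
=\sum_{o}\sqrt{g_2(o)}\sqrt{P_{r^\star_i}(o)}
\le\Bigl(\sum_{o}g_2(o)\Bigr)^{1/2}
\le\sqrt{1+\epsilon},
\]
because $\sum_{o}g_2(o)\le\sum_{o}P_{r_i}(o)+\|g_2-g_1\|_1\le 1+\epsilon$. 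This yields the same $(1+\epsilon)^{NN_p/2}\le e^{1/2}$ penalty with no expansion of $\sqrt{x+\epsilon}$ and no positivity assumption on the likelihoods, and it is the version that survives in more general function classes. Note also that your first-order expansion, as written, requires a lower bound on $P_{r_i}$ and not only on $P_{r^\star_i}$, since the correction term is of order $\epsilon/\sqrt{P_{r_i}P_{r^\star_i}}$; both bounds do hold in this setting, so your variant is still valid, but you should state that dependence explicitly. Finally, the lemma as printed omits the phrase ``with probability at least $1-\delta$,'' which your proof correctly supplies.
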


\begin{lemma}[(\citet{liu2022partially}, Proposition 14, scalar version)]
For any $\delta \in (0,1]$, with probability at least $1-\delta$, if $r \in \Gc_{r}'$, with dataset $\hat{\Dc} =  \{(o^{(j)}, \tau_{ 0}^{(j)}, \tau_{1}^{(j)})_{j \in [M]}\}$ where $\tau_0^{(j)} \sim \mu_0$, $\tau_1^{(j)} \sim \mu_1$, and $o^{(j)} \sim P_{r^\star}(\cdot|\tau_0^{(j)}, \tau_1^{(j)})$,  
\label{lemma:l2distance-scalar}
    \begin{align*}
        &\Eb_{\mu_0, \mu_1}\left[ \norm{P_{r} ( \cdot \mid \tau_{ 0}^{(j)}, \tau_{1}^{(j)}) - P_{r^\star} ( \cdot \mid \tau_{0}^{(j)}, \tau_{1}^{(j)})}_1^2 \right] \\
        &\leq \frac{C_2}{M} \left(    \sum_{j \in [M]} \log \left(\frac{P_{r^\star}( o^{(j)} \mid \tau_{ 0}^{(j)}, \tau_{1}^{(j)})}{P_{r}( o^{(j)} \mid \tau_{ 0}^{(j)}, \tau_{1}^{(j)})}\right) + \log(\Nc_{\Gc_{r}'}(1/M)/ \delta)\right)
    \end{align*}
holds where $C_2>0$ is a constant.
\end{lemma}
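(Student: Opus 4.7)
The plan is to establish this MLE concentration bound via a classical bracketing-entropy argument from empirical process theory. Since the statement is cited from \citet{liu2022partially}, I would follow the well-established Wong--Shen / Birg\'e--Massart template: relate the expected $L_1$ divergence between $P_r$ and $P_{r^\star}$ to the empirical log-likelihood ratio through a moment generating function (MGF) computation, and pay a complexity penalty captured by $\Nc_{\Gc_{r}'}(1/M)$.

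\textbf{Step 1: Reduce $L_1$ to squared Hellinger.} I would first invoke the deterministic inequality $\|P-Q\|_1^2 \le 4\,H^2(P,Q)$, which holds for any two probability distributions on the binary outcome space (here, the preference label). Taking expectations over $(\tau_0,\tau_1) \sim \mu_0 \otimes \mu_1$ reduces the task to bounding $\Eb[H^2(P_r,P_{r^\star})]$.

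\textbf{Step 2: MGF bound for a single $r$.} For any fixed $r\in\Gc_{r}'$, the key identity
\[
\Eb_{o\sim P_{r^\star}}\!\bigl[\sqrt{P_r(o)/P_{r^\star}(o)}\bigr] \;=\; 1 - \tfrac{1}{2}H^2(P_r,P_{r^\star}),
\]
combined with $1-x \le e^{-x}$ and the independence of the $M$ samples, gives the MGF bound $\Eb\!\bigl[\prod_{j=1}^M \sqrt{P_r(o^{(j)})/P_{r^\star}(o^{(j)})}\bigr] \le \exp(-M\,\Eb[H^2]/2)$. Markov's inequality applied to this product likelihood ratio then yields, with probability at least $1-\delta$,
\[
M\,\Eb\!\bigl[H^2(P_r,P_{r^\star})\bigr] \;\le\; \sum_{j=1}^M \log\!\frac{P_{r^\star}(o^{(j)}\mid \tau_0^{(j)},\tau_1^{(j)})}{P_r(o^{(j)}\mid \tau_0^{(j)},\tau_1^{(j)})} \;+\; 2\log(1/\delta).
\]

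\textbf{Step 3: Uniformize over $\Gc_{r}'$ via bracketing.} To extend the above from a single $r$ to every $r\in\Gc_{r}'$, I would introduce an $\epsilon$-bracket cover at scale $\epsilon = 1/M$, consisting of $\Nc_{\Gc_{r}'}(1/M)$ pairs $(g_{b,1},g_{b,2})$ with $g_{b,1}\le P_r\le g_{b,2}$ pointwise and $\|g_{b,2}-g_{b,1}\|_1\le 1/M$. Applying the MGF bound of Step 2 to each upper bracket $g_{b,2}$ (which is not a probability measure, but satisfies $\int g_{b,2}\le 1+1/M$) and then union-bounding over the $\Nc_{\Gc_{r}'}(1/M)$ brackets replaces $\log(1/\delta)$ by $\log(\Nc_{\Gc_{r}'}(1/M)/\delta)$. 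The pointwise inequality $P_r\le g_{b,2}$ gives $\log(P_{r^\star}/P_r)\ge \log(P_{r^\star}/g_{b,2})$, which transfers the bracket bound back to every $r$ lying in that bracket. Combining Steps 1--3 with the Hellinger-to-$L_1$ reduction delivers the claimed inequality with a universal constant $C_2>0$.

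\textbf{Main obstacle.} The delicate point is handling the bracket discretization without losing the rate: because $g_{b,2}$ is not itself a distribution, its $L_1$ mass exceeds $1$ by up to $1/M$, which introduces a multiplicative slack of $(1+1/M)^M = O(1)$ in the MGF. The proof must absorb this slack into the constant $C_2$, which is precisely why the cover is taken at scale $1/M$ rather than at a polynomially larger quantity. Verifying that the gap between $P_r$ and $g_{b,2}$ at each sample point integrates to at most an $O(1)$ extra factor in the product likelihood (rather than $e^{\sqrt{M}}$, which a naive Cauchy--Schwarz step would give) is where one must be most careful; beyond this, the argument is a direct application of the Birg\'e--Massart bracketing machinery for likelihood-based estimation.
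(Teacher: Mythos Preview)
Your proposal is correct and follows the standard Wong--Shen / Birg\'e--Massart bracketing-entropy argument that underlies Proposition~14 of \citet{liu2022partially}. Note, however, that the paper does not supply its own proof of this lemma: it is listed among the preliminary lemmas and simply cited from \citet{liu2022partially}, so there is no ``paper's proof'' to compare against beyond the original reference, whose argument your sketch accurately reproduces.
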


\begin{lemma}[(\citet{liu2022partially}, Proposition 14, vector version)]
For any $\delta \in (0,1]$, with probability at least $1-\delta$, if $\rbm \in \Gc_{\rbm}'$, with dataset $\hat{\Dc} = \cup_{i \in [N]} \hat{\Dc}_i$ where $\hat{\Dc}_i = \{(o_i^{(j)}, \tau_{i, 0}^{(j)}, \tau_{i,1}^{(j)})_{j \in [N_{p}]}\}$, $\tau_{i, 0}^{(j)} \sim \mu_0$, $\tau_{i, 1}^{(j)} \sim \mu_1$, and $o_i^{(j)} \sim P_{r^\star_i}(\cdot|\tau_0^{(j)}, \tau_1^{(j)})$, 
\label{lemma:l2distance}
    \begin{align*}
        &\frac{1}{N} \sum_{i \in [N]} \Eb_{\mu_0, \mu_1}\left[ \norm{P_{r_i} ( \cdot \mid \tau_{ 0}^{(j)}, \tau_{1}^{(j)}) - P_{r^\star_i} ( \cdot \mid \tau_{0}^{(j)}, \tau_{1}^{(j)})}_1^2 \right] 
        \\
        &\qquad \leq \frac{C_2}{NN_p} \left(   \sum_{i \in [N]} \sum_{j \in [N_p]} \log \left(\frac{P_{r^\star_i}( o^{(j)} \mid \tau_{ 0}^{(j)}, \tau_{1}^{(j)})}{P_{r_i}( o^{(j)} \mid \tau_{ 0}^{(j)}, \tau_{1}^{(j)})}\right) + \log(\Nc_{\Gc_{\rbm}'}(1/(NN_p))/ \delta)\right)
    \end{align*}
holds where $C_2 > 0$ is a constant. 
\end{lemma}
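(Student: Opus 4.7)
The statement is the vector-valued analogue of the scalar MLE concentration Lemma~\ref{lemma:l2distance-scalar}, and my plan is to reduce the vector case to the scalar case via a standard ``stacking'' trick, then invoke Lemma~\ref{lemma:l2distance-scalar} as a black box. The idea is that the per-client preference-learning problem on the aggregated dataset $\hat{\Dc}=\cup_i\hat{\Dc}_i$ is equivalent to a single preference-learning problem in which the client index is folded into the context of each query.

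\textbf{Step 1 (stacking reduction).} I would augment the trajectory space to $\tilde{\Tc}=[N]\times\Tc$. Given any reward vector $\rbm=(r_1,\dots,r_N)\in\Gc'_{\rbm}$, define an induced scalar reward $\tilde r$ on $\tilde\Tc$ by $\tilde r(i,\tau)=r_i(\tau)$, and let $\tilde\Gc_r=\{\tilde r:\rbm\in\Gc'_{\rbm}\}$. Since the preference kernel $P_{\tilde r}(\cdot\mid (i,\tau_0),(i,\tau_1))$ coincides with $P_{r_i}(\cdot\mid\tau_0,\tau_1)$, the induced class $\tilde\Gc_r$ is in natural bijection with $\Gc'_{\rbm}$, and a straightforward check of Definition~\ref{def:apx.A.1} shows $\Nc_{\tilde\Gc_r}(\epsilon)=\Nc_{\Gc'_{\rbm}}(\epsilon)$: any $\epsilon$-bracket on $\Gc'_{\rbm}$ induces an $\epsilon$-bracket on $\tilde\Gc_r$ and vice versa, because the $\ell_1$ closeness required by the bracket is a uniform-in-$(\tau_0,\tau_1,i)$ condition in both formulations.

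\textbf{Step 2 (apply scalar lemma).} I would regard the aggregated dataset $\hat\Dc$ of size $M=NN_p$ as an i.i.d.\ sample for the scalar problem on $\tilde\Tc$, where the marginal of $i$ is uniform on $[N]$ (because each client contributes exactly $N_p$ pairs, so the empirical marginal is uniform) and conditional on $i$ the trajectories are drawn from $\mu_0,\mu_1$; ground-truth labels are generated from $P_{\tilde r^\star}$. Invoking Lemma~\ref{lemma:l2distance-scalar} with $M=NN_p$ and function class $\tilde\Gc_r$, the right-hand side becomes
\begin{align*}
\frac{C_2}{NN_p}\!\Bigl(\sum_{i\in[N]}\sum_{j\in[N_p]}\log\tfrac{P_{r_i^\star}(o_i^{(j)}\mid\tau_{i,0}^{(j)},\tau_{i,1}^{(j)})}{P_{r_i}(o_i^{(j)}\mid\tau_{i,0}^{(j)},\tau_{i,1}^{(j)})}+\log\tfrac{\Nc_{\Gc'_{\rbm}}(1/(NN_p))}{\delta}\Bigr),
\end{align*}
using the log-likelihood identity $\log(P_{\tilde r^\star}/P_{\tilde r})=\log(P_{r_i^\star}/P_{r_i})$ for each sample and the bracketing number identity from Step~1.

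\textbf{Step 3 (identify the left-hand side and conclude).} By the tower rule with $i\sim\mathrm{Unif}[N]$, the scalar-version LHS evaluates to
\begin{align*}
\Eb_{\tilde\mu_0,\tilde\mu_1}\!\bigl[\|P_{\tilde r}-P_{\tilde r^\star}\|_1^2\bigr]=\frac{1}{N}\sum_{i=1}^{N}\Eb_{\mu_0,\mu_1}\!\bigl[\|P_{r_i}(\cdot\mid\tau_0,\tau_1)-P_{r_i^\star}(\cdot\mid\tau_0,\tau_1)\|_1^2\bigr],
\end{align*}
which matches the stated LHS exactly, finishing the proof. \textbf{Main subtlety.} The only nontrivial point is that the dataset is not genuinely drawn i.i.d.\ from a mixture but has a fixed $N_p$-per-client structure; however, the scalar lemma's proof only uses independence of the $M$ samples and the log-likelihood identity, both of which hold here, so the deterministic equal-share structure (uniform empirical $i$-marginal) is actually slightly stronger than an i.i.d.\ mixture draw and the argument carries through unchanged. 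An alternative route---applying Lemma~\ref{lemma:l2distance-scalar} separately per client and union-bounding---would force an $N$-fold blow-up of the bracketing-number term and lose the unified complexity control, which is precisely why the stacking reduction is preferable.
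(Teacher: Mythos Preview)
The paper does not provide its own proof of this lemma: it is listed among the ``Preliminary Lemmas'' and is simply attributed to \citet{liu2022partially}, Proposition~14 (adapted to the reward-vector setting), so there is no paper proof to compare against. Your stacking reduction to the scalar Lemma~\ref{lemma:l2distance-scalar} is the natural way to make the adaptation explicit, and the overall strategy is sound.

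Two small points are worth tightening. First, your claimed equality $\Nc_{\tilde\Gc_r}(\epsilon)=\Nc_{\Gc'_{\rbm}}(\epsilon)$ is stronger than what you actually establish or need. A vector $\epsilon$-bracket in Definition~\ref{def:apx.A.1} controls the $\ell_1$ distance over all $2N$ coordinates, so restricting to a single client $i$ gives a scalar $\epsilon$-bracket on the augmented space; this yields $\Nc_{\tilde\Gc_r}(\epsilon)\le \Nc_{\Gc'_{\rbm}}(\epsilon)$, which is exactly the direction your Step~2 substitution requires. The reverse inequality would incur a factor of $N$ and is neither true in general nor needed. Second, your augmentation $\tilde\Tc=[N]\times\Tc$ formally allows cross-client pairs $((i,\tau_0),(i',\tau_1))$ with $i\neq i'$; since such pairs never appear in the data nor in the LHS expectation, it is cleaner to take the sample space to be $[N]\times\Tc\times\Tc$ with a single shared index $i$, which sidesteps the issue entirely and makes the bracketing-number comparison immediate.

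Your handling of the stratified (non-i.i.d.) structure is reasonable: the proof of Proposition~14 in \citet{liu2022partially} is a martingale/bracketing argument that only requires conditional independence of the labels given the covariates and independence across samples, both of which hold under the fixed $N_p$-per-client design. The population LHS then automatically comes out as the uniform-over-$i$ average, matching the stated bound.
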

\begin{lemma}\label{lemma:4}
    For any use \(i\in[N]\), we have the following inequality holds:
    \begin{align*}
    \frac{1}{N} \sum_{i \in [N]} \left|\log\Phi(r_i^\star(\tau_0) - r_i^\star(\tau_1)) - \log\Phi(r_i^\diamond(\tau_0) - r_i^\diamond(\tau_1))\right|
    \leq 2LL_1 \sqrt{\frac{\tail}{N}}.
\end{align*}
\end{lemma}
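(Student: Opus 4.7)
My plan is to peel the inequality apart in four moves: first reduce the log-likelihood gap to a reward-value gap by the Lipschitzness of $\log\Phi$; then reduce the reward-value gap to a parameter gap via Assumption 1; then pass from an average of Frobenius norms to a single aggregated Frobenius norm using Cauchy--Schwarz; and finally recognize the latter as $\tail$ via the Eckart--Young theorem and the defining property of $\Theta^\diamond$.

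\textbf{Step 1 (Lipschitzness of $\log\Phi$).} Since $\log\Phi$ (the log-sigmoid in the Bradley--Terry model) has derivative $1-\Phi\in(0,1)$, it is $L$-Lipschitz for some absolute constant $L$. Applied to each summand and followed by the triangle inequality,
\begin{align*}
\bigl|\log\Phi(r_i^\star(\tau_0)-r_i^\star(\tau_1)) - \log\Phi(r_i^\diamond(\tau_0)-r_i^\diamond(\tau_1))\bigr|
\leq L\bigl(|r_i^\star(\tau_0)-r_i^\diamond(\tau_0)| + |r_i^\star(\tau_1)-r_i^\diamond(\tau_1)|\bigr).
\end{align*}

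\textbf{Step 2 (from reward values to parameters).} By Assumption 1, $\|\nabla_\theta r_{\boldsymbol{\Theta}}(\tau)\|\leq L_1$, so $r_{\boldsymbol{\Theta}}(\tau)$ is $L_1$-Lipschitz in the vectorized parameter $\theta$. Since $r_i^\star$ is parameterized by $\boldsymbol{\Theta}_i^\star = \boldsymbol{\Theta}_i^{\init}+\Delta\boldsymbol{\Theta}_i^\star$ and $r_i^\diamond$ by $\boldsymbol{\Theta}_i^{\init}+\boldsymbol{\Theta}_i^\diamond$, the mean-value inequality gives $|r_i^\star(\tau)-r_i^\diamond(\tau)|\leq L_1\|\Delta\boldsymbol{\Theta}_i^\star-\boldsymbol{\Theta}_i^\diamond\|_F$ for any $\tau$. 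Combining with Step~1 yields the pointwise bound
\begin{align*}
\bigl|\log\Phi(r_i^\star(\tau_0)-r_i^\star(\tau_1))-\log\Phi(r_i^\diamond(\tau_0)-r_i^\diamond(\tau_1))\bigr| \leq 2LL_1\,\|\Delta\boldsymbol{\Theta}_i^\star-\boldsymbol{\Theta}_i^\diamond\|_F.
\end{align*}

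\textbf{Step 3 (average to aggregated norm) and Step 4 (Eckart--Young).} Averaging over $i\in[N]$ and applying Cauchy--Schwarz (equivalently, Jensen's inequality applied to the concave square root),
\begin{align*}
\frac{1}{N}\sum_{i\in[N]}\|\Delta\boldsymbol{\Theta}_i^\star-\boldsymbol{\Theta}_i^\diamond\|_F
\leq \sqrt{\frac{1}{N}\sum_{i\in[N]}\|\Delta\boldsymbol{\Theta}_i^\star-\boldsymbol{\Theta}_i^\diamond\|_F^2}
= \sqrt{\frac{\|\Delta\boldsymbol{\Theta}^\star-\boldsymbol{\Theta}^\diamond\|_F^2}{N}}.
\end{align*}
By definition~\eqref{def:5.1}, $\boldsymbol{\Theta}^\diamond$ is the best rank-$k$ Frobenius approximation of $\Delta\boldsymbol{\Theta}^\star$, so by Eckart--Young the residual energy is exactly the tail of the singular values, $\|\Delta\boldsymbol{\Theta}^\star-\boldsymbol{\Theta}^\diamond\|_F^2=\sum_{j=k+1}^{\min\{d_1,Nd_2\}}\sigma_j^2=\tail$. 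Chaining the three inequalities gives $\frac{1}{N}\sum_i|\log\Phi(\cdot)-\log\Phi(\cdot)|\leq 2LL_1\sqrt{\tail/N}$, as claimed.

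\textbf{Expected obstacle.} None of the steps is technically deep; the only place that requires care is fixing the Lipschitz constant of $\log\Phi$ and confirming that the statement is meant as a pointwise-in-$(\tau_0,\tau_1)$ inequality averaged over users (the quantifier ``for any user $i$'' in the statement appears to be a typo, since $i$ is the summation index). Once this reading is adopted, the chain of Lipschitz bounds and Eckart--Young delivers the bound directly, with no probabilistic or optimization machinery required.
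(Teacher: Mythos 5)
Your proposal is correct and follows essentially the same route as the paper's proof: Lipschitzness of $\log\Phi$, then Lipschitzness of $r_{\boldsymbol{\Theta}}(\tau)$ in the parameters via \Cref{assumption:reward}, then Cauchy--Schwarz to pass from the average of per-user Frobenius norms to the aggregated norm, and finally Eckart--Young to identify $\|\Delta\boldsymbol{\Theta}^\star-\boldsymbol{\Theta}^\diamond\|_F^2$ with $\tail$. The only cosmetic difference is the order of operations (you split via the triangle inequality and bound parameters before averaging, while the paper applies Cauchy--Schwarz to the reward discrepancies first), and your reading of the statement as a pointwise-in-$(\tau_0,\tau_1)$ bound averaged over users matches the paper's intent.
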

\begin{proof}
 From the \(L\)-Lipschitz continuity of the function \(\log\Phi(x)\), for any trajectories \(\tau_0\) and \(\tau_1\), we have
\begin{align*}
    \left|\log\Phi(r_i^\star(\tau_0) - r_i^\star(\tau_1)) - \log\Phi(r_i^\diamond(\tau_0) - r_i^\diamond(\tau_1))\right|
    \leq L \left|r_i^\star(\tau_0) - r_i^\star(\tau_1) - r_i^\diamond(\tau_0) + r_i^\diamond(\tau_1)\right|.
\end{align*}
Recalling that \(r_i^\star(\tau) = r(\tau;\Thetav_i^\star)\) and \(r_i^\diamond(\tau) = r(\tau;\Thetav_i^\diamond)\), from the \(L_1\)-Lipschitz continuity of the function \(r(\tau;\Thetav)\) with respect to \(\Thetav\), we have
\begin{align*}
    \left|r_i^\star(\tau_0) - r_i^\star(\tau_1) - r_i^\diamond(\tau_0) + r_i^\diamond(\tau_1)\right|
    \leq 2L' \|\Thetav_i^\star - \Thetav_i^\diamond\|_F.
\end{align*}
Therefore,
\begin{align*}
    &\frac{1}{N}\sum_{i \in [N]} \left|r_i^\star(\tau_0) - r_i^\star(\tau_1) - r_i^\diamond(\tau_0) + r_i^\diamond(\tau_1)\right|\\
    &\leq \sqrt{\frac{1}{N} \sum_{i \in [N]} \left(r_i^\star(\tau_0) - r_i^\star(\tau_1) - r_i^\diamond(\tau_0) + r_i^\diamond(\tau_1)\right)^2} \\
    &\leq 2L_1 \sqrt{\frac{1}{N} \|\Thetav^\star - \Thetav^\diamond\|_F^2}.
\end{align*}
Note that \(\Thetav^\diamond\) can be derived from the truncated singular value decomposition (SVD) of \(\Thetav^\star\), retaining only the top \(k\) singular values and their corresponding singular vectors \citep{golub2013matrix,liu2024federated}. Consequently, we have
\begin{align*}
    \frac{1}{N} \sum_{i \in [N]} \left|r_i^\star(\tau_0) - r_i^\star(\tau_1) - r_i^\diamond(\tau_0) + r_i^\diamond(\tau_1)\right|
    \leq 2L_1 \sqrt{\frac{\tail}{N}}.
\end{align*}
Therefore, we finally have
\begin{align*}
    \frac{1}{N} \sum_{i \in [N]} \left|\log\Phi(r_i^\star(\tau_0) - r_i^\star(\tau_1)) - \log\Phi(r_i^\diamond(\tau_0) - r_i^\diamond(\tau_1))\right|
    \leq 2LL_1 \sqrt{\frac{\tail}{N}}.
\end{align*}
    
\end{proof}

\begin{lemma}\label{lemma:6}
For local reward models parameterized by \(\Thetav_1, \dots, \Thetav_N\), suppose there exists a constant \(\delta > 0\) such that
\begin{align*}
    \sum_{i \in [N]} \mathbb{E}_{\mu_0, \mu_1} \left[ \left\| P_{\Thetav_i} \left( \cdot \mid \tau_{i,0}^{(j)}, \tau_{i,1}^{(j)} \right) - P_{\Thetav^\star_i} \left( \cdot \mid \tau_{i,0}^{(j)}, \tau_{i,1}^{(j)} \right) \right\|^2 \right] \leq \delta,
\end{align*}
then, for some constant \(C > 0\), it holds that \(\dist(\Bv, \Bv^\diamond) \leq \frac{\|\Thetav - \Thetav^\star\|_F^2}{(\delta')^2} \leq C \frac{\delta}{N \nu}\) where \(\nu = \sigma_k\left(\frac{(\Thetav^\star)^T \Thetav^\star}{N}\right)\).

\end{lemma}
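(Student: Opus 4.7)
The plan is to chain three conversions: preference-distribution distance $\to$ reward-gap discrepancy $\to$ parameter-matrix distance $\to$ principal-angle distance. The three passes rely on, respectively, the bi-Lipschitz property of the BTL link, a local identifiability argument together with \Cref{assum:2}, and Wedin's $\sin\Theta$ theorem.

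For the first pass, I would exploit $P_{\Thetav_i}(\cdot \mid \tau_0, \tau_1) = \Phi(r_{\Thetav_i}(\tau_0) - r_{\Thetav_i}(\tau_1))$ under the BTL model. Because the reward gaps lie in a bounded interval (by \Cref{assumption:reward} together with $\|\Thetav_i\|_F \leq B$), the sigmoid $\Phi$ is bi-Lipschitz on this range, which gives
\[
\|P_{\Thetav_i}(\cdot\mid\tau_0,\tau_1)-P_{\Thetav_i^\star}(\cdot\mid\tau_0,\tau_1)\|_1^2 \gtrsim \bigl(r_{\Thetav_i}(\tau_0)-r_{\Thetav_i}(\tau_1)-r_{\Thetav_i^\star}(\tau_0)+r_{\Thetav_i^\star}(\tau_1)\bigr)^2.
\]
Summing over $i$, taking expectations and invoking the hypothesis would yield $\sum_i D_{\Thetav_i,\Thetav_i^\star}(\mu_0,\mu_1) \lesssim \delta$.

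For the second pass, I would invoke \Cref{assum:2} to replace $D_{\Thetav_i,\Thetav_i^\star}$ by its empirical counterpart up to constants, and then use a local identifiability / strong-convexity property of the reward parameterization to assert that $\|\Thetav_i - \Thetav_i^\star\|_F^2 \lesssim D_{\Thetav_i,\Thetav_i^\star}(\mu_0,\mu_1)$. Intuitively this says that the sampling distributions $\mu_0,\mu_1$ excite the parameters richly enough that distribution-level reward-gap discrepancies force proportional parameter discrepancies; in the linear case this reduces to a positive-definite feature covariance. Summing over $i$ gives $\|\Thetav - \Thetav^\star\|_F^2 \lesssim \delta$. For the third pass, observe that $\Thetav - \Thetav^\star = \Delta\Thetav - \Delta\Thetav^\star$, that $\Bv^\diamond$ is the top-$k$ left singular subspace of $\Delta\Thetav^\star$, and that $\Bv$ is the column space of the rank-$k$ matrix $\Delta\Thetav$. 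Wedin's $\sin\Theta$ theorem then yields
\[
\dist^2(\Bv,\Bv^\diamond)\;\leq\;\frac{\|\Delta\Thetav - \Delta\Thetav^\star\|_F^2}{\sigma_k^2(\Delta\Thetav^\star)}\;=\;\frac{\|\Thetav - \Thetav^\star\|_F^2}{N\nu}\;\lesssim\;\frac{\delta}{N\nu},
\]
matching the stated conclusion under the identification $(\delta')^2 = \sigma_k^2 = N\nu$.

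The hardest step will be the reverse identifiability inequality in the second pass: translating a distribution-level reward-gap discrepancy bound into a Frobenius-norm bound on the parameter error. This is the only place where the interaction between the parameterization and the sampling distributions $\mu_0,\mu_1$ enters nontrivially, and it needs a non-degeneracy / excitation condition on $\mu_0,\mu_1$ (analogous to a lower-bounded minimum eigenvalue of the feature covariance in the linear case) that is presumably absorbed into the constant $C$ for the general reward classes considered here.
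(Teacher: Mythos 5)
Your proposal matches the paper's proof in essence: the paper likewise passes from the preference-distribution distance to the parameter distance (asserting, in a single line attributed to ``the Mean Value Theorem,'' that $\|\Thetav-\Thetav^\star\|_F^2 \leq C\sum_{i}\mathbb{E}_{\mu_0,\mu_1}[\|P_{\Thetav_i}-P_{\Thetav_i^\star}\|^2]$) and then applies the Davis--Kahan theorem with gap $\delta'=\sigma_k(\Thetav^\star)$, which is interchangeable with your use of Wedin's $\sin\Theta$ theorem for the non-symmetric matrices here. The reverse-identifiability step you flag as the hardest is indeed the crux --- the paper supplies no excitation condition on $\mu_0,\mu_1$ either and simply absorbs it into the constant $C$ --- so your decomposition into link-inversion plus identifiability makes explicit exactly what the paper leaves implicit (and your detour through the empirical quantity via the uniform-concentration assumption is unnecessary, since the hypothesis is already stated at the population level).
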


\begin{proof}
From the Mean Value Theorem, there exists a constant \(C > 0\) such that 
\begin{align*}
    \|\Thetav - \Thetav^\star\|_F^2 \leq C \sum_{i \in [N]} \Eb_{\mu_0, \mu_1} \left[ \norm{P_{\Thetav_i} \left( \cdot \mid \tau_{i, 0}^{(j)}, \tau_{i,1}^{(j)} \right) - P_{\Thetav^\star_i} \left( \cdot \mid \tau_{i, 0}^{(j)}, \tau_{i,1}^{(j)} \right)}^2 \right] \leq C\delta.
\end{align*}
Define \(\delta' := \min_{1 \leq i \leq k, k+1 \leq j \leq \min\{d_1, Nd_2\}} \left|\sigma_i(\Thetav^\star) - \sigma_j(\Thetav)\right|\). Then, we observe that
\begin{align*}
    \delta' = \min_{1 \leq i \leq k, k+1 \leq j \leq \min\{d_1, Nd_2\}} \left|\sigma_i(\Thetav^\star) - \sigma_j(\Thetav)\right| = \sigma_k(\Thetav^\star).
\end{align*}
Next, by applying the Davis-Kahan Theorem, we obtain
\begin{align*}
    \text{dist}^2(\Bv, \Bv^\diamond) \leq \frac{\|\Thetav - \Thetav^\star\|_F^2}{(\delta')^2} \leq C \frac{\delta}{\sigma_k^2(\Thetav^\star)} = C \frac{\delta}{N\nu}.
\end{align*}
This is the desired result.
\end{proof}

\section{Training from Scratch}\label{sec:apx.A.1}

In this section, we present our theoretical analysis, focusing on the case where the initialization is set to zero, i.e., the initial parameter matrix is \(\Thetav^{\text{0}} = \mathbf{0}^{d_1 \cdot d_2}\). Consequently, \(\Delta \Thetav^\star = \Thetav^\star\).

With zero initialization, we define the class of reward functions \(\mathcal{G}'_{\mathbf{r}}\), in which a low-rank adaptation matrix with shared representations is learned as the parameter matrix for each individual reward function. Specifically, \(\mathcal{G}'_{\mathbf{r}}\) is defined as:
\[
\mathcal{G}'_{\rbm} = \left\{ \left( r_{\boldsymbol{\Theta}_i}(\cdot) \right)_{i \in [N]} \,\Big|\, \Thetav \in \mathbb{R}^{d_1 \cdot N d_2},\ \text{rank}(\Thetav) = k,\ \|\Thetav_i\|_F \leq B,\ \forall i \in [N] \right\},
\]
where \(\Thetav\) denotes the aggregated parameter matrix across all individuals, subject to a rank constraint \(k\). Additionally, each individual parameter matrix \(\Thetav_i\) satisfies the Frobenius norm constraint \(\|\Thetav_i\|_F \leq B\). We state our theoretical results under zero initialization as follows:

\begin{restatable}{theorem}{labelcorrect}
    \label{cor:label-is-correct}
    \emph{(Closeness between { \(\hat{\Bv}\)} and { \(\Bv^\diamond\)}).}
    For any \(\delta \in (0,1]\), with probability at least \(1 - \delta\), it holds that
    {
    \begin{align*}
        \dist(\hat{\Bv}, \Bv^\diamond) \leq c_1\sqrt{
        \frac{1}{NN_p\nu} \log(\Nc_{\Gc'_{\rbm}}(1/(NN_p))/ \delta)+\frac{1}{\nu}\sqrt{\frac{\tail}{N}}}.
    \end{align*}}
where \(c_1 > 0\) is a constant, \(\nu\) is the condition number as defined earlier, \(\tail\) represents the aggregate tail singular values and \(\mathcal{N}_{\mathcal{G}'_{\rbm}}(\cdot)\) denotes the beacketing number of the function class \(\mathcal{G}'_{\rbm}\).
\end{restatable}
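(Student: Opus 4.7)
The overall strategy is to combine the MLE optimality of $\hat{\Thetav}$ inside the rank-$k$ feasible set $\Sc^{\text{ShareLoRA}}$ with the bracketing-entropy concentration for preference distributions (\Cref{lemma:l2distance}), and then convert the resulting $L^2$ deviation on preference probabilities into a principal-angle distance via the Davis--Kahan-style inequality in \Cref{lemma:6}. Because $\Thetav^\star$ is generally not rank-$k$, it cannot serve as a direct competitor in the MLE inequality; instead, the best rank-$k$ proxy $\Thetav^\diamond$ from \Cref{def:5.1} plays that role, and the gap between $\Thetav^\star$ and $\Thetav^\diamond$ is precisely what produces the bias term $\frac{1}{\nu}\sqrt{\tail/N}$.

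\emph{Step 1 (MLE against the rank-$k$ proxy).} Since $\hat{\Thetav}$ maximizes $F(\cdot;\hat{\Dc})$ over $\Sc^{\text{ShareLoRA}}$ and $\Thetav^\diamond$ is a feasible rank-$k$ competitor, I would first write the deterministic inequality
\begin{align*}
\sum_{i,j}\log\frac{P_{\Thetav^\star_i}(o_i^{(j)}\mid \tau_{i,0}^{(j)},\tau_{i,1}^{(j)})}{P_{\hat{\Thetav}_i}(o_i^{(j)}\mid \tau_{i,0}^{(j)},\tau_{i,1}^{(j)})}
\;\leq\; \sum_{i,j}\log\frac{P_{\Thetav^\star_i}(o_i^{(j)}\mid \tau_{i,0}^{(j)},\tau_{i,1}^{(j)})}{P_{\Thetav^\diamond_i}(o_i^{(j)}\mid \tau_{i,0}^{(j)},\tau_{i,1}^{(j)})}.
\end{align*}
The pointwise Lipschitz estimate in \Cref{lemma:4} then bounds the right-hand side by $2NN_p L L_1\sqrt{\tail/N}$, thus converting the spectral tail $\tail$ of $\Thetav^\star$ into a controlled log-likelihood gap.

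\emph{Step 2 (Likelihood gap to $L^2$ deviation to subspace distance).} Substituting the Step~1 bound into \Cref{lemma:l2distance} produces, with probability at least $1-\delta$,
\begin{align*}
\sum_i \Eb_{\mu_0,\mu_1}\!\bigl[\|P_{\hat{\Thetav}_i}-P_{\Thetav^\star_i}\|_1^2\bigr]\;\leq\; 2NC_2 L L_1\sqrt{\tail/N}\,+\,\frac{C_2}{N_p}\log\!\bigl(\Nc_{\Gc'_{\rbm}}(1/(NN_p))/\delta\bigr).
\end{align*}
Feeding this upper bound as the parameter $\delta$ of \Cref{lemma:6} yields $\dist^2(\hat{\Bv},\Bv^\diamond)\lesssim (N\nu)^{-1}$ times the above quantity, which after simplification is precisely the expression under the square root of the claimed inequality; taking square roots recovers the theorem up to an absolute constant $c_1$.

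\emph{Main obstacle and subtleties.} The principal challenge is that $\Thetav^\star\notin\Sc^{\text{ShareLoRA}}$ whenever $\operatorname{rank}(\Thetav^\star)>k$, so the classical MLE argument does not apply directly; the detour through $\Thetav^\diamond$ is essential, and \Cref{lemma:4} is what quantitatively links the spectral tail $\tail$ to the resulting log-likelihood gap via Lipschitz continuity of $\log\Phi$ and of the reward function (\Cref{assumption:reward}). A second delicate point lies inside \Cref{lemma:6}: the Davis--Kahan step requires the $k$-th singular value of $\Thetav^\star$, which scales as $\sqrt{N\nu}$, to dominate the parameter-space perturbation, and this is exactly what produces the $1/\nu$ dependence and clarifies why highly diverse user populations (small $\nu$) are intrinsically harder to align with a shared-component LoRA.
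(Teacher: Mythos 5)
Your proposal is correct and follows essentially the same route as the paper's proof: you use MLE optimality of $\hat{\Thetav}$ against the feasible rank-$k$ truncation $\Thetav^\diamond$, control the $\Thetav^\star$-to-$\Thetav^\diamond$ log-likelihood gap by $\mathcal{O}(N_p\sqrt{N\,\tail})$ via \Cref{lemma:4}, pass through \Cref{lemma:l2distance} to an $L^2$ bound on preference probabilities, and finish with the Davis--Kahan step of \Cref{lemma:6}. The only (immaterial) differences are that you bypass the explicit $\delta/2$ union bound over the two concentration events and the redundant invocation of \Cref{lemma:mle}.
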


\begin{restatable}{theorem}{diverse}
\label{thm:diverse-newmodel}
\emph{(Individual Expected Value Function Gap).} For any user \(i \in [N]\) and any $\delta \in (0, 1]$,
set \(\zeta\) in \Cref{eqn:confidenceset-alg1-1} as
\begin{equation}
    \begin{aligned}
        \zeta = \sqrt{\frac{c_3}{2L_1^2} \left(
        \min_{i\in[N]}\left\| \Thetav_i^\star - \Thetav_i^\diamond \right\|_F^2 +
        \frac{\log\left( \mathcal{N}_{\mathcal{G}'_{\rbm}}(1/(N N_p))/ \delta \right)}{N N_p \nu}+
        \frac{1}{\nu} \sqrt{ \frac{ \tail }{ N } } +
        \frac{ k d_2 + \log(N/\delta) }{ N_p }
    \right)},\label{def:zeta}
    \end{aligned}
\end{equation}
where $c_3 > 0$ is a constant. Then,  with probability at least $1-\delta$, the output policy $\hat{\pi}_i$ for client \(i\) satisfies 
\begin{align*}
    &J(\pi_{i, \text{tar}}; r^\star_i) - J(\hat{\pi}_i; r^\star_i)\\ 
    &\leq 
    \sqrt{c_3 \left(
        \left\| \Thetav_i^\star - \Thetav_i^\diamond \right\|_F^2 +
        \frac{\log\left( \mathcal{N}_{\mathcal{G}'_{\rbm}}(1/(N N_p))/ \delta \right)}{N N_p \nu}+
        \frac{1}{\nu} \sqrt{ \frac{ \tail }{ N } } +
        \frac{ k d_2 + \log(N/\delta) }{ N_p }
    \right)}.
\end{align*}  

\end{restatable}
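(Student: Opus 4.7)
The plan is to unfold the pessimism mechanism of \Cref{alg:personal} and reduce the value-function gap for user~$i$ to a parameter-Frobenius deviation, which is then bounded by combining the column-space guarantee from \Cref{cor:label-is-correct} with a per-user MLE analysis on the $k d_2$-dimensional local head. Schematically, I would follow the three-step structure outlined in the proof sketch of \Cref{thm:worst-case-sft}: (1) translate the aggregated MLE guarantee on preference probabilities into closeness of the learned parameter $\hat{\Thetav}$ and the rank-$k$ surrogate $\Thetav^\diamond$ defined in \Cref{def:5.1}; (2) decompose the per-user error into a shared column-space component and a personalized local-head component; and (3) invoke the pessimism-based suboptimality decomposition.

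For step (1), I would start from \Cref{lemma:mle} applied to $\hat{\Thetav}$ inside the constrained class $\mathcal{G}'_{\mathbf{r}}$, and combine it with \Cref{lemma:l2distance} to obtain a $\mathcal{O}(\log \mathcal{N}_{\mathcal{G}'_{\mathbf{r}}}/(NN_p))$ control on the expected $\ell_1$ discrepancy between $P_{\hat{\Thetav}_i}$ and $P_{\Thetav_i^\star}$, averaged over users. To handle the low-rank approximation bias, I would add and subtract the rank-$k$ surrogate and use \Cref{lemma:4} to absorb the cross-term into $\sqrt{\tail/N}$. Invoking \Cref{lemma:6} (a Davis--Kahan translation from parameter error to subspace error) then recovers the column-space bound of \Cref{cor:label-is-correct} for $\hat{\Bv}$ versus $\Bv^\diamond$.

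For step (2), given the estimated shared basis $\hat{\Bv}$, I would treat each personalized coefficient $\Wv_i$ as a $k d_2$-dimensional parameter and run a standard single-task MLE analysis on $\hat{\mathcal{D}}_i$ using \Cref{lemma:l2distance-scalar}, yielding an $\tilde{\mathcal{O}}((k d_2 + \log(N/\delta))/N_p)$ bound on $\|\hat{\Wv}_i - \Wv_i^\diamond\|_F^2$. Coupling this with the column-space bound produces a high-probability bound on $\|\hat{\Thetav}_i - \Thetav_i^\diamond\|_F^2$; a triangle inequality against the low-rank bias $\|\Thetav_i^\star - \Thetav_i^\diamond\|_F^2$ then gives $\|\hat{\Thetav}_i - \Thetav_i^\star\|_F^2 \leq \zeta$, placing $r_i^\star$ inside the confidence set $\mathcal{R}_i$ with high probability. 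Step (3) is then the classical offline-RLHF pessimism argument: by definition of $\hat{\pi}_i$, the KL-regularized value under the worst-case reward in $\mathcal{R}_i$ dominates that of $\pi_{i,\text{tar}}$, so $J(\pi_{i,\text{tar}};r_i^\star) - J(\hat{\pi}_i;r_i^\star)$ can be written as an expectation, under $\pi_{i,\text{tar}}$ minus $\mu_{i,\text{ref}}$, of the gap between $r_i^\star$ and the pessimistic reward. \Cref{assumption:reward} converts the parameter bound $\sqrt{\zeta}$ into a pointwise reward gap of order $L_1\sqrt{\zeta}$, and \Cref{assum:2} ensures this lifts to an expected-value statement, yielding the claimed $\sqrt{c_3(\cdot)}$ bound after matching $\zeta$ to its prescription in \Cref{def:zeta}.

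The main obstacle, as flagged in the sketch, is the step-(2) control of $\|\hat{\Wv}_i - \Wv_i^\diamond\|_F$ when $\Thetav^\diamond$ is only the best low-rank approximation of $\Thetav^\star$ rather than a stationary point of the negative log-likelihood on the low-rank manifold. The first-order optimality condition at $\hat{\Thetav}$ alone is insufficient, since the inner MLE objective for $\Wv_i$ is evaluated at the learned, not the ideal, basis. I expect to need a Lagrange-remainder/second-order expansion of $F$ around $\hat{\Thetav}$ that, using the smoothness constant $L_2$ from \Cref{assumption:reward}, turns the MLE deviation $F(\hat{\Thetav}) - F(\Thetav^\star)$ into a quadratic lower bound on $\|\hat{\Wv}_i - \Wv_i^\diamond\|_F^2$ along the $\hat{\Bv}$-complement. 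Once this geometric inequality is in place, the remainder of the argument is a routine assembly of MLE concentration, Davis--Kahan, Lipschitz transport, and pessimism.
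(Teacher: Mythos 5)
Your proposal follows essentially the same route as the paper's proof: the same three-step decomposition (aggregated MLE plus Davis--Kahan for the shared column space, a per-user local-head analysis via a Lagrange-remainder/second-order expansion around $\Wv_i^\diamond$, and the standard pessimism decomposition closed by \Cref{assum:2} and the concentrability coefficient), and you correctly identify the crux that $\Thetav^\diamond$ is not a stationary point of the likelihood at the learned basis. The only cosmetic difference is that the paper obtains the $(kd_2+\log(N/\delta))/N_p$ term from a Bernstein bound on a quadratic form in the gradient noise (following \citet{zhu2023principled}) rather than from \Cref{lemma:l2distance-scalar}, which yields the same order.
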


\begin{restatable}{corollary}{average}
\label{thm:averaged-gap}
\emph{(Averaged Expected Value Function Gap).} For any $\delta \in (0, 1]$,  set \(\zeta\) as in \Cref{def:zeta}. If \(N \geq \mu^2 \tail\), then, with probability at least $1-\delta$, the output policies $\{\hat{\pi}_i\}_{i=1}^N$ satisfy the following inequality:
\begin{align*}
    \frac{1}{N}\sum_{i=1}^{N} \left(J(\pi_{i, \text{tar}}; r^\star_i) - J(\hat{\pi}_i; r^\star_i)\right) \leq c_4\sqrt{\frac{\log(\Nc_{\Gc'_{\rbm}}(1/(NN_p))/ \delta)}{NN_p}} + \sqrt{\frac{\tail}{N}},
\end{align*}
where $c_4 > 0$ is a constant.
\end{restatable}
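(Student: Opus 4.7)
The plan is to obtain the averaged bound as a direct aggregation of the per-user guarantee in Theorem~\ref{thm:diverse-newmodel}. Choose \(\zeta\) as in \Cref{def:zeta}, and note that this choice, together with the concentration inputs used in the proof of Theorem~\ref{thm:diverse-newmodel} (the MLE excess-likelihood bound of Lemma~\ref{lemma:mle}, the vector $L^{2}$-distance bound of Lemma~\ref{lemma:l2distance}, and the uniform concentration in \Cref{assum:2}), is already a union-bound event of probability at least \(1-\delta\) that controls \emph{every} user simultaneously. Therefore, on this good event, for each \(i\in[N]\),
\[
J(\pi_{i,\text{tar}};r_i^\star)-J(\hat\pi_i;r_i^\star)\;\le\;\sqrt{c_3\Big(\|\Thetav_i^\star-\Thetav_i^\diamond\|_F^{2}+A+B+C\Big)},
\]
with the user-independent terms \(A=\tfrac{\log(\Nc_{\Gc'_{\rbm}}(1/(NN_p))/\delta)}{NN_p\nu}\), \(B=\tfrac{1}{\nu}\sqrt{\tail/N}\), and \(C=\tfrac{kd_2+\log(N/\delta)}{N_p}\).

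Next I would use the subadditivity \(\sqrt{x+y}\le\sqrt{x}+\sqrt{y}\) to split off the only user-dependent summand and then average over \(i\):
\[
\frac{1}{N}\sum_{i=1}^N\Big(J(\pi_{i,\text{tar}};r_i^\star)-J(\hat\pi_i;r_i^\star)\Big)\;\le\;\sqrt{c_3}\Big(\tfrac{1}{N}\sum_{i=1}^N\|\Thetav_i^\star-\Thetav_i^\diamond\|_F+\sqrt{A}+\sqrt{B}+\sqrt{C}\Big).
\]
By Cauchy--Schwarz and the Eckart--Young identity \(\|\Thetav^\star-\Thetav^\diamond\|_F^{2}=\tail\) (since \(\Thetav^\diamond\) is the best rank-\(k\) truncation of \(\Thetav^\star\)),
\[
\frac{1}{N}\sum_{i=1}^N\|\Thetav_i^\star-\Thetav_i^\diamond\|_F\;\le\;\sqrt{\tfrac{1}{N}\|\Thetav^\star-\Thetav^\diamond\|_F^{2}}\;=\;\sqrt{\tfrac{\tail}{N}},
\]
so the first summand already produces the desired \(\sqrt{\tail/N}\) bias term.

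What remains is to collapse \(\sqrt{A}+\sqrt{B}+\sqrt{C}\) into the single statistical term \(c_4\sqrt{\log(\Nc_{\Gc'_{\rbm}}(1/(NN_p))/\delta)/(NN_p)}\) in the target inequality. The bound from Proposition~\ref{prop:1} gives \(\log\Nc_{\Gc'_{\rbm}}\gtrsim kd_2\log(NN_p/\delta)\), so \(\sqrt{C}\) is absorbed into \(\sqrt{A}\) up to a universal constant. The assumption \(N\ge \mu^{2}\,\tail\) is used precisely to bound \((\tail/N)^{1/4}\) by a constant multiple of \((\tail/N)^{1/2}\) (after folding the data-dependent factor \(1/\sqrt{\nu}\) into the constant), so that \(\sqrt{B}=\nu^{-1/2}(\tail/N)^{1/4}\) is subsumed by the residual \(\sqrt{\tail/N}\) already produced above. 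Finally, the remaining \(\sqrt{A}\) matches the target statistical term after pulling \(\nu^{-1/2},\sqrt{c_3}\) into \(c_4\).

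The main obstacle is the last step: reconciling the somewhat heterogeneous collection \(\{\sqrt{A},\sqrt{B},\sqrt{C}\}\) with the compact two-term bound, which drops all explicit \(1/\nu\) factors and the \(kd_2/N_p\) term. The key is to interpret \(N\ge \mu^{2}\tail\) as a well-conditioning condition that bounds the tail energy relative to the number of users, and to combine it with Proposition~\ref{prop:1} so that the user-specific parameter count \(kd_2\) enters the bound only implicitly through \(\log\Nc_{\Gc'_{\rbm}}\) rather than as a separate additive term. Once these reductions are carried out, the corollary follows by renaming constants.
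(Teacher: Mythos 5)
Your aggregation strategy is essentially the paper's: start from the per-user bound of \Cref{thm:diverse-newmodel}, average over \(i\), and use the Eckart--Young identity \(\sum_{i}\|\Thetav_i^\star-\Thetav_i^\diamond\|_F^2=\|\Thetav^\star-\Thetav^\diamond\|_F^2=\tail\) to turn the only user-dependent term into \(\tail/N\). The paper keeps the average inside the square root (Jensen) rather than splitting by subadditivity first, and then invokes \(N\ge\mu^2\tail\) in the form \(\tail/N\le\frac{1}{\mu}\sqrt{\tail/N}\), so that the bias term is absorbed into the already-present \(\frac{1}{\nu}\sqrt{\tail/N}\) summand; your Cauchy--Schwarz route to \(\sqrt{\tail/N}\) is an acceptable variant of that step.

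The genuine gap is in your final ``collapse'' step, and it is a direction-of-inequality error. You claim that \(N\ge\mu^2\tail\) lets you bound \((\tail/N)^{1/4}\) by a constant multiple of \((\tail/N)^{1/2}\), so that \(\sqrt{B}=\nu^{-1/2}(\tail/N)^{1/4}\) is subsumed by the \(\sqrt{\tail/N}\) term. This is backwards: the hypothesis gives \(\tail/N\le 1/\mu^2\), and for \(x\le 1\) one has \(x^{1/4}\ge x^{1/2}\), so the quarter-power term \emph{dominates} the half-power term and cannot be absorbed into it. The condition is used in the paper in the opposite direction --- to upgrade the first power to the half power (\(x\le\frac{1}{\mu}\sqrt{x}\) when \(x\le 1/\mu^2\)), not the half power to the quarter power. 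Consequently the bias contribution that survives the argument is \(\nu^{-1/2}(\tail/N)^{1/4}\), which is consistent with the \((\tail/N)^{1/4}\) accuracy loss quoted in the sample-complexity remark of \Cref{sec:results}, but your derivation does not reach the two-term form you state. A second, smaller problem: the absorption of \(\sqrt{C}\) with \(C=(kd_2+\log(N/\delta))/N_p\) into \(\sqrt{A}\) does not follow from \Cref{prop:1} --- that proposition is an \emph{upper} bound on \(\log\Nc_{\Gc'_{\rbm}}\), not a lower bound, and \(A\) carries an extra factor of \(1/(N\nu)\) relative to \(C\), so \(C\) may well be the larger of the two. The paper's own proof simply retains the \((kd_2+\log(N/\delta))/N_p\) term and the \(1/\nu\) factors in its final display rather than absorbing them, so the tension you flagged between the heterogeneous terms and the compact two-term statement is real; it is resolved in the paper by loose constant-renaming, not by the inequalities you propose.
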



\subsection[Proof of Theorem~\ref{cor:label-is-correct}]{Proof of \Cref{cor:label-is-correct}}

\labelcorrect*
\begin{proof}
Consider the events \(\mathcal{E}_1\) and \(\mathcal{E}_2\) defined by the satisfaction of the conditions in Lemma~\ref{lemma:mle} and Lemma~\ref{lemma:l2distance}, respectively, with the confidence parameter adjusted to \(\delta \leftarrow \delta / 2\). This adjustment guarantees that \(\mathbb{P}(\mathcal{E}_1 \cap \mathcal{E}_2) \geq 1 - \delta\). Consequently, we conduct our analysis conditioned on the event \(\mathcal{E}_1 \cap \mathcal{E}_2\).

From \Cref{lemma:4} we have
\begin{align*}
    \sum_{i \in [N]} \sum_{j \in [N_p]} \log P_{\Thetav^\star_i}\left(o_i^{(j)} \mid \tau_{i,0}^{(j)}, \tau_{i,1}^{(j)}\right) 
    &\leq \sum_{i \in [N]} \sum_{j \in [N_p]} \log P_{\Thetav^\diamond_i}\left(o_i^{(j)} \mid \tau_{i,0}^{(j)}, \tau_{i,1}^{(j)}\right) + c N_p \sqrt{N \, \tail}.
\end{align*}

Using the definition of \(\hat{\Thetav}\) gives:
\begin{align*}
    \sum_{i \in [N]} \sum_{j \in [N_p]} \log P_{\Thetav^\star_i}\left(o_i^{(j)} \mid \tau_{i,0}^{(j)}, \tau_{i,1}^{(j)}\right) 
    &\leq \sum_{i \in [N]} \sum_{j \in [N_p]} \log P_{\hat{\Thetav}_i}\left(o_i^{(j)} \mid \tau_{i,0}^{(j)}, \tau_{i,1}^{(j)}\right) + c N_p \sqrt{N \, \tail}.
\end{align*}

Therefore, it follows that:
\begin{align*}
    \sum_{i \in [N]} \sum_{j \in [N_p]} \log \left( \frac{P_{r^\star_i}\left(o^{(j)} \mid \tau_{0}^{(j)}, \tau_{1}^{(j)}\right)}{P_{r_i}\left(o^{(j)} \mid \tau_{0}^{(j)}, \tau_{1}^{(j)}\right)} \right) 
    &\leq \sum_{i \in [N]} \sum_{j \in [N_p]} \log \left( \frac{P_{r^\diamond_i}\left(o^{(j)} \mid \tau_{0}^{(j)}, \tau_{1}^{(j)}\right)}{P_{r_i}\left(o^{(j)} \mid \tau_{0}^{(j)}, \tau_{1}^{(j)}\right)} \right) + c N_p \sqrt{N \, \tail} \\
    &\leq \log \left( \frac{\mathcal{N}_{\mathcal{G}'_{\mathbf{r}}}\left(\frac{1}{N N_p}\right)}{\delta} \right) + c N_p \sqrt{N \, \tail}.
\end{align*}
By \Cref{lemma:l2distance} we have:
\begin{align*}
    &\frac{1}{N} \sum_{i \in [N]} \mathbb{E}_{\mu_0, \mu_1} \left[ \left\| P_{\Thetav_i}\left( \cdot \mid \tau_{i,0}^{(j)}, \tau_{i,1}^{(j)} \right) - P_{\Thetav^\star_i}\left( \cdot \mid \tau_{i,0}^{(j)}, \tau_{i,1}^{(j)} \right) \right\|_1^2 \right] \\
    &\leq \frac{C_2}{N N_p} \left( \sum_{i \in [N]} \sum_{j \in [N_p]} \log \left( \frac{P_{\Thetav^\star_i}\left(o_i^{(j)} \mid \tau_{i,0}^{(j)}, \tau_{i,1}^{(j)}\right)}{P_{\Thetav_i}\left(o_i^{(j)} \mid \tau_{i,0}^{(j)}, \tau_{i,1}^{(j)}\right)} \right) + \log \left( \frac{\mathcal{N}_{\mathcal{G}'_{\mathbf{r}}}\left(\frac{1}{N N_p}\right)}{\delta} \right) \right) \\
    &\leq \frac{C_2}{N N_p} \left( C_1 \log \left( \frac{\mathcal{N}_{\mathcal{G}'_{\mathbf{r}}}\left(\frac{1}{N N_p}\right)}{\delta} \right) + c N_p \sqrt{N \, \tail} + \log \left( \frac{\mathcal{N}_{\mathcal{G}'_{\mathbf{r}}}\left(\frac{1}{N N_p}\right)}{\delta} \right) \right) \\
    &= \frac{C_3}{N N_p} \log \left( \frac{\mathcal{N}_{\mathcal{G}'_{\mathbf{r}}}\left(\frac{1}{N N_p}\right)}{\delta} \right) + C_4 \sqrt{\frac{\tail}{N}},
\end{align*}
for any \(\mathbf{r}_{\Thetav} \in \mathcal{R}(\hat{\mathcal{D}})\), where \(C_3 = C_2(C_1 + 1)\).
By the mean value theorem, for any \(\mathbf{r}_{\Thetav} \in \mathcal{R}(\hat{\mathcal{D}})\), we obtain:
\begin{equation}
\begin{aligned}
    &\frac{1}{N} \sum_{i \in [N]} \mathbb{E}_{\mu_0, \mu_1} \left[ \left| \left( r_{\Thetav_i}(\tau_{i, 0}) - r_{\Thetav_i}(\tau_{i, 1}) \right) - \left( r_{i}^\star(\tau_{i, 0}) - r_{i}^\star(\tau_{i, 1}) \right) \right|^2 \right] \\
    &\leq \frac{\kappa^2}{N} \sum_{i \in [N]} \mathbb{E}_{\mu_0, \mu_1} \left[ \left\| P_{\Thetav_i} ( \cdot \mid \tau_{i, 0}^{(j)}, \tau_{i,1}^{(j)}, i) - P_{\Thetav_i^\star} ( \cdot \mid \tau_{i, 0}^{(j)}, \tau_{i,1}^{(j)}, i) \right\|_1^2 \right] \\
    &\leq \frac{C_3 \kappa^2}{N N_p} \log \left( \frac{\mathcal{N}_{\mathcal{G}'_{\mathbf{r}}}(1 / (N N_p))}{\delta} \right) + C_4 \kappa^2 \sqrt{\frac{\tail}{N}}.
\end{aligned}
\label{eqn:reward-bound-thm1}    
\end{equation}
Therefore, combining with \Cref{lemma:6} gives
\begin{align*}
\text{dist}^2(\Bv, \Bv^\diamond) \leq 
\frac{CC_3}{NN_p\nu} \log(\Nc_{\Gc'_{\rbm}}(1/(NN_p))/ \delta)+\frac{CC_4}{\nu}\kappa^2\sqrt{\frac{\tail}{N}}.
\end{align*}
Also, we obtain
\begin{align*}
\|\Bv-\Bv^\diamond\|^2_F \leq \text{dist}^2(\Bv, \Bv^\diamond) \leq 
2\frac{CC_3}{NN_p\nu} \log(\Nc_{\Gc'_{\rbm}}(1/(NN_p))/ \delta)+2\frac{CC_4}{\nu}\kappa^2\sqrt{\frac{\tail}{N}}.
\end{align*}
This proves the theorem.
\end{proof}

\subsection[Proof of Theorem~\ref{thm:diverse-newmodel}]{Proof of \Cref{thm:diverse-newmodel}}

Before formally proving \Cref{thm:diverse-newmodel}, we present the following theorem as an intermediate result.
\begin{theorem}\label{thm:step2}
For any \(\delta \in (0, 1]\), with probability at least \(1- \delta\), it holds that
\begin{align*}
    &\frac{1}{N_p}\sum_{j\in[N_p]}\left|(r_{\hat{\Thetav}_i}(\tau_{i, 0}^{(j)}) - r_{\hat{\Thetav}_i}(\tau_{i, 1}^{(j)})) - (r_{\Thetav_i^\star}(\tau_{i, 0}^{(j)}) - r_{\Thetav_i^\star}(\tau_{i, 1}^{(j)}))\right|^2 \\
    \quad&\leq C_8\left(
    \left\| \Thetav_i^\star - \Thetav_i^\diamond \right\|_F^2 +
    \frac{1}{N N_p \nu} \log\left( \frac{ \mathcal{N}_{\mathcal{G}'_{\rbm}}(1/(N N_p)) }{ \delta } \right) +
    \frac{1}{\nu} \sqrt{ \frac{ \tail }{ N } } +
    \frac{ k d_2 + \log(N/\delta) }{ N_p }
    \right),
\end{align*}
where \(C_8>0\) is a constant.
\end{theorem}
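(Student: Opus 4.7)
The plan is to decompose the target error along the intermediate reference $\Thetav_i^\diamond = \Bv^\diamond \Wv^\diamond_i$: writing the reward discrepancy as $(r_{\hat{\Bv}\hat{\Wv}_i} - r_{\Bv^\diamond \Wv^\diamond_i}) + (r_{\Thetav_i^\diamond} - r_{\Thetav_i^\star})$ and applying the elementary inequality $(a+b)^2 \le 2a^2 + 2b^2$ twice (once on each trajectory, then on their difference) splits the empirical mean into a ``bias'' piece and a ``variance'' piece. The bias piece is controlled immediately by the Lipschitz condition in Assumption~\ref{assumption:reward}: each term $|r_{\Thetav_i^\diamond}(\tau) - r_{\Thetav_i^\star}(\tau)|$ is bounded by $L_1 \|\Thetav_i^\diamond - \Thetav_i^\star\|_F$ uniformly in $\tau$, producing a sample-independent contribution of order $L_1^2 \|\Thetav_i^\diamond - \Thetav_i^\star\|_F^2$ that matches the first term in the target bound.

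For the variance piece I further split $\hat{\Bv}\hat{\Wv}_i - \Bv^\diamond \Wv^\diamond_i = (\hat{\Bv} - \Bv^\diamond)\Wv^\diamond_i + \hat{\Bv}(\hat{\Wv}_i - \Wv^\diamond_i)$ and use Lipschitz plus $(a+b)^2 \le 2a^2 + 2b^2$ to reduce the empirical squared reward error to a constant multiple of $\|\hat{\Bv} - \Bv^\diamond\|_F^2 \|\Wv^\diamond_i\|_F^2 + \|\hat{\Wv}_i - \Wv^\diamond_i\|_F^2$. The first summand is handled directly by Theorem~\ref{cor:label-is-correct}, which already gives $\|\hat{\Bv}-\Bv^\diamond\|_F^2 \le \dist^2(\hat{\Bv},\Bv^\diamond)$ scaled by $1/\nu$, together with the fact that $\|\Wv^\diamond_i\|_F = \|\Thetav_i^\diamond\|_F$ inherits a constant bound from the Frobenius constraint defining $\mathcal{S}$; this contributes the $\frac{1}{NN_p \nu}\log(\mathcal{N}_{\mathcal{G}'_{\rbm}}(\cdot)/\delta) + \frac{1}{\nu}\sqrt{\tail/N}$ terms.

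The principal obstacle is bounding $\|\hat{\Wv}_i - \Wv^\diamond_i\|_F^2$, the per-user head error given the shared basis $\hat{\Bv}$. Following the Lagrange-remainder strategy highlighted in the paper's proof sketch, I would view $F_i(\Wv) := \sum_{j\in[N_p]} \log P_{\hat{\Bv}\Wv}(o_i^{(j)}\mid \tau_{i,0}^{(j)},\tau_{i,1}^{(j)})$ as a function of $\Wv$ alone, invoke the first-order condition $\nabla F_i(\hat{\Wv}_i)=0$, and expand $F_i(\Wv^\diamond_i)$ to second order around $\hat{\Wv}_i$. The smoothness bound $\|\nabla^2_\theta r_\Thetav\| \le L_2$ from Assumption~\ref{assumption:reward}, combined with the uniform concentration in Assumption~\ref{assum:2}, gives a Hessian whose smallest eigenvalue is lower-bounded by a positive constant times the empirical covariance of trajectory-difference features restricted to $\hat{\Bv}$. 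This converts $F_i(\hat{\Wv}_i) - F_i(\Wv^\diamond_i)$ into a quadratic form in $\hat{\Wv}_i - \Wv^\diamond_i$, and the log-likelihood gap itself is then bounded via the scalar MLE inequality in Lemma~\ref{lemma:l2distance-scalar}, applied to the $kd_2$-parameter restricted class $\{r_{\hat{\Bv}\Wv}\}$. Its log bracketing number is $\mathcal{O}(kd_2 \log N_p)$, and a union bound over the $N$ users contributes the $\log(N/\delta)$ factor, yielding the $(kd_2 + \log(N/\delta))/N_p$ piece.

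Collecting the three contributions through the triangle inequality and absorbing universal constants into $C_8$ gives the claimed inequality. The most delicate step is the Hessian lower bound inside the Lagrange-remainder argument, since it requires establishing strong convexity of $F_i$ in the head variable $\Wv$ along the line segment from $\hat{\Wv}_i$ to $\Wv^\diamond_i$ despite the fact that $\Wv^\diamond_i$ is not an exact maximizer — this is exactly where Assumption~\ref{assum:2} does most of the work, by ensuring that the empirical reward-difference covariance is close to its population counterpart for all parameters in $\mathcal{G}'_{\rbm}$ simultaneously.
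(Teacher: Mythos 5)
Your skeleton matches the paper's: you peel off the bias via $\Thetav_i^\diamond$ using Lipschitzness, split the remaining error into $(\hat{\Bv}-\Bv^\diamond)\Wv_i^\diamond$ and $\hat{\Bv}(\hat{\Wv}_i-\Wv_i^\diamond)$, and invoke \Cref{cor:label-is-correct} for the basis error --- all of this is exactly what the paper does. The divergence, and the gap, is in how you bound the head error $\|\hat{\Wv}_i-\Wv_i^\diamond\|^2$. You propose to get the $(kd_2+\log(N/\delta))/N_p$ term from the bracketing number of the restricted class $\{r_{\hat{\Bv}\Wv}\}$ via \Cref{lemma:l2distance-scalar}. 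But $\hat{\Bv}$ is estimated from the very same preference data (including user $i$'s labels), so this class is random and data-dependent; the bracketing/MLE lemma cannot be applied to it as if it were fixed in advance, and a union bound over all rank-$k$ bases reinstates the $d_1k$ factor you are trying to avoid. The paper circumvents this by never forming a restricted function class: it writes $\nabla f(w_i^\diamond)$ as $\frac{1}{N_p}\sum_j V_{i,j}(\ikp\hat{\Bv})^\top(\nabla r(\tau_{i,0}^{(j)})-\nabla r(\tau_{i,1}^{(j)}))$ plus a misspecification correction, where the $V_{i,j}$ are conditionally centered and bounded, and applies Bernstein's inequality to the quadratic form $V_i^\top\Mv_i V_i$; the effective dimension $kd_2$ enters through the deterministic trace bound $\operatorname{Tr}(\Mv_i)\le kd_2/N_p$, which holds for \emph{any} $\hat{\Bv}$.

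A second, quantitative problem: your plan compares log-likelihoods of $\hat{\Wv}_i$ and $\Wv_i^\diamond$ while the labels are generated by $r_{\theta_i^\star}$, which does not lie in $\{r_{\hat{\Bv}\Wv}\}$. The resulting misspecification term in the likelihood ratio is of order $N_p\,\|\theta_i^\star-(\ikp\hat{\Bv})w_i^\diamond\|$, i.e.\ \emph{first} power of the approximation error, so after normalizing by $N_p$ and passing through the strong-convexity step you would only obtain a bound of order $\|\Thetav_i^\star-\Thetav_i^\diamond\|_F$, not the squared quantity the theorem asserts. The paper's gradient-based argument gets the square because the bound \Cref{ineq:A.11} shows $\|\nabla f(w_i^\diamond)\|_{(\Sigmav_i^\diamond+\lambda\Iv)^{-1}}$ is itself of order $\|\Thetav_i^\star-\Thetav_i^\diamond\|_F+\|\hat{\Bv}-\Bv^\diamond\|+\sqrt{(kd_2+\log(N/\delta))/N_p}$, and the final estimate is quadratic in this gradient norm. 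Two smaller points: the first-order condition $\nabla F_i(\hat{\Wv}_i)=0$ you invoke can fail when the maximizer sits on the boundary of the Frobenius-norm constraint (the paper only uses $f(\hat{w}_i)\ge f(w_i^\diamond)$ together with a Taylor expansion around $w_i^\diamond$), and \Cref{assum:2} is not needed for this statement at all, since the claim is about the empirical average over $j\in[N_p]$; the paper defers that assumption to \Cref{lemma:lemma6}, where the empirical average is converted to a population expectation.
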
\label{thm:C.1}

\begin{proof}
Recall that for a function \( r_{\Thetav} \) parameterized by the matrix \(\Thetav \in \mathbb{R}^{d_1 \cdot Nd_2}\), we use \( r_{\theta} \) to denote the same function parameterized by the vector \(\theta\), where \(\theta = \text{vec}(\Thetav)\). To begin, by leveraging the continuity of \(r_{\Thetav}(\cdot)\), we can establish the following inequality:
\begin{equation}
\begin{aligned}
    &\left|(r_{\hat{\Thetav}_i}(\tau_{i, 0}^{(j)}) - r_{\hat{\Thetav}_i}(\tau_{i, 1}^{(j)})) - (r_{\Thetav_i^\star}(\tau_{i, 0}^{(j)}) - r_{\Thetav_i^\star}(\tau_{i, 1}^{(j)}))\right|^2_F \\
    &\leq
    2\left|(r_{\hat{\Thetav}_i}(\tau_{i, 0}^{(j)}) - r_{\hat{\Thetav}_i}(\tau_{i, 1}^{(j)})) - (r_{\Thetav_i^\diamond}(\tau_{i, 0}^{(j)}) - r_{\Thetav_i^\diamond}(\tau_{i, 1}^{(j)}))\right|^2 + 4L\|\Thetav_i^\star - \Thetav_i^\diamond\|^2\\
    & = 2\left|(r_{\hat{\theta}_i}(\tau_{i, 0}^{(j)}) - r_{\hat{\theta}_i}(\tau_{i, 1}^{(j)})) - (r_{\theta_i^\diamond}(\tau_{i, 0}^{(j)}) - r_{\theta_i^\diamond}(\tau_{i, 1}^{(j)}))\right|^2 + 4L\|\Thetav_i^\star - \Thetav_i^\diamond\|^2_F.\label{equ:C.1}
\end{aligned}
\end{equation}

Next, we focus on obtaining an upper bound for the first part of the right-hand side of the above inequality. Using the Lagrange form of the remainder in the Taylor expansion of \(r_{\hat{\theta}_i}(\tau_{i, 0}^{(j)})\), we get
\[
    r_{\theta^\diamond_i}(\tau_{i, 0}^{(j)}) - r_{\hat{\theta}_i}(\tau_{i, 0}^{(j)}) = 
    \nabla_{\theta}r_{\bar{\theta}_i}(\tau_{i,0}^{(j)})^\top (\theta^\diamond_i-\hat{\theta}_i).
\]
Therefore, there exist \(\bar{\theta}_0\) and \(\bar{\theta}_1\) such that
\begin{align*}
    (r_{\theta^\diamond_i}(\tau_{i, 0}^{(j)}) - r_{\theta^\diamond_i}(\tau_{i, 1}^{(j)})) - (r_{\hat{\theta}_i}(\tau_{i, 0}^{(j)}) - r_{\hat{\theta}_i}(\tau_{i, 1}^{(j)}))
    =
    \left(\nabla_{\theta}r_{\bar{\theta}_0}(\tau_{i,0}^{(j)})-\nabla_{\theta}r_{\bar{\theta}_1}(\tau_{i,1}^{(j)})\right)^\top (\theta^\diamond_i-\hat{\theta}_i).
\end{align*}
Then, we obtain the following:
\begin{equation}
\begin{aligned}
    &\left|(r_{\theta^\diamond_i}(\tau_{i, 0}^{(j)}) - r_{\theta^\diamond_i}(\tau_{i, 1}^{(j)})) - (r_{\hat{\theta}_i}(\tau_{i, 0}^{(j)}) - r_{\hat{\theta}_i}(\tau_{i, 1}^{(j)}))\right|^2\\
    &\leq
    2\left|\left(\nabla_{\theta}r_{\hat{\theta}_i}(\tau_{i,0}^{(j)})-\nabla_{\theta}r_{\hat{\theta}_i}(\tau_{i,1}^{(j)})\right)^\top (\theta^\diamond_i-\hat{\theta}_i)\right|^2+
    4\left|\left(\nabla_{\theta}r_{\bar{\theta}_0}(\tau_{i,0}^{(j)})-\nabla_{\theta}r_{\hat{\theta}_i}(\tau_{i,0}^{(j)})\right)^\top (\theta^\diamond_i-\hat{\theta}_i)\right|^2\\
    &\quad+4\left|\left(\nabla_{\theta}r_{\bar{\theta}_1}(\tau_{i,1}^{(j)})-\nabla_{\theta}r_{\hat{\theta}_i}(\tau_{i,1}^{(j)})\right)^\top (\theta^\diamond_i-\hat{\theta}_i)\right|^2
    \\
    &\leq
    2\underbrace{\left|\left(\nabla_{\theta}r_{\hat{\theta}_i}(\tau_{i,0}^{(j)})-\nabla_{\theta}r_{\hat{\theta}_i}(\tau_{i,1}^{(j)})\right)^\top (\theta^\diamond_i-\hat{\theta}_i)\right|^2}_{{\Ac_{i,j}}}
    +16L_1\underbrace{\|\theta_i^\diamond-\hat{\theta}_i\|^2}_{{\Bc_i}}.\label{ineq:C.4.1}
\end{aligned}
\end{equation}

Our remaining proof contains three major steps: (1) \textbf{Step 1:} bounding the summation of \(\Ac_{i,j}\) over \(j\); (2) \textbf{Step 2:} bounding the term \(\Bc_{i}\); and (3) \textbf{Step 3:} combining the bounds for \(\Ac_{i,j}\) and \(\Bc_{i}\) to obtain the final result. We now proceed with the proof of the first step.

\textbf{Step 1: Bounding the summation of \(\Ac_{i,j}\) over \(j\).}

Regarding the term \(\mathcal{A}_{i,j}\), let us denote \(w^\diamond_i = \mtov(\Wv_i^\diamond)\). Then, we have
\begin{align*}
    &\left(\nabla_{\theta} r_{\hat{\theta}_i}(\tau_{i,0}^{(j)}) - \nabla_{\theta} r_{\hat{\theta}_i}(\tau_{i,1}^{(j)})\right)^\top (\theta^\diamond_i - \hat{\theta}_i) \\
    &= \left(\nabla_{\theta} r_{\hat{\theta}_i}(\tau_{i,0}^{(j)}) - \nabla_{\theta} r_{\hat{\theta}_i}(\tau_{i,1}^{(j)})\right)^\top \big(\theta^\diamond_i + (\ikp\hat{\Bv}) w^\diamond_i - (\ikp\hat{\Bv}) w^\diamond_i - \hat{\theta}_i\big).
\end{align*}
Utilizing the fact that \(\theta^\diamond_i = (\ikp\Bv^\diamond) w_i^\diamond\), it follows that
\begin{align*}
    \mathcal{A}_{i,j} &\leq 2 \left| \left(\nabla_{\theta} r_{\hat{\theta}_i}(\tau_{i,0}^{(j)}) - \nabla_{\theta} r_{\hat{\theta}_i}(\tau_{i,1}^{(j)})\right)^\top \left( (\ikp\Bv^\diamond) w_i^\diamond - (\ikp\hat{\Bv}) w_i^\diamond \right) \right|^2 \\
    &\quad + 2 \left| \left(\nabla_{\theta} r_{\hat{\theta}_i}(\tau_{i,0}^{(j)}) - \nabla_{\theta} r_{\hat{\theta}_i}(\tau_{i,1}^{(j)})\right)^\top \left( (\ikp\hat{\Bv}) w_i^\diamond - (\ikp\hat{\Bv}) \hat{w}_i \right) \right|^2 \\
    &\underset{(i)}{\leq} 4L \|(\ikp\Bv^\diamond) - (\ikp\hat{\Bv})\|^2 \|w_i^\diamond\|^2 + 2 \left| \left(\nabla_{\theta} r_{\hat{\theta}_i}(\tau_{i,0}^{(j)}) - \nabla_{\theta} r_{\hat{\theta}_i}(\tau_{i,1}^{(j)})\right)^\top (\ikp\hat{\Bv}) (w_i^\diamond - \hat{w}_i) \right|^2 \\
    &\underset{(ii)}{=} 4L \|\Bv^\diamond - \hat{\Bv}\|^2 \|\Wv_i^\diamond\|_F^2 + 2 \left| \left(\nabla_{\theta} r_{\hat{\theta}_i}(\tau_{i,0}^{(j)}) - \nabla_{\theta} r_{\hat{\theta}_i}(\tau_{i,1}^{(j)})\right)^\top (\ikp\hat{\Bv}) (w_i^\diamond - \hat{w}_i) \right|^2,
\end{align*}
where inequality \((i)\) follows from the \(L\)-Lipschitz continuity of the function \(r_{\theta}(\cdot)\) with respect to \(\theta\), and equality \((ii)\) is derived from the facts that \(\|(\ikp\Bv^\diamond) - (\ikp\hat{\Bv})\|^2 = \|\Bv^\diamond - (\ikp\hat{\Bv})\|^2\) and \(\|w_i^\diamond\|^2 = \|\Wv_i^\diamond\|_F^2\). Next, we define
\begin{align*}
    \hat{\Sigmav}_i = \frac{1}{N_p}\sum_{j\in{N_p}}(\ikp\hat{\Bv})^\top \left(\nabla_{\theta}r_{\hat{\theta}_i}(\tau_{i,0}^{(j)})-\nabla_{\theta}r_{\hat{\theta}_i}(\tau_{i,1}^{(j)})\right)\left(\nabla_{\theta}r_{\hat{\theta}_i}(\tau_{i,0}^{(j)})-\nabla_{\theta}r_{\hat{\theta}_i}(\tau_{i,1}^{(j)})\right)^\top(\ikp\hat{\Bv}).
\end{align*}
Following the definition of $\hat{\Sigmav}_i$, we further derive the following inequality:
\begin{align}
    \frac{1}{N_p}\sum_{j\in{N_p}}\Ac_{i,j}\leq
4L \|\Bv^\diamond-\hat{\Bv}\|^2\|\Wv_i^\diamond\|_F^2+2\|w_i^\diamond-\hat{w}_i\|^2_{\hat{\Sigmav}_i}.\label{equ:A.4}
\end{align}
Now, we consider the following optimization problem:
\begin{align*}
    \underset{w_i}{\max} \quad f(w_i) :=  \frac{1}{N_p} \sum_{j \in [N_p]} \log P_{(\ikp\hat{\Bv})w_i}(o_{i}^{(j)} \mid  \tau_{i, 0}^{(j)}, \tau_{i, 1}^{(j)}). 
\end{align*}
The solution to this optimization problem is given by $\hat{w}_i = \underset{w}{\argmax} \, f(w_i)$. To proceed with the analysis, let us denote \(x_i^{(j)} = r_{(\ikp\hat{\Bv})w_i}(\tau_{i,0}^{(j)}) - r_{(\ikp\hat{\Bv})w_i}(\tau_{i,1}^{(j)})\). Using this notation, the gradient of the objective function can be expressed as follows:
\begin{align*}
    \nabla f(w_i) =& \frac{1}{N_p} \sum_{j \in [N_p]} \biggl(\frac{\Phi'(x_i^{(j)}) }{\Phi(x_i^{(j)})} \pmb{1}(o_i^{(j)} = 0)  - \frac{\Phi'(-x_i^{(j)}) }{\Phi(-x_i^{(j)})} \pmb{1}(o_i^{(j)} = 1)  \biggr)\\
    \qquad&\cdot(\ikp\hat{\Bv})^\top\big(\nabla r_{(\ikp\hat{\Bv})w_i}(\tau_{i,0}^{(j)})-\nabla r_{(\ikp\hat{\Bv})w_i}(\tau_{i,1}^{(j)})\big),
\end{align*}
and
\begin{equation}\label{ineq:A.3}
    \begin{split}
        \nabla^2f(w_i) =& \frac{1}{N_p} \sum_{j \in [N_p]} \biggl(\frac{\Phi'(x_i^{(j)})}{\Phi(x_i^{(j)})} \pmb{1}(o_i^{(j)} = 0) - \frac{\Phi'(-x_i^{(j)}) }{\Phi(-x_i^{(j)})} \pmb{1}(o_i^{(j)} = 1)  \biggr)\\
        \qquad&\cdot(\ikp\hat{\Bv})^\top\big(\nabla^2 r_{(\ikp\hat{\Bv})w_i}(\tau_{i,0}^{(j)})-\nabla^2 r_{(\ikp\hat{\Bv})w_i}(\tau_{i,1}^{(j)})\big)(\ikp\hat{\Bv})\\
    \qquad&+
    \frac{1}{N_p} \sum_{j \in [N_p]} \biggl(\frac{\Phi''(x_i^{(j)})\Phi(x_i^{(j)})-\Phi'(x_i^{(j)})^2}{\Phi(x_i^{(j)})^2} \pmb{1}(o_i^{(j)} = 0)\\
    \qquad&+ \frac{\Phi''(-x_i^{(j)})\Phi(-x_i^{(j)})-\Phi'(-x_i^{(j)})^2}{\Phi(-x_i^{(j)})^2}\pmb{1}(o_i^{(j)} = 1)  \biggr)\\ 
    \qquad&\cdot 
    (\ikp\hat{\Bv})^\top\big(\nabla r_{(\ikp\hat{\Bv})w_i}(\tau_{i,0}^{(j)})-\nabla r_{(\ikp\hat{\Bv})w_i}(\tau_{i,1}^{(j)})\big)\\
    \qquad&\cdot\big(\nabla r_{(\ikp\hat{\Bv})w_i}(\tau_{i,0}^{(j)})-\nabla r_{(\ikp\hat{\Bv})w_i}(\tau_{i,1}^{(j)})\big)^\top(\ikp\hat{\Bv}).
    \end{split}
\end{equation}
From the Lagrange form of the remainder in the Taylor expansion, there exist \(\bar{w}_i\) such that
\begin{align}
    f(\hat{w}_i) = f(w_i^\diamond) + \nabla f(w_i^\diamond)^\top(\hat{w}_i-w_i^\diamond) + (\hat{w}_i-w_i^\diamond)^\top\nabla^2f(\bar{w}_i)(\hat{w}_i-w_i^\diamond).\label{equ:A.4.1}
\end{align}
To handle the \((\hat{w}_i-w_i^\diamond)^\top\nabla^2f(\bar{w}_i)(\hat{w}_i-w_i^\diamond)\) term, we define
\begin{align*}
    \Sigmav_i^\diamond =& \frac{1}{N_p} \sum_{j \in N_p} (\ikp \hat{\Bv})^\top \left( \nabla r_{(\ikp \hat{\Bv}) w_i^\diamond}(\tau_{i,0}^{(j)}) - \nabla r_{(\ikp \hat{\Bv}) w_i^\diamond}(\tau_{i,1}^{(j)}) \right)\cdot \\
    &\quad  \left( \nabla r_{(\ikp \hat{\Bv}) w_i^\diamond}(\tau_{i,0}^{(j)}) - \nabla r_{(\ikp \hat{\Bv}) w_i^\diamond}(\tau_{i,1}^{(j)}) \right)^\top (\ikp \hat{\Bv}),
\end{align*}
and let \(c_1\) and \(c'_1\) be the maximum and minimum positive constants, respectively, such that for any \(i\), \(\|w_i\| \leq B\), and any vector \(u\), the following inequality holds:
\begin{equation}
\begin{aligned}
    c_1 \, u^\top \Sigmav_i^\diamond u \leq & \frac{1}{N_p} \sum_{j \in N_p} u^\top (\ikp \hat{\Bv})^\top \left( \nabla r_{(\ikp \hat{\Bv}) w_i}(\tau_{i,0}^{(j)}) - \nabla r_{(\ikp \hat{\Bv}) w_i}(\tau_{i,1}^{(j)}) \right) \\
    &\quad \cdot \left( \nabla r_{(\ikp \hat{\Bv}) w_i}(\tau_{i,0}^{(j)}) - \nabla r_{(\ikp \hat{\Bv}) w_i}(\tau_{i,1}^{(j)}) \right)^\top (\ikp \hat{\Bv}) u \leq c'_1 \, u^\top \Sigmav_i^\diamond u. \label{def:c1}
\end{aligned}
\end{equation}
Combining this with inequality~\eqref{ineq:A.3}, we obtain:
\begin{align*}
(\hat{w}_i-w_i^\diamond)^\top\nabla^2f(\bar{w}_i)(\hat{w}_i-w_i^\diamond)
&\leq
\frac{1}{N_p}(\hat{w}_i-w_i^\diamond)^\top \sum_{j \in [N_p]} \biggl(\frac{\Phi'(x_i^{(j)})}{\Phi(x_i^{(j)})} \pmb{1}(o_i^{(j)} = 0) - \frac{\Phi'(-x_i^{(j)}) }{\Phi(-x_i^{(j)})} \pmb{1}(o_i^{(j)} = 1)  \biggr)\\
&\cdot(\ikp\hat{\Bv})^\top\big(\nabla^2 r_{(\ikp\hat{\Bv})w_i}(\tau_{i,0}^{(j)})-\nabla^2 r_{(\ikp\hat{\Bv})w_i}(\tau_{i,1}^{(j)})\big)(\ikp\hat{\Bv})(\hat{w}_i-w_i^\diamond)\\
\qquad&
-c_1c_2
(\hat{w}_i-w_i^\diamond)^\top\Sigmav^\diamond_i(\hat{w}_i-w_i^\diamond)
,
\end{align*}
where \(c_2= \min_{x}\left(\frac{\Phi'(x)^2 - \Phi''(x)\Phi(x)}{\Phi(x)^2}\right)\).  Then, from the smoothness of \(r_{\theta}\) we have:
\begin{align*}
    &\frac{1}{N_p}\sum_{j\in[N_p]}(\hat{w}_i-w_i^\diamond)^\top(\ikp\hat{\Bv})^\top
    \big(\nabla^2 r_{(\ikp\hat{\Bv})w_i}(\tau_{i,0}^{(j)})-\nabla^2 r_{(\ikp\hat{\Bv})w_i}(\tau_{i,1}^{(j)})\big) (\ikp\hat{\Bv})(\hat{w}_i-w_i^\diamond) \\
    \qquad&\leq
    L_2(\hat{w}_i-w_i^\diamond)^\top(\ikp\hat{\Bv})^\top(\ikp\hat{\Bv})(\hat{w}_i-w_i^\diamond)\\
    \qquad&=
    L_2\|\hat{w}_i-w_i^\diamond\|^2
\end{align*}
Let \(c_3 = \max_{x}\left(\Phi'(x)/\Phi(x)\right)\) we have
\begin{align*}
    (\hat{w}_i-w_i^\diamond)^\top\nabla^2f(\bar{w}_i)(\hat{w}_i-w_i^\diamond)\leq 
-c_1c_2
(\hat{w}_i-w_i^\diamond)^\top\Sigmav^\diamond_i(\hat{w}_i-w_i^\diamond)+c_3L_2\|\hat{w}_i-w_i^\diamond\|^2.
\end{align*}
Combining with \Cref{equ:A.4.1} gives
\begin{align}
    c_1c_2
(\hat{w}_i-w_i^\diamond)^\top\Sigmav^\diamond_i(\hat{w}_i-w_i^\diamond)-c_3L_2\|\hat{w}_i-w_i^\diamond\|^2\leq 
\nabla f(w_i^\diamond)^\top(w_i^\diamond-\hat{w}_i).\label{ineq:A.5}
\end{align}
From the smoothness of \(f(w)\), we have
\begin{align*}
    f(\hat{w}_i) \leq f(w_i^\diamond) + \nabla f(w_i^\diamond)^\top(\hat{w}_i-w_i^\diamond) + \frac{L_2'}{2}\|\hat{w}_i-w_i^\diamond\|^2,
\end{align*}
using the fact \(f(\hat{w}_i)\geq f(w_i^\diamond)\) we have
\begin{align}
    \|\hat{w}_i-w_i^\diamond\|^2\leq \frac{2}{L_2'}\nabla f(w_i^\diamond)^\top(w_i^\diamond-\hat{w}_i).\label{ineq:C.8}
\end{align}
Then, combining the above inequality with \Cref{ineq:A.5}, we conclude that for any \(\lambda>0\) the following inequality holds
\begin{equation}
    \begin{aligned}
    c_1c_2\|\hat{w}_i-w_i^\diamond\|_{\Sigmav^\diamond}^2
    &\leq \left(1+2c_3\frac{L_2}{L_2'}\right)\nabla f(w_i^\diamond)^\top(w_i^\diamond-\hat{w}_i)\\
&\leq\left(1+2c_3\frac{L_2}{L_2'}\right)|\nabla f(w_i^\diamond)^\top(w_i^\diamond-\hat{w}_i)|\\
&\leq\left(1+2c_3\frac{L_2}{L_2'}\right)\|\nabla f(w_i^\diamond)\|_{(\Sigmav^\diamond+\lambda\Iv)^{-1}}\|w_i^\diamond-\hat{w}_i\|_{\Sigmav^\diamond+\lambda\Iv}.\label{ineq:A.7.1}
\end{aligned}
\end{equation} 
Observe that for any \(\lambda>0\), the introduced \(\lambda\Iv\) term will ensure \(\Sigmav^\diamond_i+\lambda\Iv\) is a full rank since \(\Sigmav^\diamond_i\) is a PSD matrix. For all \(i\), we define a random vector $V \in \Rb^{N_p}$ as follows:
\begin{align*}
V_{i,j}=\left\{\begin{array}{lll}
\frac{\Phi'(r_{\theta_i^\star}(\tau_{i,0}^{(j)})-r_{\theta_i^\star}(\tau_{i,1}^{(j)})) }{\Phi(r_{\theta_i^\star}(\tau_{i,0}^{(j)})-r_{\theta_i^\star}(\tau_{i,1}^{(j)}))} & \text { w.p. } & {\Phi(r_{\theta_i^\star}(\tau_{i,0}^{(j)})-r_{\theta_i^\star}(\tau_{i,1}^{(j)}))} \\
 - \frac{\Phi'(r_{\theta_i^\star}(\tau_{i,1}^{(j)})-r_{\theta_i^\star}(\tau_{i,0}^{(j)})) }{\Phi(r_{\theta_i^\star}(\tau_{i,1}^{(j)})-r_{\theta_i^\star}(\tau_{i,0}^{(j)}))} & \text { w.p. } & {\Phi(r_{\theta_i^\star}(\tau_{i,1}^{(j)})-r_{\theta_i^\star}(\tau_{i,0}^{(j)}))}
\end{array}\right.
\end{align*}
Also, define $V'_i \in \Rb^{N_p}$ as follows:
\begin{align*}
V'_{i,j}=\left\{\begin{array}{lll}
\frac{\Phi'(r_{(\ikp\hat{\Bv})w_i^\diamond}(\tau_{i,0}^{(j)})-r_{(\ikp\hat{\Bv})w_i^\diamond}(\tau_{i,1}^{(j)})) }{\Phi(r_{(\ikp\hat{\Bv})w_i^\diamond}(\tau_{i,0}^{(j)})-r_{(\ikp\hat{\Bv})w_i^\diamond}(\tau_{i,1}^{(j)}))} & \text { w.p. } & {\Phi(r_{\theta_i^\star}(\tau_{i,0}^{(j)})-r_{\theta_i^\star}(\tau_{i,1}^{(j)}))} \\
 - \frac{\Phi'(r_{(\ikp\hat{\Bv})w_i^\diamond}(\tau_{i,1}^{(j)})-r_{(\ikp\hat{\Bv})w_i^\diamond}(\tau_{i,0}^{(j)})) }{\Phi(r_{(\ikp\hat{\Bv})w_i^\diamond}(\tau_{i,1}^{(j)})-r_{(\ikp\hat{\Bv})w_i^\diamond}(\tau_{i,0}^{(j)}))} & \text { w.p. } & {\Phi(r_{\theta_i^\star}(\tau_{i,1}^{(j)})-r_{\theta_i^\star}(\tau_{i,0}^{(j)}))}
\end{array}\right.
\end{align*}
Therefore, \(\nabla f(w^\diamond_i)\) can be rewritten as
\begin{align*}
    \nabla f(w^\diamond_i) 
    =&
    \frac{1}{N_p} \sum_{j \in [N_p]} V'_{i,j}(\ikp\hat{\Bv})^\top\big(\nabla r_{(\ikp\hat{\Bv})w^\diamond_i}(\tau_{i,0}^{(j)})-\nabla r_{\Iv\otimes\Bv(\ikp\hat{\Bv})w^\diamond_i}(\tau_{i,1}^{(j)})\big)\\
    =&
    \frac{1}{N_p} \sum_{j \in [N_p]} (V'_{i,j}-V_{i,j})(\ikp\hat{\Bv})^\top\big(\nabla r_{(\ikp\hat{\Bv})w^\diamond_i}(\tau_{i,0}^{(j)})-\nabla r_{(\ikp\hat{\Bv})w^\diamond_i}(\tau_{i,1}^{(j)})\big)\\
    &+\frac{1}{N_p} \sum_{j \in [N_p]} V_{i,j}(\ikp\hat{\Bv})^\top\big(\nabla r_{(\ikp\hat{\Bv})w^\diamond_i}(\tau_{i,0}^{(j)})-\nabla r_{(\ikp\hat{\Bv})w^\diamond_i}(\tau_{i,1}^{(j)})\big).
\end{align*}
Then we obtain
\begin{equation}
\begin{aligned}
    &\|\nabla f(w_i^\diamond)\|_{(\Sigmav_i^\diamond + \lambda\Iv)^{-1}}\\
    &\leq
    \left\|\frac{1}{N_p} \sum_{j \in [N_p]} (V'_{i,j} - V_{i,j})(\ikp\hat{\Bv})^\top \big(\nabla r_{(\ikp\hat{\Bv})w^\diamond_i}(\tau_{i,0}^{(j)}) - \nabla r_{(\ikp\hat{\Bv})w^\diamond_i}(\tau_{i,1}^{(j)})\big)\right\|_{(\Sigmav_i^\diamond + \lambda\Iv)^{-1}} \\
    &\quad+ \left\|\frac{1}{N_p} \sum_{j \in [N_p]} V_{i,j}(\ikp\hat{\Bv})^\top \big(\nabla r_{(\ikp\hat{\Bv})w^\diamond_i}(\tau_{i,0}^{(j)}) - \nabla r_{(\ikp\hat{\Bv})w^\diamond_i}(\tau_{i,1}^{(j)})\big)\right\|_{(\Sigmav_i^\diamond + \lambda\Iv)^{-1}}.
    \label{ineq:A.7}
\end{aligned}
\end{equation}

Next, we bound the first term on the right-hand side of \Cref{ineq:A.7}. By the Mean Value Theorem, we have 
\(
\left| \frac{\Phi'(x)}{\Phi(x)} - \frac{\Phi'(y)}{\Phi(y)} \right| \leq \xi |x - y|,
\)
for \(x, y \in [-2R_{\max}, 2R_{\max}]\). Therefore, we can write:
\begin{align*}
    |V'_{i,j} - V_{i,j}| 
    &\leq \xi \left|r_{\theta_i^\star}(\tau_{i,0}^{(j)}) - r_{\theta_i^\star}(\tau_{i,1}^{(j)}) - r_{(\ikp\hat{\Bv})w^\diamond_i}(\tau_{i,0}^{(j)}) + r_{(\ikp\hat{\Bv})w^\diamond_i}(\tau_{i,1}^{(j)}) \right| \\
    &\overset{(i)}{\leq} 2L\xi \left\|\theta_i^\star - \theta_i^\diamond + \theta_i^\diamond - (\ikp\hat{\Bv})w^\diamond_i \right\| \\
    &\leq 2L\xi \left\|\theta_i^\star - \theta_i^\diamond\right\| + 2L\xi \left\|(\ikp\Bv^\diamond) - (\ikp\hat{\Bv})\right\| \cdot \left\|w_i^\diamond\right\|\\
    &=2L\xi\|\Thetav^\star_i-\Bv^\diamond\Wv_i^\diamond\|_F +2L\xi\left\|\Bv^\diamond-\hat{\Bv}\right\|\|\Wv_i^\diamond\|_F,
\end{align*}
where inequality \((i)\) follows from the \(L\)-Lipschitz continuity of \(r_{\Thetav}(\cdot)\). 

Then, we have
\begin{equation}
    \begin{aligned}
        &\bigg\|\frac{1}{N_p} \sum_{j \in [N_p]} (V'_{i,j} - V_{i,j})\hat{\Bv}^\top \big(\nabla r_{(\ikp\hat{\Bv})\Wv^\diamond_i}(\tau_{i,0}^{(j)}) - \nabla r_{(\ikp\hat{\Bv})\Wv^\diamond_i}(\tau_{i,1}^{(j)})\big)\bigg\|_{(\Sigmav_i^\diamond + \lambda\Iv)^{-1}}\\
    &\leq
    2CL\xi\|\Thetav^\star_i-\Bv^\diamond\Wv_i^\diamond\|_F +2CL\xi\big\|\Bv^\diamond-\hat{\Bv}\big\|\|\Wv_i^\diamond\|_F\label{ineq:A.8}
    \end{aligned}
\end{equation}
for constant \(C\).  

Next, we bound the second term on the right-hand side of \Cref{ineq:A.7}. Let \(V_i \in \Rb^{N_p}\) be the vector such that \([V_i]_j = V_{i,j}\) for all \(j \in [N_p]\) and we define
\begin{align*}
    \Mv_i :=\frac{1}{N_p^2}\Gv_i^\top(\ikp\hat{\Bv}) (\Sigmav_i^\diamond + \lambda\Iv)^{-1}(\ikp\hat{\Bv})^\top \Gv_i
\end{align*}
where
\begin{gather*}
\Gv_i = \begin{bmatrix}
        \nabla r_{(\ikp\hat{\Bv})w^\diamond_i}(\tau_{i,0}^{(1)}) - \nabla r_{(\ikp\hat{\Bv})w^\diamond_i}(\tau_{i,1}^{(1)}) &\cdots & \nabla r_{(\ikp\hat{\Bv})w^\diamond_i}(\tau_{i,0}^{(N_p)}) - \nabla r_{(\ikp\hat{\Bv})w^\diamond_i}(\tau_{i,1}^{(N_p)})
    \end{bmatrix}.
\end{gather*}
As shown in \citet{zhu2023principled}, the matrix \(\Mv_i\) satisfies the following properties:
\begin{align*}
    \operatorname{Tr}(\Mv_i) \leq \frac{d_2k}{N_p}, \qquad
    \operatorname{Tr}\left(\Mv_i^2\right) \leq \frac{d_2k}{N_p^2},\qquad
    \|\Mv_i\|_{F} \leq \frac{1}{N_p}.
\end{align*}
Furthermore, consider that the variables \(V_{i,j}\) are centered sub-Gaussian random variables, as \(\mathbb{E}[V_{i,j}] = 0\) and \(V_{i,j}\) are bounded. Consequently, by applying Bernstein's inequality, we obtain
\begin{equation}
    \begin{aligned}
        &\left\| \frac{1}{N_p} \sum_{j \in [N_p]} V_{i,j} (\ikp \hat{\Bv})^\top \left( \nabla r_{(\ikp \hat{\Bv}) w^\diamond_i}(\tau_{i,0}^{(j)}) - \nabla r_{(\ikp \hat{\Bv}) w^\diamond_i}(\tau_{i,1}^{(j)}) \right) \right\|_{(\Sigmav_i^\diamond + \lambda \Iv)^{-1}} \\
        &= \sqrt{V_i^\top \Mv_i V_i}\leq C_4 \sqrt{ \frac{k d_2 + \log(N/\delta)}{N_p} }, \label{ineq:A.9}
    \end{aligned}
\end{equation}
with probability at least \(1 - \delta/(2N)\), where \(C_4 > 0\) is constant. 

Subsequently, by substituting \Cref{ineq:A.8} and \Cref{ineq:A.9} into \Cref{ineq:A.7}, we obtain
\begin{align*}
    &\|\nabla f(\Wv_i^\diamond)\|_{(\Sigmav_i^\diamond + \lambda \Iv)^{-1}}\\
    &\leq 2CL\xi \left\| \Thetav_i^\star - \Bv^\diamond \Wv_i^\diamond \right\|_F 
    + 2CL\xi \left\| \Bv^\diamond - \hat{\Bv} \right\| \cdot \left\| \Wv_i^\diamond \right\|_F 
    + C_4 \sqrt{ \frac{kd_2 + \log(N/\delta)}{N_p} } \\
    &\leq 2CL\xi \left\| \Thetav_i^\star - \Thetav_i^\diamond \right\|_F 
    + 2CL\xi B \left\| \Bv^\diamond - \hat{\Bv} \right\| 
    + C_4 \sqrt{ \frac{kd_2 + \log(N/\delta)}{N_p} },
\end{align*}
where the final inequality leverages the facts that \(\Bv^\diamond \Wv_i^\diamond = \Thetav_i^\diamond\) and \(\|\Wv_i^\diamond\|_F \leq B\). 

Furthermore, utilizing \Cref{cor:label-is-correct}, we obtain
\begin{align}
    &\|\nabla f(\Wv_i^\diamond)\|_{(\Sigmav_i^\diamond + \lambda\Iv)^{-1}}^2\\
    &\leq
    C' \left(
        \left\| \Thetav_i^\star - \Thetav_i^\diamond \right\|_F^2 +
        \frac{1}{NN_p\nu} \log\left( \frac{ \Nc_{\Gc'_{\rbm}}(1/(NN_p)) }{ \delta } \right) +
        \frac{1}{\nu} \sqrt{ \frac{ \tail }{ N } } +
        \frac{ k d_2 + \log(N/\delta) }{ N_p }
    \right), \label{ineq:A.11}
\end{align}
where \(C' > 0\) is a constant. 

Notably, from \Cref{ineq:A.7.1}, by defining \(c = \frac{1+2c_3}{c_1 c_2}\) 
we have
\begin{align}
    \|\hat{w}_i - w_i^\diamond\|_{\Sigmav^\diamond}
    &\leq \sqrt{ c^2 \|\nabla f(w_i^\diamond)\|_{(\Sigmav_i^\diamond + \lambda\Iv)^{-1}}^2 + 2 c \lambda B \|\nabla f(w_i^\diamond)\|_{(\Sigmav_i^\diamond + \lambda\Iv)^{-1}} }. \label{ineq:A.12}
\end{align}

Therefore, by setting
\begin{align*}
    \lambda = \frac{c C'}{2B} \sqrt{
        \left\| \Thetav_i^\star - \Thetav_i^\diamond \right\|_F^2 +
        \frac{1}{N N_p \nu} \log\left( \frac{ \mathcal{N}_{\mathcal{G}'_{\rbm}}(1/(N N_p)) }{ \delta } \right) +
        \frac{1}{\nu} \sqrt{ \frac{ \tail }{ N } } +
        \frac{ k d_2 + \log(N/\delta) }{ N_p }
    },
\end{align*}
and by combining \Cref{ineq:A.11} with \Cref{ineq:C.14}, we obtain
\begin{align}
    &\|\hat{w}_i - w_i^\diamond\|_{\Sigmav^\diamond}^2 \\
    &\leq \sqrt{2} \, c C'
    \left(
        \left\| \Thetav_i^\star - \Thetav_i^\diamond \right\|_F^2 +
        \frac{1}{N N_p \nu} \log\left( \frac{ \mathcal{N}_{\mathcal{G}'_{\rbm}}(1/(N N_p)) }{ \delta } \right) +
        \frac{1}{\nu} \sqrt{ \frac{ \tail }{ N } } +
        \frac{ k d_2 + \log(N/\delta) }{ N_p }
    \right). \label{ineq:C.14}
\end{align}

Note that by combining \Cref{equ:A.4} with \Cref{def:c1} we have
\begin{align}
    \frac{1}{N_p} \sum_{j \in N_p} \Ac_{i,j}
    \leq 4LB^2 \left\| \Bv^\diamond - \hat{\Bv} \right\|^2 + 2c'_1 \left\| w_i^\diamond - \hat{w}_i \right\|^2_{\Sigmav^\diamond_i}.\label{ineq:C.14.1}
\end{align}

From \Cref{cor:label-is-correct}, we have
\begin{align}
    \left\| \Bv - \Bv^\diamond \right\|^2_F 
    \leq 2 \frac{C C_3}{NN_p \nu} \log\left( \frac{ \Nc_{\Gc'_{\rbm}}(1/(NN_p)) }{ \delta } \right) + 2 \frac{C C_4}{\nu} \kappa^2 \sqrt{ \frac{ \tail }{ N } }. \label{ineq:C.15}
\end{align}
Thus, by combining \Cref{ineq:C.14}, \Cref{ineq:C.14.1}, and \Cref{ineq:C.15}, we conclude that there exists a constant \(C_5 > 0\) such that
\begin{align}
    \frac{1}{N_p} \sum_{j \in N_p} \Ac_{i,j} 
    \leq C_5 \left(
        \left\| \Thetav_i^\star - \Thetav_i^\diamond \right\|^2 +
        \frac{1}{NN_p \nu} \log\left( \frac{ \Nc_{\Gc'_{\rbm}}(1/(NN_p)) }{ \delta } \right) +
        \frac{1}{\nu} \sqrt{ \frac{ \tail }{ N } } +
        \frac{ k d_2 + \log(N/\delta) }{ N_p }
    \right).\label{ineq:C.19.1}
\end{align}

\textbf{Step 2: Bounding term \(\Bc_{i}\).}

Note that from \Cref{ineq:C.8}, we have
\begin{align*}
    \|\hat{w}_i - w_i^\diamond\|^2
    \leq \frac{2}{L_2} \nabla f(w_i^\diamond)^\top (w_i^\diamond - \hat{w}_i) \leq \|\nabla f(w_i^\diamond)\|_{(\Sigmav_i^\diamond + \lambda \Iv)^{-1}} \|w_i^\diamond - \hat{w}_i\|_{\Sigmav_i^\diamond + \lambda \Iv}.
\end{align*}
Therefore, we obtain
\begin{align*}
    \|\hat{w}_i - w_i^\diamond\|^2 
    \leq \sqrt{ \lambda^2 \|\nabla f(w_i^\diamond)\|_{(\Sigmav_i^\diamond + \lambda \Iv)^{-1}}^2 + 2 \|\nabla f(w_i^\diamond)\|_{(\Sigmav_i^\diamond + \lambda \Iv)^{-1}} \|w_i^\diamond - \hat{w}_i\|_{\Sigmav_i^\diamond} }.
\end{align*}
Let
\begin{align*}
    \lambda \leq \min\left\{ 1,\ \frac{c C'}{2 B} \sqrt{
        \left\| \Thetav_i^\star - \Thetav_i^\diamond \right\|_F^2 +
        \frac{1}{N N_p \nu} \log\left( \frac{ \mathcal{N}_{\mathcal{G}'_{\rbm}}(1/(N N_p)) }{ \delta } \right) +
        \frac{1}{\nu} \sqrt{ \frac{ \tail }{ N } } +
        \frac{ k d_2 + \log(N/\delta) }{ N_p }
    } \right\},
\end{align*}
using \Cref{ineq:A.11} and \Cref{ineq:A.12}, we obtain
\begin{align}
    \|\hat{w}_i - w_i^\diamond\|^2
    \leq C_6
    \left(
        \left\| \Thetav_i^\star - \Thetav_i^\diamond \right\|_F^2 +
        \frac{1}{N N_p \nu} \log\left( \frac{ \mathcal{N}_{\mathcal{G}'_{\rbm}}(1/(N N_p)) }{ \delta } \right) +
        \frac{1}{\nu} \sqrt{ \frac{ \tail }{ N } } +
        \frac{ k d_2 + \log(N/\delta) }{ N_p }
    \right), \label{ineq:C.19}
\end{align}
for a constant \(C_6 \geq 0\). Leveraging the smoothness of \(r_{\theta}(\cdot)\) with respect to \(\theta\), we obtain the bound
\begin{align*}
    \Bc_{i,j}
    &\leq \left(\|\hat{\theta}_i-(\ikp\hat{\Bv})w_i^\diamond\|+\|(\ikp\hat{\Bv})w_i^\diamond-\theta^\diamond_i\|\right)^2\\
    &\leq
    \left(\|\hat{w}_i-w^\diamond_i\|+B\|\hat{\Bv}-\Bv^\diamond\|\right)^2.
\end{align*}
Therefore, applying \Cref{cor:5.2} and using \Cref{ineq:C.19} we obtain 
\begin{align}
    \Bc_{i,j}\leq C_7
    \left(
        \left\| \Thetav_i^\star - \Thetav_i^\diamond \right\|_F^2 +
        \frac{1}{N N_p \nu} \log\left( \frac{ \mathcal{N}_{\mathcal{G}'_{\rbm}}(1/(N N_p)) }{ \delta } \right) +
        \frac{1}{\nu} \sqrt{ \frac{ \tail }{ N } } +
        \frac{ k d_2 + \log(N/\delta) }{ N_p }
    \right)\label{ineq:C.20}
\end{align}

\textbf{Step 3: Putting \(\Ac_{i,j}\) and \(\Bc_{i,j}\) together.}

Combining \Cref{ineq:C.4.1}, \Cref{ineq:C.19.1} and \Cref{ineq:C.20} we have
\begin{align*}
    &\frac{1}{N_p}\sum_{j\in[N_p]}\left|(r_{\theta^\diamond_i}(\tau_{i, 0}^{(j)}) - r_{\theta^\diamond_i}(\tau_{i, 1}^{(j)})) - (r_{\hat{\theta}_i}(\tau_{i, 0}^{(j)}) - r_{\hat{\theta}_i}(\tau_{i, 1}^{(j)}))\right|^2\\
    &\quad\leq C_7
    \left(
        \left\| \Thetav_i^\star - \Thetav_i^\diamond \right\|_F^2 +
        \frac{1}{N N_p \nu} \log\left( \frac{ \mathcal{N}_{\mathcal{G}'_{\rbm}}(1/(N N_p)) }{ \delta } \right) +
        \frac{1}{\nu} \sqrt{ \frac{ \tail }{ N } } +
        \frac{ k d_2 + \log(N/\delta) }{ N_p }
    \right),
\end{align*}
where \(C_7>0\) is a constant. Therefore, combining with \Cref{equ:C.1} we obtain
\begin{align*}
        &\frac{1}{N_p}\sum_{j\in[N_p]}\left|(r_{\hat{\Thetav}_i}(\tau_{i, 0}^{(j)}) - r_{\hat{\Thetav}_i}(\tau_{i, 1}^{(j)})) - (r_{\Thetav_i^\star}(\tau_{i, 0}^{(j)}) - r_{\Thetav_i^\star}(\tau_{i, 1}^{(j)}))\right|^2 \\
    &\quad\leq \frac{1}{N_p}\sum_{j\in[N_p]}\left|(r_{\hat{\theta}_i}(\tau_{i, 0}^{(j)}) - r_{\hat{\theta}_i}(\tau_{i, 1}^{(j)})) - (r_{\theta_i^\diamond}(\tau_{i, 0}^{(j)}) - r_{\theta_i^\diamond}(\tau_{i, 1}^{(j)}))\right|^2 + 4L_1\|\Thetav_i^\star - \Thetav_i^\diamond\|^2\\
    &\quad\leq C_8\left(
        \left\| \Thetav_i^\star - \Thetav_i^\diamond \right\|_F^2 +
        \frac{1}{N N_p \nu} \log\left( \frac{ \mathcal{N}_{\mathcal{G}'_{\rbm}}(1/(N N_p)) }{ \delta } \right) +
        \frac{1}{\nu} \sqrt{ \frac{ \tail }{ N } } +
        \frac{ k d_2 + \log(N/\delta) }{ N_p }
    \right),
\end{align*}
where \(C_8>0\) is a constant. The proof is thus complete.
\end{proof}

Also, we introduce the following short lemma to upper bound the expected squared difference between the true reward differences and their estimates use \Cref{thm:C.1}.
\begin{lemma}\label{lemma:lemma6}
    Assume \Cref{assum:2} holds. For any \(\delta\in(0,1]\), if \(N\geq N_{\text{unif}}(\Gc_r,\mu_0,\mu_1,\delta)\), with probability at least \(1-\delta\), we have
    \begin{align*}
    &\mathop{\Eb}_{\tau_0\sim\mu_0, \tau_1\sim\mu_1}\left[\left|(r_{\theta^\star_i}(\tau_{0}) - r_{\theta^\star_i}(\tau_{1})) - (r_{\hat{\theta}_i}(\tau_{0}) - r_{\hat{\theta}_i}(\tau_{1}))\right|^2\right]\\
    &\quad\leq C_9
    \left(
        \left\| \Thetav_i^\star - \Thetav_i^\diamond \right\|_F^2 +
        \frac{1}{N N_p \nu} \log\left( \frac{ \mathcal{N}_{\mathcal{G}'_{\rbm}}(1/(N N_p)) }{ \delta } \right) +
        \frac{1}{\nu} \sqrt{ \frac{ \tail }{ N } } +
        \frac{ k d_2 + \log(N/\delta) }{ N_p }
    \right),
\end{align*}
where \(C_8> 0\) is a constant.
\end{lemma}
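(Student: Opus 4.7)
The plan is to read the claim as the population (expectation) counterpart of the empirical (finite-sample average) bound already proved in Theorem~\ref{thm:step2}, and to invoke Assumption~\ref{assum:2} to transport that empirical bound into an expected one. Concretely, this amounts to a short two-step argument combined by a union bound on the failure probability.

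First, I would re-read the left-hand side of Theorem~\ref{thm:step2} as exactly $\hat D_{\hat\Thetav_i,\Thetav_i^\star}(\mu_0,\mu_1)$ in the notation of Assumption~\ref{assum:2}, since the $N_p$ trajectory pairs in $\hat{\mathcal{D}}_i$ are i.i.d.\ draws from $\mu_0\times\mu_1$. Applying Theorem~\ref{thm:step2} at confidence level $\delta/2$ yields an event $\mathcal{E}_1$ with $\mathbb{P}(\mathcal{E}_1)\geq 1-\delta/2$ on which $\hat D_{\hat\Thetav_i,\Thetav_i^\star}(\mu_0,\mu_1)$ is bounded by $C_8$ times the same bracketed quantity appearing on the right-hand side of Lemma~\ref{lemma:lemma6} (the change $\delta\to\delta/2$ being absorbed into constants).

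Second, I would invoke Assumption~\ref{assum:2} at confidence level $\delta/2$: once $N_p \geq N_{\text{unif}}(\mathcal{G}_r,\mu_0,\mu_1,\delta/2)$, the lower sandwich bound $0.9\, D_{\Thetav_1,\Thetav_2}(\mu_0,\mu_1)\leq \hat D_{\Thetav_1,\Thetav_2}(\mu_0,\mu_1)$ holds simultaneously for \emph{every} pair $r_{\Thetav_1},r_{\Thetav_2}\in\mathcal{G}_r$ on an event $\mathcal{E}_2$ with $\mathbb{P}(\mathcal{E}_2)\geq 1-\delta/2$. Because $\hat\Thetav_i\in\mathcal{S}^{\text{ShareLoRA}}\subseteq\mathcal{S}$ and $\Thetav_i^\star\in\mathcal{S}$, the pair $(\hat\Thetav_i,\Thetav_i^\star)$ is covered by this uniform statement, so on $\mathcal{E}_2$,
\[
D_{\hat\Thetav_i,\Thetav_i^\star}(\mu_0,\mu_1)\leq \tfrac{10}{9}\,\hat D_{\hat\Thetav_i,\Thetav_i^\star}(\mu_0,\mu_1).
\]

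A union bound then gives $\mathbb{P}(\mathcal{E}_1\cap\mathcal{E}_2)\geq 1-\delta$, and chaining the two displayed inequalities on this intersection produces the claim with $C_9:=(10/9)\,C_8$. I do not anticipate a substantive obstacle: the only point worth flagging is that $\hat\Thetav_i$ is a random, data-dependent parameter, but this is handled automatically because the event in Assumption~\ref{assum:2} is \emph{uniform} over $\mathcal{G}_r$ and therefore covers every realization of $\hat\Thetav_i$.
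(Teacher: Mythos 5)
Your proposal is correct and takes essentially the same route as the paper: both transport the empirical bound of Theorem~\ref{thm:step2} to the population quantity by invoking the uniform concentration of Assumption~\ref{assum:2} on the pair \((\hat{\Thetav}_i,\Thetav_i^\star)\). Your write-up is in fact slightly more careful than the paper's, since you make the \(\delta/2\) union bound and the \(10/9\) constant explicit (the paper writes \(1.1\)) and you explicitly note that the uniformity over \(\mathcal{G}_r\) is what licenses applying the sandwich bound to the data-dependent estimate \(\hat{\Thetav}_i\).
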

\begin{proof}
From \Cref{assum:2}, if \(N\geq N_{\text{unif}}(\Gc_r,\mu_0,\mu_1,\delta)\), with probability at least \(1-\delta\), we have
\begin{align*}
    &\mathop{\Eb}_{\tau_0\sim\mu_0, \tau_1\sim\mu_1}\left[\left|(r_{\theta^\star_i}(\tau_{0}) - r_{\theta^\star_i}(\tau_{1})) - (r_{\hat{\theta}_i}(\tau_{0}) - r_{\hat{\theta}_i}(\tau_{1}))\right|^2\right]\\
    &\quad\leq
    \frac{1.1}{N_p}\sum_{j\in[N_p]}\left|(r_{\theta^\star_i}(\tau_{i, 0}^{(j)}) - r_{\theta^\star_i}(\tau_{i, 1}^{(j)})) - (r_{\hat{\theta}_i}(\tau_{i, 0}^{(j)}) - r_{\hat{\theta}_i}(\tau_{i, 1}^{(j)}))\right|^2\\
    &\quad\leq C_8
    \left(
        \left\| \Thetav_i^\star - \Thetav_i^\diamond \right\|_F^2 +
        \frac{1}{N N_p \nu} \log\left( \frac{ \mathcal{N}_{\mathcal{G}'_{\rbm}}(1/(N N_p)) }{ \delta } \right) +
        \frac{1}{\nu} \sqrt{ \frac{ \tail }{ N } } +
        \frac{ k d_2 + \log(N/\delta) }{ N_p }
    \right),
\end{align*}
where \(C_8> 0\) is a constant.  This is the desired result.
\end{proof}

With the assistance of \Cref{lemma:lemma6}, we are now prepared to prove \Cref{thm:diverse-newmodel}.

\diverse*
\begin{proof}
To simplify notation, let \( C_{\rbm} = C_{\rbm}(\Gc_{\rbm}, \pi_{i, \text{tar}}, \mu_{i, \text{ref}}, i) \). Following the approach in \citet{park2024rlhf}, define \( r_{\pi}^{i, \text{inf}} := \argmin_{\rbm \in \Rc_i} \left(J(\pi, r_i) - \mathbb{E}_{\tau \sim \mu_{i, \text{ref}}}[r_i(\tau)]\right) \). By the continuity of \( r \) and the definition of \(\Rc_i\), for any policy \(\pi\), we have
\begin{align*}
    |\hat{r}_{i}(\tau_{i, 1}) -\hat{r}_{i}(\tau_{i, 0})) - (r_{ \pi}^{i, \text{inf}}(\tau_{i, 1}) -r_{ \pi}^{i, \text{inf}}(\tau_{i, 0})|\leq L_1\zeta.
\end{align*}
Thus, it follows that
    \begin{equation*}
    \begin{aligned}
        &J(\pi_{i, \text{tar}}; r^\star_i) - J(\hat{\pi}_i; r^\star_i) 
        \\
        &=(J(\pi_{i, \text{tar}}; r^\star_i) - \Eb_{\tau \sim \mu_{i, \text{ref}}}[r^\star_i(\tau)]) - (J(\hat{\pi}_i; r^\star_i) -\Eb_{\tau \sim \mu_{i, \text{ref}}}[r^\star_i(\tau)])
        \\
        &\leq (J(\pi_{i, \text{tar}}; r^\star_i) - \Eb_{\tau \sim \mu_{i, \text{ref}}}[r^\star_i(\tau)])   - (J(\pi_{i, \text{tar}}; r_{ \pi_{i, \text{tar}}}^{i, \text{inf}}) - \Eb_{\tau \sim \mu_{i, \text{ref}}}[r_{\pi_{i, \text{tar}}}^{i, \text{inf}}(\tau)]) 
        \\
        &\qquad
        + (J(\hat{\pi}_{i}; r_{\hat{\pi}_i}^{i, \text{inf}}) - \Eb_{\tau \sim \mu_{i, \text{ref}}}(r_{\hat{\pi}_i}^{i, \text{inf}}(\tau)))- (J(\hat{\pi}_i; r^\star_i) -\Eb_{\tau \sim \mu_{i, \text{ref}}}[r^\star_i(\tau)])
        \\
        &\leq (J(\pi_{i, \text{tar}}; r^\star_i) - \Eb_{\tau \sim \mu_{i, \text{ref}}}[r^\star_i(\tau)])  - (J(\pi_{i, \text{tar}}; r_{ \pi_{i, \text{tar}}}^{i, \text{inf}}) - \Eb_{\tau \sim \mu_{i, \text{ref}}}[r_{\pi_{i, \text{tar}}}^{i, \text{inf}}(\tau)])
        \\
        &\leq \Eb_{\tau_{i, 0} \sim \pi_{i, \text{tar}}, \tau_{i, 1} \sim \mu_{i, \text{ref}}}[(r_{i}^\star(\tau_{i, 1}) -r_{i}^\star(\tau_{i, 0})) - (r_{ \pi_{i, \text{tar}}}^{i, \text{inf}}(\tau_{i, 1}) -r_{ \pi_{i, \text{tar}}}^{i, \text{inf}}(\tau_{i, 0}))]+L_1\zeta
        \\
        &\leq C_{\rbm}\sqrt{\Eb_{\mu_0, \mu_1}\left[ \big| {(r_{i}^\star(\tau_{i, 1}) -r_{i}^\star(\tau_{i, 0})) - (\hat{r}_{i}(\tau_{i, 1}) -\hat{r}_{i}(\tau_{i, 0}))}\big|^2\right]}+L_1\zeta
       \\
        &\leq \sqrt{C  C_{\rbm}^2 \left(
        \left\| \Thetav_i^\star - \Thetav_i^\diamond \right\|_F^2 +
        \frac{1}{N N_p \nu} \log\left( \frac{ \mathcal{N}_{\mathcal{G}'_{\rbm}}(1/(N N_p)) }{ \delta } \right) +
        \frac{1}{\nu} \sqrt{ \frac{ \tail }{ N } } +
        \frac{ k d_2 + \log(N/\delta) }{ N_p }
    \right)} 
    \end{aligned} 
\end{equation*}
where $C>0$ is a constant. The proof is thus complete.
\end{proof}

\subsection[Proof of Corollary~\ref{thm:averaged-gap}]{Proof of \Cref{thm:averaged-gap}}

\average*

\begin{proof}
    From \Cref{thm:diverse-newmodel}, by summing over \(i \in [N]\), we obtain the following inequality:
    \begin{align*}
        &\frac{1}{N} \sum_{i \in [N]} J(\pi_{i, \text{tar}}; r^\star_i) - J(\hat{\pi}_i'; r^\star_i) \\
        &\leq \sqrt{c_3 \left( \frac{1}{N} \sum_{i \in [N]} \left\| \Thetav_i^\star - \Thetav_i^\diamond \right\|_F^2 +
        \frac{1}{N N_p \nu} \log\left( \frac{ \mathcal{N}_{\mathcal{G}'_{\rbm}}(1/(N N_p)) }{ \delta } \right) +
        \frac{1}{\nu} \sqrt{ \frac{ \tail }{ N } } +
        \frac{ k d_2 + \log(N/\delta) }{ N_p }
    \right)}.
    \end{align*}
    
    Furthermore, we can derive the following bound:
    \begin{align*}
        &\frac{1}{N} \sum_{i \in [N]} J(\pi_{i, \text{tar}}; r^\star_i) - J(\hat{\pi}_i'; r^\star_i) \\
        &\leq
        \sqrt{c_4 \left( \frac{1}{N N_p \nu} \log\left( \frac{\mathcal{N}_{\mathcal{G}'_{\rbm}}(1/(N N_p))}{\delta} \right) + \frac{1}{\nu} \sqrt{\frac{\tail}{N}} + \frac{k d_2 + \log(N/\delta)}{N_p} \right)}.
    \end{align*}
    
    Here, the inequality holds because \(\frac{\tail}{N} \leq \frac{1}{\mu} \sqrt{\frac{\tail}{N}}\) for \(N \geq \mu^2 \tail\).  This proves the corollary.
\end{proof}

\section[Deferred Proofs in~\ref{sec:results}]{Deferred Proofs in \Cref{sec:results}}\label{apx:D}


First, we introduce two auxiliary lemmas.
\begin{lemma}
For reward function \(r\), suppose \Cref{assumption:reward} holds. Then, we have
    \begin{align*}
    \frac{1}{N} \sum_{i \in [N]} \left|\log\Phi(r_i^\star(\tau_0) - r_i^\star(\tau_1)) - \log\Phi(r_{\Thetav^{\init}+\Thetav_i^\diamond}(\tau_0) + r_{\Thetav^{\init}+\Thetav_i^\diamond}(\tau_1))\right|
    \leq 2LL' \sqrt{\frac{\tail}{N}}.
\end{align*}
\end{lemma}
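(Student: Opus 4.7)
The plan is to mimic the proof of the earlier \Cref{lemma:4} but with the argument shifted to account for the nonzero initialization \(\Thetav^{\init}\). The key observation is that, by construction, \(\Thetav^\diamond\) is the best rank-\(k\) approximation not to \(\Thetav^\star\) itself but to the \emph{difference} matrix \(\Delta\Thetav^\star = \Thetav^\star - \Thetav^{\init}\); hence \(\Thetav^{\init} + \Thetav_i^\diamond\) is the corresponding best rank-\(k\) estimate of \(\Thetav_i^\star\), and the relevant per-user error is \(\|\Delta\Thetav_i^\star - \Thetav_i^\diamond\|_F\). I also read the second occurrence of \(r_{\Thetav^{\init}+\Thetav_i^\diamond}(\tau_0)+r_{\Thetav^{\init}+\Thetav_i^\diamond}(\tau_1)\) in the statement as a typo for a minus sign, matching \Cref{lemma:4} and the way such differences appear throughout the paper.

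First, I would apply the \(L\)-Lipschitz continuity of \(x \mapsto \log\Phi(x)\) pointwise in \(i\) to pass from the preference-log-likelihood difference to the reward difference:
\begin{align*}
\bigl|\log\Phi(r_i^\star(\tau_0)-r_i^\star(\tau_1)) - \log\Phi(r_{\Thetav^{\init}+\Thetav_i^\diamond}(\tau_0)-r_{\Thetav^{\init}+\Thetav_i^\diamond}(\tau_1))\bigr|
\leq L\,\bigl|\,(r_i^\star(\tau_0)-r_i^\star(\tau_1)) - (r_{\Thetav^{\init}+\Thetav_i^\diamond}(\tau_0)-r_{\Thetav^{\init}+\Thetav_i^\diamond}(\tau_1))\bigr|.
\end{align*}
Next, invoking \Cref{assumption:reward}, the map \(\Thetav\mapsto r_{\Thetav}(\tau)\) is \(L_1\)-Lipschitz in the vectorized parameter, so evaluating at \(\tau_0\) and \(\tau_1\) and using the triangle inequality I would bound the right-hand side by \(2L_1\,\|\Thetav_i^\star - \Thetav^{\init} - \Thetav_i^\diamond\|_F = 2L_1\,\|\Delta\Thetav_i^\star - \Thetav_i^\diamond\|_F\).

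Averaging over \(i\) and applying Jensen's inequality (or equivalently Cauchy--Schwarz between the all-ones vector and the error vector) gives
\begin{align*}
\frac{1}{N}\sum_{i\in[N]}\|\Delta\Thetav_i^\star-\Thetav_i^\diamond\|_F
\;\leq\; \sqrt{\frac{1}{N}\sum_{i\in[N]}\|\Delta\Thetav_i^\star-\Thetav_i^\diamond\|_F^{2}}
\;=\; \sqrt{\frac{1}{N}\,\|\Delta\Thetav^\star-\Thetav^\diamond\|_F^{2}}.
\end{align*}
The final step uses the Eckart--Young characterization already invoked in \Cref{lemma:4}: since \(\Thetav^\diamond\) is the optimal rank-\(k\) truncation of \(\Delta\Thetav^\star\) (see \Cref{def:5.1}), we have \(\|\Delta\Thetav^\star-\Thetav^\diamond\|_F^{2} = \sum_{j>k}\sigma_j^{2}(\Delta\Thetav^\star) = \tail\). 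Chaining the inequalities yields the claimed bound \(2LL_1\sqrt{\tail/N}\), matching the paper's \(L'\) with the Lipschitz constant \(L_1\).

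I do not anticipate a substantive obstacle here: the only moving piece compared with \Cref{lemma:4} is the bookkeeping of the initialization offset. The potentially delicate point is just confirming that the Eckart--Young step is being applied to \(\Delta\Thetav^\star\) rather than \(\Thetav^\star\), so that \(\tail\) in \Cref{def:task_diversity_sharp} is exactly the residual energy beyond the rank-\(k\) truncation that defines \(\Thetav^\diamond\); once that identification is made, the remaining computation is a direct reuse of Lipschitz continuity and Jensen's inequality.
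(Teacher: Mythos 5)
Your proposal is correct and follows essentially the same route as the paper's own proof: Lipschitz continuity of \(\log\Phi\), then Lipschitz continuity of \(\Thetav\mapsto r_{\Thetav}(\tau)\) to reach \(2L'\|\Delta\Thetav_i^\star-\Thetav_i^\diamond\|_F\), then Cauchy--Schwarz over \(i\) and the Eckart--Young identification \(\|\Delta\Thetav^\star-\Thetav^\diamond\|_F^2=\tail\). Your reading of the second \(+\) in the statement as a typo for \(-\), and of \(L'\) as the constant \(L_1\) from \Cref{assumption:reward}, both match what the paper's proof actually does.
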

\begin{proof}
 From the \(L\)-Lipschitz continuity of the function \(\log\Phi(x)\), for any trajectories \(\tau_0\) and \(\tau_1\), we have
\begin{align*}
    &\left|\log\Phi(r_i^\star(\tau_0) - r_i^\star(\tau_1)) - \log\Phi(r_{\Thetav^{\init}+\Thetav_i^\diamond}(\tau_0) - r_{\Thetav^{\init}+\Thetav_i^\diamond}(\tau_1))\right|\\
    &\leq L \left|r_i^\star(\tau_0) - r_i^\star(\tau_1) - r_{\Thetav^{\init}+\Thetav_i^\diamond}(\tau_0) + r_{\Thetav^{\init}+\Thetav_i^\diamond}(\tau_1)\right|.
\end{align*}
From the \(L'\)-Lipschitz continuity of the function \(r(\tau;\Thetav)\) with respect to \(\Thetav\), we have
\begin{align*}
    \left|r_i^\star(\tau_0) - r_i^\star(\tau_1) - r_{\Thetav^{\init}+\Thetav_i^\diamond}(\tau_0) + r_{\Thetav^{\init}+\Thetav_i^\diamond}(\tau_1)\right|
    \leq 2L' \|\Thetav_i^\star -\Thetav^{\init}-\Thetav_i^\diamond\|_F.
\end{align*}
Therefore, we have
\begin{align*}
    &\frac{1}{N}\sum_{i \in [N]} \left|r_i^\star(\tau_0) - r_i^\star(\tau_1) - r_{\Thetav^{\init}+\Thetav_i^\diamond}(\tau_0) + r_{\Thetav^{\init}+\Thetav_i^\diamond}(\tau_1)\right|\\
    &\leq \sqrt{\frac{1}{N} \sum_{i \in [N]} \left(r_i^\star(\tau_0) - r_i^\star(\tau_1) - r_{\Thetav^{\init}+\Thetav_i^\diamond}(\tau_0) + r_{\Thetav^{\init}+\Thetav_i^\diamond}(\tau_1)\right)^2} \\
    &\leq 2L' \sqrt{\frac{1}{N} \|\Thetav^\star - \Thetav^{\init,(N)}-\Thetav^\diamond\|_F^2}\\
    & = 2L' \sqrt{\frac{1}{N} \|\Delta\Thetav^\star-\Thetav^\diamond\|_F^2}.
\end{align*}
Note that \(\Thetav^\diamond\) can be derived from the truncated SVD of \(\Delta\Thetav^\star\), retaining the top \(k\) singular values \citep{golub2013matrix,liu2024federated}. Consequently, we have
\begin{align*}
    \frac{1}{N} \sum_{i \in [N]} \left|r_i^\star(\tau_0) - r_i^\star(\tau_1) - r_{\Thetav^{\init}+\Thetav_i^\diamond}(\tau_0) + r_{\Thetav^{\init}+\Thetav_i^\diamond}(\tau_1)\right|
    \leq 2L' \sqrt{\frac{\tail}{N}}.
\end{align*}
Then we obtain
\begin{align*}
    \frac{1}{N} \sum_{i \in [N]} \left|\log\Phi(r_i^\star(\tau_0) - r_i^\star(\tau_1)) - \log\Phi(r_{\Thetav^{\init}+\Thetav_i^\diamond}(\tau_0) + r_{\Thetav^{\init}+\Thetav_i^\diamond}(\tau_1))\right|
    \leq 2LL' \sqrt{\frac{\tail}{N}},
\end{align*}
which completes the proof.
\end{proof}

\begin{lemma}\label{lemma:7}
For local reward models parameterized by { \(\{\Thetav_i\}_{i=1}^N\)} with { \(\Thetav_i = \Thetav^{\init}+\Delta\Thetav_i\)}, if there exists a constant \(\delta > 0\) such that
\begin{align*}
    \sum_{i \in [N]} \Eb_{\mu_0, \mu_1} \left[ \norm{P_{\Thetav_i} \left( \cdot \mid \tau_{i, 0}^{(j)}, \tau_{i,1}^{(j)} \right) - P_{\Thetav^\star_i} \left( \cdot \mid \tau_{i, 0}^{(j)}, \tau_{i,1}^{(j)} \right)}^2 \right] \leq \delta,
\end{align*}
then for \(\Bv\), \(\Bv^\diamond\in\Rb^{d_1\cdot k}\) with orthonormal columns satisfies \(\spn(\Bv)=\spn(\Delta\Thetav)\) and \(\spn(\Bv^\diamond) = \spn(\Delta\Thetav^\star)\), there exists a constant \(C>0\) such that
\begin{align*}
    \dist(\Bv, \Bv^\diamond) \leq \frac{\|\Delta\Thetav -(\Thetav^\star - \Thetav^{\init,(N)})\|_F^2}{(\delta')^2} \leq C\frac{\delta}{N\nu},
\end{align*}
where { \(\nu = \sigma_k\left(\frac{(\Delta\Thetav^\star)^T \Delta\Thetav^\star}{N}\right)\).}
\end{lemma}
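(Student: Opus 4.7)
The plan is to generalize Lemma~6 to the setting with a common (possibly nonzero) initialization $\Thetav^{\init}$, essentially by noting that the Davis--Kahan argument is translation invariant once we shift both the learned and ground-truth parameter matrices by $\Thetav^{\init,(N)}$. First I would invoke the Mean Value Theorem (exactly as in the proof of Lemma~6) to pass from the preference-distribution bound to a parameter-space bound: since $P_{\Thetav_i}$ depends smoothly on $\Thetav_i$ via the underlying reward function, there exists a constant $C>0$ such that
\begin{align*}
\|\Thetav-\Thetav^\star\|_F^2 \;\le\; C\sum_{i\in[N]}\Eb_{\mu_0,\mu_1}\!\left[\bigl\|P_{\Thetav_i}(\cdot\mid\tau_{i,0},\tau_{i,1})-P_{\Thetav^\star_i}(\cdot\mid\tau_{i,0},\tau_{i,1})\bigr\|^2\right]\;\le\; C\delta.
\end{align*}
Because $\Thetav=\Thetav^{\init,(N)}+\Delta\Thetav$, the left-hand side equals $\|\Delta\Thetav-(\Thetav^\star-\Thetav^{\init,(N)})\|_F^2$, which, recalling $\Delta\Thetav^\star=\Thetav^\star-\Thetav^{\init,(N)}$, is just $\|\Delta\Thetav-\Delta\Thetav^\star\|_F^2$. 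This already yields the first inequality claimed in the lemma.

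Next I would apply the Davis--Kahan theorem to the pair of matrices $\Delta\Thetav$ and $\Delta\Thetav^\star$, rather than to $\Thetav$ and $\Thetav^\star$ as in Lemma~6. Since $\Delta\Thetav$ has rank at most $k$, its singular values beyond the $k$-th vanish, so the singular-value separation satisfies
\begin{align*}
\delta' \;=\; \min_{\substack{1\le i\le k \\ k+1\le j\le \min\{d_1,Nd_2\}}}\bigl|\sigma_i(\Delta\Thetav^\star)-\sigma_j(\Delta\Thetav)\bigr|\;=\;\sigma_k(\Delta\Thetav^\star).
\end{align*}
Davis--Kahan then gives $\dist^2(\Bv,\Bv^\diamond)\le \|\Delta\Thetav-\Delta\Thetav^\star\|_F^2/(\delta')^2$. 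Combining with the previous step and using the definition $\nu=\sigma_k((\Delta\Thetav^\star)^\top\Delta\Thetav^\star)/N=\sigma_k^2(\Delta\Thetav^\star)/N$, so $(\delta')^2=N\nu$, yields the final bound $\dist^2(\Bv,\Bv^\diamond)\le C\delta/(N\nu)$.

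There is really no novel obstacle here: the argument is mechanically identical to Lemma~6, the only change being that the relevant objects are the adaptation matrices $\Delta\Thetav,\Delta\Thetav^\star$ instead of the raw parameter matrices $\Thetav,\Thetav^\star$. The only small subtlety worth flagging in the write-up is confirming that the Mean Value Theorem constant is unchanged when we translate by $\Thetav^{\init,(N)}$, which is immediate because the reward-function smoothness assumption (Assumption~\ref{assumption:reward}) is a global one on the parameter space and the Frobenius norm is translation invariant. Thus the lemma follows from exactly the same two ingredients---smoothness of $P_{\Thetav}$ in $\Thetav$, and the Davis--Kahan subspace perturbation bound---applied to the shifted matrices.
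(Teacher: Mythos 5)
Your proposal is correct and follows essentially the same route as the paper's own proof: a Mean Value Theorem step to pass from the preference-distribution bound to $\|\Delta\Thetav-\Delta\Thetav^\star\|_F^2\le C\delta$, followed by Davis--Kahan applied to the shifted matrices with singular-value gap $\delta'=\sigma_k(\Delta\Thetav^\star)$, so that $(\delta')^2=N\nu$. Your added remark that the gap collapses to $\sigma_k(\Delta\Thetav^\star)$ because $\Delta\Thetav$ has rank at most $k$ is a justification the paper leaves implicit, but the argument is otherwise identical.
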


\begin{proof}
From the Mean Value Theorem, there exists a constant \(C > 0\) such that 
\begin{align*}
    \|\Thetav - \Thetav^\star\|_F^2 \leq C \sum_{i \in [N]} \Eb_{\mu_0, \mu_1} \left[ \norm{P_{\Thetav_i} \left( \cdot \mid \tau_{i, 0}^{(j)}, \tau_{i,1}^{(j)} \right) - P_{\Thetav^\star_i} \left( \cdot \mid \tau_{i, 0}^{(j)}, \tau_{i,1}^{(j)} \right)}^2 \right] \leq C\delta.
\end{align*}
Define \(\delta' := \min_{1 \leq i \leq k, k+1 \leq j \leq \min\{d_1, Nd_2\}} \left|\sigma_i(\Delta\Thetav^\star) - \sigma_j(\Delta\Thetav)\right|\). Then, we observe that
\begin{align*}
    \delta' = \min_{1 \leq i \leq k, k+1 \leq j \leq \min\{d_1, Nd_2\}} \left|\sigma_i(\Delta\Thetav^\star) - \sigma_j(\Delta\Thetav)\right| = \sigma_k(\Delta\Thetav^\star).
\end{align*}
Next, by applying the Davis-Kahan Theorem, we obtain
\begin{align*}
    \text{dist}^2(\Bv, \Bv^\diamond) \leq \frac{\|\Delta\Thetav^\star - \Delta\Thetav\|_F^2}{(\delta')^2} \leq C \frac{\delta}{\sigma_k^2(\Delta\Thetav^\star)} = C \frac{\delta}{N\nu}.
\end{align*}
This proves the lemma.
\end{proof}

\subsection[Proof of Proposition~\ref{prop:1}]{Proof of \Cref{prop:1}}

In this section, we introduce the upper bound for the bracketing number of function class \(\mathcal{G}_{\mathbf{r}}(\Sc^{\text{ShareLoRA}})\), which is denoted by \(\Gc'_{\rbm}\).

\cond*
\begin{proof}
We start from the zero initialization case, therefore \(\Gc'_{\rbm}\) is equilvant to:
    \[
\mathcal{G}'_{\rbm} = \left\{ \left( r_{\boldsymbol{\Theta}_i}(\cdot) \right)_{i \in [N]} \,\Big|\, \Thetav \in \mathbb{R}^{d_1 \cdot N d_2},\ \text{rank}(\Thetav) = k,\ \|\Thetav_i\|_F \leq B,\ \forall i \in [N] \right\}.
\]
Similar to the proof in \citet[Proposition 1]{zhan2023provable}, we denote by \(\Fc\) the function class
\begin{align*}
    \Fc_{\rbm} = \left\{ \left( f_{i}(\cdot) \right)_{i \in [N]} \,\Big|\, f_i(\tau_0,\tau_1) = P_{r_i}(o=1 \mid \tau_0,\tau_1), \left( r_{i}(\cdot) \right)_{i \in [N]} \in \Gc'_{\rbm} \right\}.
\end{align*}
Let \(\Ic_{\Fc}(\epsilon)\) denote the \(\epsilon\)-bracket number with respect to the \(\ell_\infty\) norm. Therefore, there exist a set \(\bar{\Fc}\) satisfies \(|\bar{\Fc}|=\Ic_{\Fc}(\epsilon/4N)\) such that for any \(\left( f_{i}(\cdot) \right)_{i \in [N]}\in\Gc_{\rbm}\), there exist \(\left( \bar{f}_{i}(\cdot) \right)_{i \in [N]}\in \bar{\Fc}\) such that 
\begin{align*}
    \sup_{\tau_0,\tau_1}\left|f_i(\tau_0,\tau_1)-\bar{f}_i(\tau_0,\tau_1)\right|\leq\frac{\epsilon}{4N},\quad\forall i\in[N].
\end{align*}
Given \((\bar{f}_i)_{i\in[N]}\), construct a bracket \((g_{1}, g_{2})\):
\begin{align*}
    &[g_1(o=1|\tau_0,\tau_1)]_i=\bar{f}_i-\frac{\epsilon}{4N},\quad [g_1(o=0|\tau_0,\tau_1)]_i=1-\bar{f}_i-\frac{\epsilon}{4N},\\
    &[g_2(o=1|\tau_0,\tau_1)]_i=\bar{f}_i+\frac{\epsilon}{4N},\quad [g_2(o=0|\tau_0,\tau_1)]_i=1-\bar{f}_i+\frac{\epsilon}{4N}.
\end{align*}
Then, we observe that \((g_{1}, g_{2})\) satisfies \(g_1(\tau_0, \tau_1) \leq g_2(\tau_0, \tau_1)\), \(\| g_1(\tau_0, \tau_1) - g_2(\tau_0, \tau_1) \|_1 \leq \epsilon\) and \( g_{1}(\tau_0, \tau_1) \leq P_{\rbm}(\cdot \mid \tau_0, \tau_1) \leq g_{2}(\tau_0, \tau_1)\). Therefore, our goal is to bound \(\Ic_{\Fc}(\epsilon/4N)\). From the mean value theorem, for \(a,b\in[-2R,2R]\),  there exist constant \(C_{R} = \max_{a\in[-2R,2R]}|\Phi'(a)|\) such that 
\begin{align*}
    |\Phi(b)-\Phi(a)|\leq C_R |b-a|.
\end{align*}
Denote \(\fbm = [f_1,\cdots,f_N]^\top\), we obtain 
\begin{align}
    \left|\fbm_i(\tau_0,\tau_1)-\bar{\fbm}_i(\tau_0,\tau_1)\right|
    &\leq C_R|\rbm(\tau_0)-\rbm(\tau_1)-\rbm'(\tau_0)+\rbm'(\tau_1)|\nonumber\\
    &\leq 2C_RL_1\|\mtov(\Thetav)-\mtov(\Thetav')\|\nonumber\\
    &=2C_RL_1\left\|\diag(\Bv)\mtov(\Wv)-\diag(\Bv')\mtov(\Wv')\right\|\nonumber\\
    &\leq2C_RL_1\left\|\mtov(\Wv)-\mtov(\Wv')\right\|+2C_RL_1B\left\|\diag(\Bv)-\diag(\Bv')\right\|\nonumber\\
    &\leq \max\{1,B\}2C_RL_1\mleft\|\begin{bmatrix}
        \mtov(\Bv)\\
        \mtov(\Wv)
    \end{bmatrix}-\begin{bmatrix}
        \mtov(\Bv')\\
        \mtov(\Wv')
    \end{bmatrix}\mright\|.\label{ineq:B.25}
\end{align}

Denote \( C_R' = \max\{1, B\} \cdot 2C_R L_1 \). From \Cref{ineq:B.25}, we conclude that the \(\epsilon/4N\)-bracket number \(\Ic_{\Fc}(\epsilon/4N)\) is bounded by the \(\epsilon'\)-covering number of a \((d_1k + Nd_2k)\)-dimensional ball centered at the origin with radius \(B\) with respect to the \(\ell_2\) norm, where
\begin{align*}
    \epsilon' = \frac{\epsilon}{4N \cdot \max\{1, B\} \cdot 2C_R L_1}.
\end{align*}
According to \citet{wainwright2019high}, this covering number is upper bounded by
{ \(\Oc\left( (d_1k + Nd_2k) \log\left( \frac{N}{\epsilon} \right) \right)\)}.
Therefore, for \(\epsilon = 1/(NN_p)\), we conclude that the covering number \(\Nc_{\Gc'_{\rbm}}\left(1/(NN_p) \right)\) is upper bounded by { \(\Oc\left( (d_1k + Nd_2k) \log(NN_p) \right)\)}. We note that for \(\mathcal{G}'_{\mathbf{r}|\Thetav^\init}\), following the same proof process, we can show that 
\[\mathcal{G}'_{\mathbf{r}|\Thetav^\init}\leq\Oc\left( (d_1k + Nd_2k) .\log(NN_p) \right).\]
The proof is thus complete.
\end{proof}

\subsection[Proof of Theorem~\ref{cor:5.2}]{Proof of \Cref{cor:5.2}}

We note that the proof of \Cref{cor:5.2} is a natural extension of the argument used in \Cref{cor:label-is-correct} as detailed in \Cref{sec:apx.A.1}.
\corsft*
\begin{proof}
We define the event $\Ec_1, \Ec_2$ as satisfying \Cref{lemma:mle}, \Cref{lemma:l2distance} with $\delta \leftarrow \delta/2$, respectively, so we have $\Pb(\Ec_1\cap \Ec_2) > 1- \delta$. We will only consider the under event $\Ec_1\cap \Ec_2$. From \Cref{lemma:4}, we have
    \begin{align*}
        \sum_{i \in [N]} \sum_{j \in [N_{p}]}
    \log P_{\Thetav^\star_i}(o_i^{(j)} \mid \tau_{i, 0}^{(j)}, \tau_{i,1}^{(j)})\leq \sum_{i \in [N]} \sum_{j \in [N_{p}]}
    \log P_{\Thetav^\diamond_i+\Thetav^{\init}}(o_i^{(j)} \mid \tau_{i, 0}^{(j)}, \tau_{i,1}^{(j)})+cN_p\sqrt{N\tail},
    \end{align*}
    Then, from the definition of \(\hat{\Thetav}\), we have
    \begin{align*}
        \sum_{i \in [N]} \sum_{j \in [N_{p}]}
    \log P_{\Thetav^\star_i}(o_i^{(j)} \mid \tau_{i, 0}^{(j)}, \tau_{i,1}^{(j)})\leq \sum_{i \in [N]} \sum_{j \in [N_{p}]}
    \log P_{\hat{\Thetav}_i}(o_i^{(j)} \mid \tau_{i, 0}^{(j)}, \tau_{i,1}^{(j)})+cN_p\sqrt{N\tail}.
    \end{align*}
    Therefore, similar to the proof in \Cref{thm:averaged-gap}, we obtain that
\begin{equation}
\begin{aligned}
        &\frac{1}{N}\sum_{i \in [N]} \Eb_{\mu_0, \mu_1} \left[ \left|(r_{\Thetav_i}(\tau_{i, 0}) - r_{\Thetav_i}(\tau_{i, 1})) - (r_{i}^\star(\tau_{i, 0}) - r_{i}^\star(\tau_{i, 1}))\right|^2 \right] 
        \\
        &\leq\frac{ \kappa^2 }{N}\sum_{i \in [N]} \Eb_{\mu_0, \mu_1}\left[ \norm{P_{\Thetav_i} ( \cdot \mid \tau_{i, 0}^{(j)}, \tau_{i,1}^{(j)}, i) - P_{\Thetav_i^\star} ( \cdot \mid \tau_{i, 0}^{(j)}, \tau_{i,1}^{(j)}, i)}_1^2 \right] 
        \\
        &\leq\frac{C_3\kappa^2}{NN_p} \log(\Nc_{\Gc_{\rbm}}(1/(NN_p))/ \delta)+C_4\kappa^2\sqrt{\frac{\tail}{N}}.
\end{aligned}
\label{eqn:C.2}    
\end{equation}
Combining this with \Cref{lemma:7}, we obtain
\begin{align*}
\text{dist}^2(\Bv, \Bv^\diamond) \leq 
\frac{C C_3}{NN_p \nu} \log\left(\frac{\Nc_{\Gc_{\rbm}}(1/(NN_p))}{\delta}\right) + \frac{C C_4}{\nu} \kappa^2 \sqrt{\frac{\tail}{N}}.
\end{align*}
Additionally, we have
\begin{align*}
\|\Bv - \Bv^\diamond\|^2_F \leq \text{dist}^2(\Bv, \Bv^\diamond) \leq 
\frac{C C_3}{NN_p \nu} \log\left(\frac{\Nc_{\Gc_{\rbm}}(1/(NN_p))}{\delta}\right) + \frac{C C_4}{\nu} \kappa^2 \sqrt{\frac{\tail}{N}}.
\end{align*}
Hence, the proof is complete.
\end{proof}

\subsection[Proof of Theorem~\ref{thm:worst-case-sft}]{Proof of \Cref{thm:worst-case-sft}}

We set the tolerance level \(\zeta\) in \Cref{alg:personal} to satisfy
\begin{equation}
    \zeta \leq c_4 \sqrt{
        \min_{i \in [N]} \left\| \Thetav_i^\star - \Thetav_i^\diamond \right\|_F^2 +
        \frac{\log\left( \mathcal{N}_{\mathcal{G}'_{\mathbf{r}}}\left(\frac{1}{N N_p}\right) / \delta \right)}{N N_p \nu} +
        \frac{1}{\nu} \sqrt{ \frac{\tail}{N} } +
        \frac{ k d_2 + \log\left(\frac{N}{\delta}\right) }{ N_p }
    }, \label{def:zeta'}
\end{equation}
where \(c_4 > 0\) is a constant. In \Cref{thm:worst-case-sft}, we build upon the proof established for \Cref{thm:diverse-newmodel} in \Cref{sec:apx.A.1}.

\worstcasesft*

\begin{proof}
Recall that \(\Delta\Thetav^\star_i = \Thetav^\star_i-\Thetav^{\init}\) and \(\Thetav^\diamond = \argmin_{\Thetav:\rank(\Thetav)=k}\|\Thetav-\Delta\Thetav^\star\|\).
By leveraging the continuity of \(r_{\Thetav}(\cdot)\), we can establish the following inequality:
\begin{align*}
    &\left|(r_{\hat{\Thetav}_i}(\tau_{i, 0}^{(j)}) - r_{\hat{\Thetav}_i}(\tau_{i, 1}^{(j)})) - (r_{\Thetav_i^\star}(\tau_{i, 0}^{(j)}) - r_{\Thetav_i^\star}(\tau_{i, 1}^{(j)}))\right|^2 \\
    &\leq
    2\left|(r_{\hat{\Thetav}_i}(\tau_{i, 0}^{(j)}) - r_{\hat{\Thetav}_i}(\tau_{i, 1}^{(j)})) - (r_{\Thetav^{\init}+\Thetav_i^\diamond}(\tau_{i, 0}^{(j)}) - r_{\Thetav^{\init}+\Thetav_i^\diamond}(\tau_{i, 1}^{(j)}))\right|^2 + 4L\|\Delta\Thetav_i^\star - \Thetav_i^\diamond\|^2.
\end{align*}
Therefore, similar to our proof in \Cref{thm:C.1}, we will show that with probability at least \(1-\delta\), we have
\begin{align*}
        &J(\pi_{i, \text{tar}}; r_i) - J(\hat{\pi}_i; r^\star_i)\\ &\leq 
         c_2\sqrt{ \left(
        \left\| \Delta\Thetav_i^\star - \Thetav_i^\diamond \right\|_F^2 +
        \frac{\log\left( \mathcal{N}_{\mathcal{G}_{\rbm}}(1/(N N_p))/ \delta \right)}{N N_p \nu} +
        \frac{1}{\nu} \sqrt{ \frac{ \tail }{ N } } +
        \frac{ k d_2 + \log(N/\delta) }{ N_p }
    \right)} .
\end{align*}
This concludes the proof.
\end{proof}

\subsection[Proof of Corollary~\ref{thm:average-sft}]{Proof of \Cref{thm:average-sft}}

We set the tolerance level \(\zeta\) in \Cref{alg:personal} as defined in \Cref{def:zeta'}. Similarly, the proof of \Cref{thm:average-sft} follows that of \Cref{thm:averaged-gap} presented in \Cref{sec:apx.A.1}.

\averagesft*
\begin{proof}
    From \Cref{thm:worst-case-sft}, by summing over \(i \in [N]\), we obtain the following inequality:
    \begin{align*}
        &\frac{1}{N} \sum_{i \in [N]} J(\pi_{i, \text{tar}}; r^\star_i) - J(\hat{\pi}_i'; r^\star_i) \\
        &\leq c_2\sqrt{\left( \frac{1}{N} \sum_{i \in [N]} \left\| \Thetav_i^\star - \Thetav_i^\diamond \right\|_F^2 +
        \frac{1}{N N_p \nu} \log\left( \frac{ \mathcal{N}_{\mathcal{G}'_{\rbm}}(1/(N N_p)) }{ \delta } \right) +
        \frac{1}{\nu} \sqrt{ \frac{ \tail }{ N } } +
        \frac{ k d_2 + \log(N/\delta) }{ N_p }
    \right)}.
    \end{align*}
    
    Similar to the proof for \Cref{thm:averaged-gap}, we have
    \begin{align*}
        \frac{1}{N} \sum_{i \in [N]} J(\pi_{i, \text{tar}}; r^\star_i) - J(\hat{\pi}_i'; r^\star_i) \leq
        c_3\sqrt{ \left( \frac{1}{N N_p \nu} \log\left( \frac{\mathcal{N}_{\mathcal{G}'_{\rbm}}(1/(N N_p))}{\delta} \right) + \frac{1}{\nu} \sqrt{\frac{\tail}{N}} + \frac{k d_2 + \log(N/\delta)}{N_p} \right)}.
    \end{align*}
This is the desired result.
\end{proof}

\newpage
\section{Experiment Details}

\subsection{Algorithms}\label{apx:alg}

In this section, we present the practical algorithms used for empirical evaluation. \Cref{alg:lora} outlines the \texttt{P-ShareLoRA} algorithm with a warm-up phase. Notably, by setting the number of warm-up epochs \(T_w = 0\), \Cref{alg:lora} reduces to the vanilla \texttt{P-ShareLoRA} algorithm. Conversely, setting \(T_w = T_{\text{Global}}\) transforms the algorithm into \texttt{global-P-ShareLoRA}. We define the per-sample function as \(f(\Thetav; o, \tau_{0}, \tau_{1}) := \log P_{\Thetav}(o \mid \tau_{0}, \tau_{1})\).
 \begin{algorithm}[ht]
	\caption{\texttt{P-ShareLoRA} for RLHF (with warm-up)\label{alg:lora}}
	\begin{algorithmic}
	 \STATE \textbf{Input:} Pre-trained model parameters \( W \); Human preference dataset \(\hat{\Dc}\); Rank \( r \); Scheduled learning rate \( \eta^t \); Number of warm-up epochs \( T_w \); Number of epochs \( T \).
	 \STATE \textbf{Initialize:} Low-rank matrices \( A \in \mathbb{R}^{d \times r} \), \( B \in \mathbb{R}^{r \times d} \), \(B_i\in\mathbb{R}^{r \times d}\quad\forall i\in[N]\) (e.g., randomly or zeros).
	 \STATE \textbf{Freeze} pre-trained weights \( W \).
      \STATE \rp{\textbf{Warm-up phase:}}
	 \FOR{each epoch \( t = 1 \) to \( T_w \)}
	 	\FOR{each \( \{o_{i}^{(j)}, \tau_{i,0}^{(j)}, \tau_{i,1}^{(j)}\} \in \hat{\Dc} \)}
                    \STATE Compute \(f(W+A^tB^t; o_{i}^{(j)}, \tau_{i,0}^{(j)}, \tau_{i,1}^{(j)})\).
                    \STATE Update \( A^{t+1} \leftarrow A^{t} - \eta^t \nabla_{A^t} f \).
	 		\STATE Update \( B^{t+1} \leftarrow B^{t} - \eta^t \nabla_{B^t} f \).
	 	\ENDFOR
	 \ENDFOR
      \STATE \rp{\textbf{Running }\texttt{P-ShareLoRA}:}
      \STATE \textbf{Set} \(A^1 \leftarrow A^{T_w}\), \(B_i^1\leftarrow B^{T_w}\quad \forall i\in[N]\).
      \FOR{each epoch \( t = 1 \) to \( T \)}
	 	\FOR{each random sampled \( \{o_{i}^{(j)}, \tau_{i,0}^{(j)}, \tau_{i,1}^{(j)}\} \in \hat{\Dc} \)}
                    \STATE Compute \(f(W+A^tB_i^t; o_{i}^{(j)}, \tau_{i,0}^{(j)}, \tau_{i,1}^{(j)})\).
                    \STATE Update \( A^{t+1} \leftarrow A^{t} - \eta^t \nabla_{A^t} f \).
	 		\STATE Update \( B_i^{t+1} \leftarrow B_i^{t} - \eta^t \nabla_{B_i^t} f \).
	 	\ENDFOR
	 \ENDFOR
      \STATE \rp{\textbf{Policy optimization by} \texttt{PPO-Clip}~\citep{schulman2017proximal}:}
\STATE \textbf{Initialize} policy parameters for each agent \( \theta_i\), \( \forall i \in [N] \).
\FOR{each PPO iteration \( k = 1 \) to \( K \)}
    \FOR{each agent \( i = 1 \) to \( N \) \textbf{in parallel}}
        \STATE Collect a set of trajectories \( \Dc_i \) by running policy \( \pi_{\theta_i^t} \).
        \STATE Compute rewards \( r_t^{(i)} \) and advantage estimates \( \hat{A}_t^{(i)} \) using GAE.
        \STATE Compute the PPO surrogate loss:
        \[
        \Lc^{\text{CLIP}}_i(\theta_i) = \mathbb{E}_t \left[ \min\left( \rho_t^{(i)}(\theta^t_i) \hat{A}_t^{(i)}, \operatorname{clip}\left( \rho_t^{(i)}(\theta^t_i), 1 - \epsilon, 1 + \epsilon \right) \hat{A}_t^{(i)} \right) \right],
        \]
        where \( \rho_t^{(i)}(\theta_i) = \dfrac{\pi_{\theta_i}(a_t^{(i)} | s_t^{(i)})}{\pi_{\theta_i^\text{old}}(a_t^{(i)} | s_t^{(i)})} \).
        \STATE Update \(\theta_i^{t+1}\leftarrow\theta_i^t-\eta^t\nabla_{\theta}\Lc^{\text{CLIP}}_i\).
    \ENDFOR
\ENDFOR
\STATE \textbf{Output:} Fine-tuned model parameters for each reward model \( A^T, \{B_i^T\}_{i=1}^N \); Fine-tuned model parameters for each local policy \(\{\theta_i^K\}_{i=1}^N\).
	\end{algorithmic}
\end{algorithm}

We also detail the baseline algorithms \texttt{LoRA-global} and \texttt{LoRA-local} in \Cref{alg:lora-global} and \Cref{alg:lora-local} for comparison.
 \begin{algorithm}[ht]
	\caption{Baseline algorithm 1: \texttt{LoRA-global}\label{alg:lora-global}}
	\begin{algorithmic}
	 \STATE \textbf{Input:} Pre-trained model parameters \( W \); Human preference dataset \(\hat{\Dc}\); Rank \( r \); Learning rate \( \eta \); Number of epochs \( T_{\text{Global}} \).
	 \STATE \textbf{Initialize:} Low-rank matrices \( A \in \mathbb{R}^{d \times r} \), \( B \in \mathbb{R}^{r \times d} \) (e.g., randomly or zeros).
	 \STATE \textbf{Freeze} pre-trained weights \( W \).
	 \FOR{each epoch \( t = 1 \) to \( T_{\text{Global}} \)}
	 	\FOR{each random sampled \( \{o_{i}^{(j)}, \tau_{i,0}^{(j)}, \tau_{i,1}^{(j)}\} \in \hat{\Dc} \)}
                    \STATE Compute \(f(W+A^tB^t; o_{i}^{(j)}, \tau_{i,0}^{(j)}, \tau_{i,1}^{(j)})\).
                    \STATE Update \( A^{t+1} \leftarrow A^{t} - \eta \nabla_{A^t} f \).
	 		\STATE Update \( B^{t+1} \leftarrow B^{t} - \eta \nabla_{B^t} f \).
	 	\ENDFOR
	 \ENDFOR
  \STATE \textbf{Output:} Fine-tuned model parameters for a global reward model \( A^{T_{\text{Global}}}, B^{T_{\text{Global}}} \).
	\end{algorithmic}
\end{algorithm}

 \begin{algorithm}[ht]
	\caption{Baseline algorithm 2: \texttt{LoRA-local}\label{alg:lora-local}}
	\begin{algorithmic}
	 \STATE \textbf{Input:} Pre-trained model parameters \( W \); Human preference dataset \(\hat{\Dc}\); Rank \( r \); Learning rate \( \eta \); Number of epochs \( T_{\text{local}} \).
	 \STATE \textbf{Initialize:} Low-rank matrices \( A_i \in \mathbb{R}^{d \times r} \), \( B_i \in \mathbb{R}^{r \times d}\quad \forall i\in[N] \) (e.g., randomly or zeros).
	 \STATE \textbf{Freeze} pre-trained weights \( W \).
      \FOR{each agent \( i = 1 \) to \( N \) \textbf{in parallel}}
	 \FOR{each epoch \( t = 1 \) to \( T_{\text{local}} \)}
	 	\FOR{each random sampled \( \{o_{i}^{(j)}, \tau_{i,0}^{(j)}, \tau_{i,1}^{(j)}\} \in \hat{\Dc}_i \)}
                    \STATE Compute \(f(W+A_i^tB_i^t; o_{i}^{(j)}, \tau_{i,0}^{(j)}, \tau_{i,1}^{(j)})\).
                    \STATE Update \( A_i^{t+1} \leftarrow A^{t} - \eta \nabla_{A^t} f \).
	 		\STATE Update \( B_i^{t+1} \leftarrow B^{t} - \eta \nabla_{B^t} f \).
	 	\ENDFOR
      \ENDFOR
	 \ENDFOR
  \STATE \textbf{Output:} Fine-tuned model parameters for each reward model \( \{A_i^{T_{\text{local}}}\}_{i=1}^N, \{B_i^{T_{\text{local}}}\}_{i=1}^N \).
	\end{algorithmic}
\end{algorithm}

\subsection{Implementation Details}\label{apx:impl}

\textbf{Hyperparamters.}
For all experiments conducted using both Vanilla LoRA (\texttt{LoRA-global} and \texttt{LoRA-local}) and \texttt{P-ShareLoRA} based algorithms, we employed a batch size of 128. The initial learning rate was set to $5 \cdot 10^{-5}$, with a linear scheduler applied to adjust the learning rate during training. For both GPT-J 6B and Llama3 8B models, the maximum token length was set to 2048. The rank \(k\) in all LoRA modules was fixed at 32, and the scaling factor $\alpha$ was set to 16. To simplify training, we applied LoRA only to the Q (query) and K (key) matrices for both models.

In the case of the \texttt{P-ShareLoRA(G)}, the initialization process was critical for ensuring effective fine-tuning. Specifically, the personalized $A$ matrices and the shared $B$ matrix were initialized using the $A$ and $B$ matrices obtained after two epochs of training with the \texttt{LoRA-global} method. Following this initialization, the PLAS model was fine-tuned for an additional epoch to refine the parameters further.

To maintain a fair comparison between \texttt{P-ShareLoRA(G)} and the other training methods, we adjusted the starting learning rate for \texttt{P-ShareLoRA(G)}. Given that a learning rate scheduler was used, the initial learning rate for PLAS-FT was set to one-third of the original learning rate, specifically $1.67 \cdot 10^{-5}$. This adjustment ensures that the fine-tuning process operates under comparable training dynamics as the baseline methods.

All experiments were implemented based on TRL\footnote{\href{https://huggingface.co/docs/trl/en/index}{https://huggingface.co/docs/trl/en/index}}, and additional hyperparameters were kept consistent across different methods.

\textbf{Computational Resources.} Our experiments were conducted using two NVIDIA A100 80GB GPUs. Training \texttt{P-LoRAShare(SI)} on a single GPU took around six hours, but this time could be reduced with multi-GPU training.

\subsection{Additional Experiment Results}\label{apx:D.3}

\textbf{Individual Labeler Performance.} In \Cref{sec:result}, we present the averaged preference estimation accuracy across all five labelers. In this section, we also provide the results of the separate estimation accuracy for each labeler in \Cref{fig:fig2}. We observe that our proposed methods, \texttt{P-ShareLoRA(SI)}, \texttt{P-ShareLoRA(G)}, and \texttt{P-ShareLoRA(WU)}, consistently outperform the baseline methods \texttt{LoRA-global} and \texttt{LoRA-local} for most of the labelers. Specifically, \texttt{P-ShareLoRA(WU)} achieves the highest accuracy for most labelers, peaking at 0.7803 for Labeler 1.  While \texttt{P-ShareLoRA(SI)} and \texttt{P-ShareLoRA(G)} also show significant improvements over the baseline methods for labelers 0,1 and 2.

\begin{figure}[h]
\centering
\includegraphics[width=0.65\textwidth]{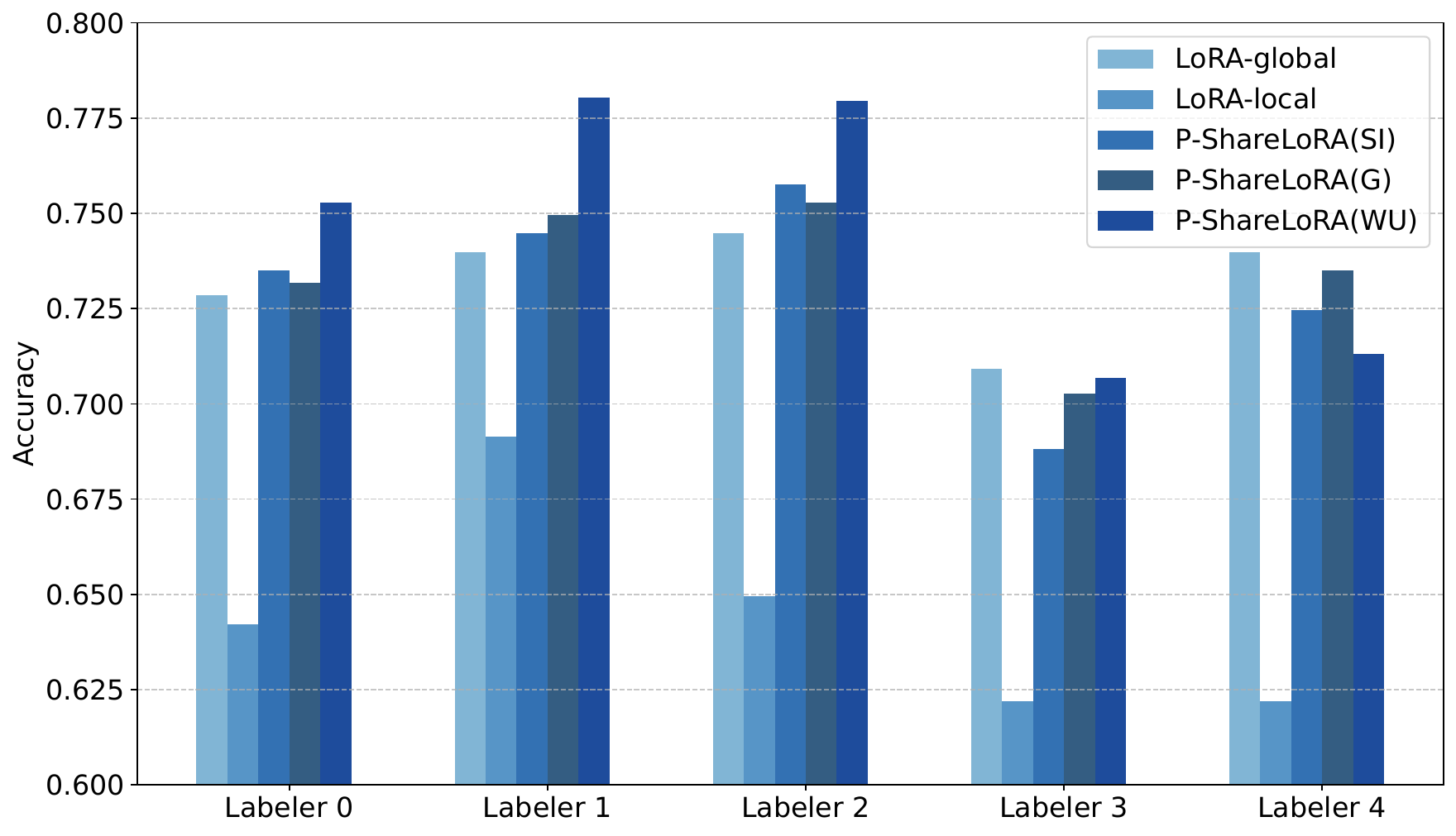}
\caption{Accuracies of Different Methods Across Labelers (Llama3 8B)}
\label{fig:fig2}
\end{figure}

\textbf{Share Down-projection VS Share Up-projection.}
Previous works \citep{tian2024hydralora,guo2024selective} have observed that the cosine similarity among down-projection matrices (\(A\) matrices) is significantly higher than that among up-projection matrices (\(B\) matrices). They interpret this as indicating that the down-projection matrices serve as a shared representation, mapping the input into a common representation space. Based on this observation, they introduce methods of sharing down-projection matrices among clients or experts. In contrast, our study finds that sharing the up-projection matrices (\(B\) matrices) yields better performance, as illustrated in Figure~\ref{fig:fig5}. Specifically, the approach of sharing \(B\) matrices consistently outperforms the method of sharing \(A\) matrices across all labelers and for both GPT-J 6B and Llama 3 8B models.

\begin{figure}[h]
\centering
\includegraphics[width=0.65\textwidth]{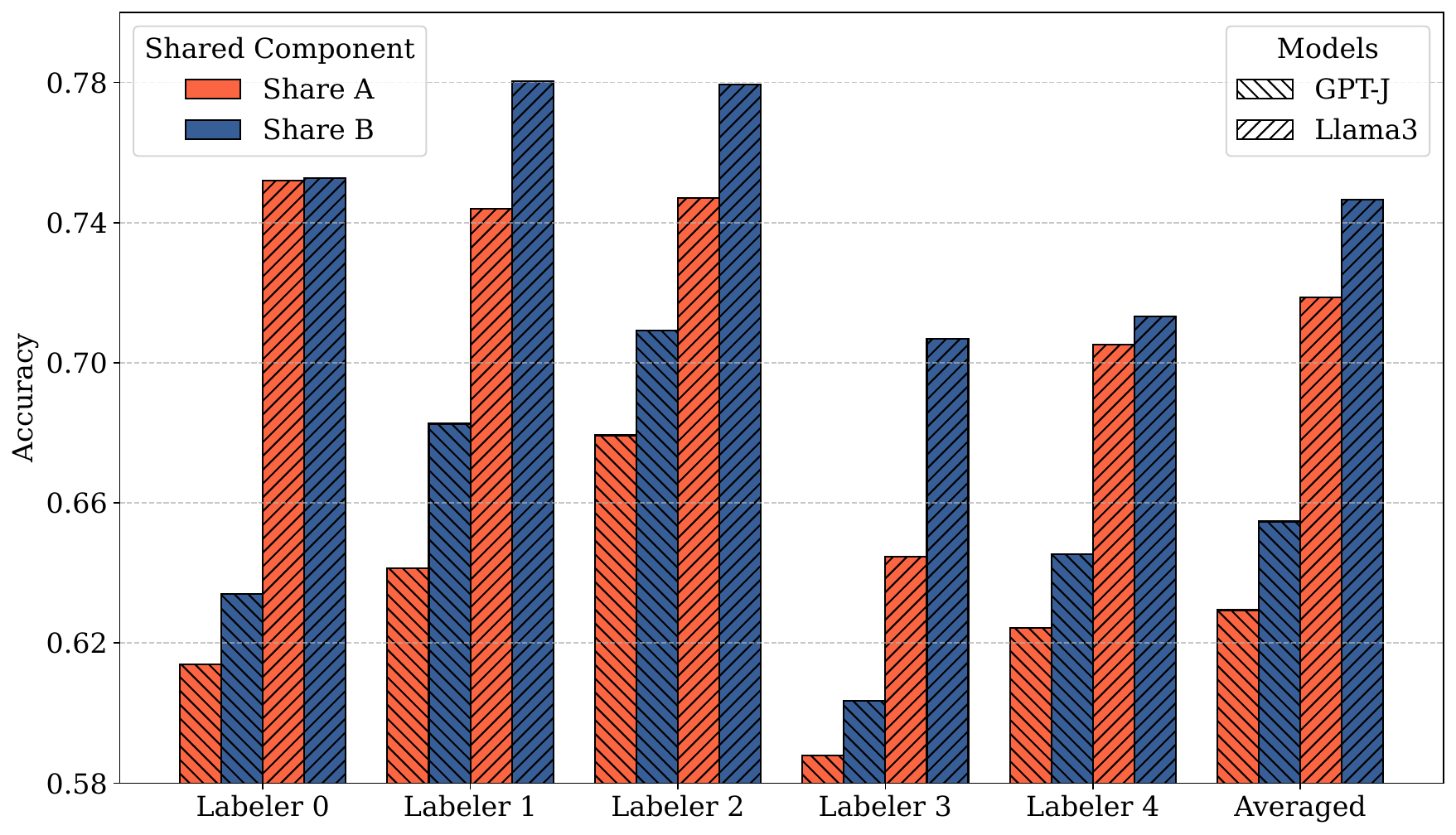}
\caption{Compare Accuracy between Share A and Share B}
\label{fig:fig5}
\end{figure}

\end{document}


%

%

\onecolumn
\aistatstitle{Instructions for Paper Submissions to AISTATS 2025: \\
Supplementary Materials}

\section{FORMATTING INSTRUCTIONS}

To prepare a supplementary pdf file, we ask the authors to use \texttt{aistats2025.sty} as a style file and to follow the same formatting instructions as in the main paper.
The only difference is that the supplementary material must be in a \emph{single-column} format.
You can use \texttt{supplement.tex} in our starter pack as a starting point, or append the supplementary content to the main paper and split the final PDF into two separate files.

Note that reviewers are under no obligation to examine your supplementary material.

\section{MISSING PROOFS}

The supplementary materials may contain detailed proofs of the results that are missing in the main paper.

\subsection{Proof of Lemma 3}

\textit{In this section, we present the detailed proof of Lemma 3 and then [ ... ]}

\section{ADDITIONAL EXPERIMENTS}

If you have additional experimental results, you may include them in the supplementary materials.

\subsection{The Effect of Regularization Parameter}

\textit{Our algorithm depends on the regularization parameter $\lambda$. Figure 1 below illustrates the effect of this parameter on the performance of our algorithm. As we can see, [ ... ]}

\vfill